\newtheorem{theorem}{Theorem}
\newtheorem{lemma}{Lemma}
\newtheorem{proposition}{Proposition}
\newtheorem{definition}{Definition}
\newcommand{\squishlisttwo}{
 \begin{list}{$\bullet$}
  { \setlength{\itemsep}{1pt}
     \setlength{\parsep}{0pt}
    \setlength{\topsep}{0pt}
    \setlength{\partopsep}{0pt}
    \setlength{\leftmargin}{1em}
    \setlength{\labelwidth}{1.5em}
    \setlength{\labelsep}{0.5em} } }
\newcommand{\squishend}{
  \end{list}  }
\let\svthefootnote\thefootnote
\newcommand\blankfootnote[1]{%
  \let\thefootnote\relax\footnotetext{#1}%
  \let\thefootnote\svthefootnote%
}
\title{Sample-Then-Optimize Batch Neural \\
Thompson Sampling}
\author{%
  Zhongxiang Dai$^{\dagger}$, Yao Shu$^{\dagger}$, Bryan Kian Hsiang Low$^{\dagger}$, Patrick Jaillet$^{\S}$\\
  Dept. of Computer Science, National University of Singapore, Republic of Singapore$^{\dagger}$\\
  Dept. of Electrical Engineering and Computer Science, MIT, USA$^{\S}$\\
  \texttt{\{daizhongxiang,shuyao,lowkh\}@comp.nus.edu.sg$^{\dagger}$,jaillet@mit.edu$^{\S}$}\\
}
\begin{document}

\maketitle

\begin{abstract}
\emph{Bayesian optimization} (BO), which uses a \emph{Gaussian process} (GP) as a surrogate to model its objective function, is popular for black-box optimization. However, due to the limitations of GPs, BO underperforms in some problems such as those with categorical, high-dimensional or image inputs. To this end, recent works have used the highly expressive \emph{neural networks} (NNs) as the surrogate model and derived theoretical guarantees using the theory of \emph{neural tangent kernel} (NTK). However, these works suffer from the limitations of the requirement to invert an extremely large parameter matrix and the restriction to the sequential (rather than batch) setting. To overcome these limitations, we introduce two algorithms based on the \emph{Thompson sampling} (TS) policy named \emph{Sample-Then-Optimize Batch Neural TS} (STO-BNTS) and STO-BNTS-Linear. To choose an input query, we only need to train an NN (resp.~a linear model) and then choose the query by maximizing the trained NN (resp.~linear model), which is equivalently sampled from the GP posterior with the NTK as the kernel function. 
As a result, our algorithms sidestep the need to invert the large parameter matrix yet still preserve the validity of the TS policy. Next, we derive regret upper bounds for our algorithms with batch evaluations, and use insights from batch BO and NTK to show that they are \emph{asymptotically no-regret} under certain conditions. Finally, we verify their empirical effectiveness using practical AutoML and reinforcement learning experiments.
\end{abstract}

\section{Introduction}
\label{sec:introduction}
\blankfootnote{Correspondence to: Yao Shu <shuyao@comp.nus.edu.sg>.}
\emph{Bayesian optimization} (BO), also called \emph{Gaussian process} (GP) bandits,
has become a celebrated method for optimizing expensive-to-compute black-box functions, primarily thanks to its practical sample efficiency and theoretically guaranteed convergence~\cite{dai2020federated,dai2019,frazier2018tutorial,shahriari2016taking}.
However, there are important problem settings where BO either underperforms or is not even applicable without sophisticated modifications, such as problems with categorical~\cite{deshwal2021bayesian}, high-dimensional~\cite{kandasamy2015high}, or images inputs.
These issues have arisen mainly because GPs (i.e., the surrogate used by BO to model the objective function) are not able to effectively model these types of input space, which therefore calls for the use of alternative surrogate models in BO.
To this end, \emph{neural networks} (NNs) serve as a natural candidate owing to their remarkable expressivity~\cite{lecun2015deep}.
NNs have repeatedly proven their ability to model extremely complicated real-world functions such as those involving categorical, high-dimensional or image inputs, whereas the development of GPs to effectively model these functions still represents active areas of research.
In this regard,~\cite{zhou2020neural} have adopted NNs as the surrogate model in contextual bandit problems and employed the theoretical framework of \emph{neural tangent kernel} (NTK)~\cite{jacot2018neural} to construct a principled algorithm following the well-known policy of \emph{upper confidence bound} (UCB), hence introducing the \emph{Neural UCB} algorithm. 
More recently,~\cite{zhang2020neural} have extended Neural UCB to follow the \emph{Thompson sampling} (TS) policy and proposed the \emph{Neural TS} algorithm. 
Both Neural UCB and Neural TS are equipped with upper bounds on their cumulative regret and perform competitively in real-world contextual bandit experiments.

However, Neural UCB and Neural TS are still faced with several important limitations that may hinder their practical applications.
Firstly, these algorithms suffer from the
requirement to invert a $p\times p$ matrix in every iteration, in which $p$ is the number of parameters of the NN surrogate and is usually extremely large since the theory of NTK requires severe overparameterization.
In practice, a diagonal approximation is used to avoid the need to invert such a large $p\times p$ matrix~\cite{zhang2020neural,zhou2020neural}, which introduces approximation errors in their practical deployment that are unaccounted for in the theoretical analysis, hence causing a disparity between theory and practice.
Secondly, the theoretical analyses of these algorithms are restricted to the sequential setting and are hence not applicable in the batch setting where an entire batch of inputs are selected for querying.\footnote{The work of~\cite{gu2021batched} 
focuses on the \emph{contextual bandit} setting and aims to choose a batch of inputs \emph{given a batch of diverse contexts}. So, their methods are not applicable in our setting of BO where the contexts are fixed in all iterations, since they do not explicitly encourage input diversity which is a crucial problem in batch BO~\cite{contal2013parallel,desautels2014parallelizing}.}
To overcome these two limitations,
we introduce two algorithms based on the TS policy named \emph{Sample-Then-Optimize Batch Neural Thompson Sampling} (STO-BNTS) and STO-BNTS-Linear, 
both of which \emph{(1)} sidestep the need to invert the $p\times p$ matrix and hence close the gap between theory and practice, and \emph{(2)} naturally support batch evaluations while preserving the theoretical guarantees.


To avoid the inversion of the $p\times p$ matrix 
while still ensuring the validity of the TS policy, we draw inspirations from sample-then-optimize optimization~\cite{matthews2017sample} and Bayesian deep ensembles~\cite{he2020bayesian} to efficiently sample functions from the GP posterior (with the NTK as the kernel function) without inverting the $p\times p$ matrix.
Specifically, to choose an input query within a batch, our STO-BNTS (resp.~STO-BNTS-Linear) only needs to use the current observation history to train an \emph{NN surrogate} (resp.~a linear model defined w.r.t.~the random features embedding of the NTK associated with an \emph{NN surrogate}) using randomly initialized parameters, and then choose the input that maximizes the trained NN (resp.~trained linear model).
As a result, if the NN surrogate has an infinite width, the function of the trained NN (resp.~trained linear model) is equivalently sampled from the GP posterior with the NTK as the kernel function.
This ensures that our algorithms follow the TS policy and hence lays the foundation for our theoretical analyses.
Next, to address the second challenge of deriving theoretical guarantees for our algorithms in the batch setting, we generalize the theoretical analysis of sequential TS~\cite{chowdhury2017kernelized} to account for batch evaluations and derive a regret upper bound for both of our algorithms when the NN surrogate is infinite-width.
Then, we leverage insights from batch BO~\cite{desautels2014parallelizing} and NTK~\cite{kassraie2021neural}  to show that the regret upper bound is sub-linear (under some conditions), which implies that our algorithms are \emph{asymptotically no-regret}.
Next, when the NN surrogate is \emph{finite-width}, we derive a regret upper bound for our STO-BNTS-Linear by carefully accounting for the approximation error caused by the use of a finite (instead of infinite) NN, and show that 
the regret upper bound of STO-BNTS-Linear remains sub-linear
as long as the NN is wide enough.

Our contributions are summarized as follows:
\squishlisttwo
    \item Our STO-BNTS and STO-BNTS-Linear algorithms sidestep the 
    inversion of the $p\times p$ matrix required by Neural UCB and Neural TS, which closes their gap between theory and practice.
    \item Our algorithms naturally support batch evaluations with theoretical guarantees.
    \item Our algorithms are equipped with an upper bound on their cumulative regret when the NN surrogate is infinite-width, and are asymptotically no-regret (i.e., their regret upper bound is sub-linear) under certain conditions. 
    Moreover, when the NN surrogate is finite-width, our STO-BNTS-Linear still enjoys a regret upper bound, which remains sub-linear as long as the NN is wide enough.
    \item We
    demonstrate our empirical effectiveness 
    in real-world experiments including \emph{automated machine learning} (AutoML) and \emph{reinforcement learning} (RL) tasks, as well as a task on optimization over images. To the best of our knowledge, our experiments (Sec.~\ref{sec:exp}) are the first empirical study 
    to show the advantage of neural bandit over GP bandit algorithms in practical AutoML and RL tasks.
\squishend

\vspace{-1.5mm}
\section{Background}
\label{sec:background}
\vspace{-1.5mm}
\textbf{Neural Networks and Neural Tangent Kernel.}
In this work, we adopt the same construction of a neural network (NN) as~\cite{arora2019exact}.
We use $f(\mathbf{x}; \theta)$ to denote the scalar output of an $(L+1)$-layer 
NN
with parameters $\theta \in \mathbb{R}^p$ and input $\mathbf{x}$, and use $\nabla_{\theta}f(\mathbf{x}; \theta')$ to represent the 
gradient of the NN evaluated at $\theta=\theta'$.
For simplicity, we assume every layer of the NN has the same width and represent the width as $m$.
We denote our initializaiton scheme for $\theta$ as $\text{init}(\cdot)$ which simply independently samples every NN parameter from the standard Gaussian distribution.
The NTK~\cite{jacot2018neural}
provides an explicit connection between NNs trained via gradient descent and kernel regression using NTK as the kernel function~\cite{arora2019exact}. 
The NTK matrix, denoted as $\Theta$, has been shown to stay constant during the course of training as the width $m$ of the NN approaches infinity~\cite{jacot2018neural}.
Moreover, $\Theta$ can be approximated by an \emph{empirical NTK} $\widetilde{\Theta}$~\cite{arora2019exact} calculated using a finite-width NN:
$\widetilde{\Theta}(\mathbf{x},\mathbf{x}') \triangleq \langle \nabla_{\theta}f(\mathbf{x};\theta_0), \nabla_{\theta}f(\mathbf{x}';\theta_0) \rangle \approx \Theta(\mathbf{x}, \mathbf{x}')$, where $\theta_0\sim\text{init}(\cdot)$ denotes initial parameters and $\nabla_{\theta}f(\mathbf{x};\theta_0)$ is referred to as the \emph{neural tangent features}~\cite{zhang2020neural}.
We refer the readers to the works of~\cite{arora2019exact,jacot2018neural} for a more detailed background on NTK.

\textbf{Problem Setting.}
We aim to maximize a black-box function $f: \mathcal{X} \rightarrow \mathbb{R}$, i.e., find $\mathbf{x}^* \in {\arg\max}_{\mathbf{x}\in\mathcal{X}}f(\mathbf{x})$, in which the domain $\mathcal{X}$ is a finite subset of the $d$-dimensional unit ball: $\mathcal{X}\subset\{\mathbf{x} | \norm{\mathbf{x}}_2\leq1\}$.
Of note, our theoretical results allow $\mathcal{X}$ to be very large because our regret upper bounds only depend on its cardinality $|\mathcal{X}|$ logarithmically.
Moreover, all our theoretical results can be easily extended to problems with continuous input domains with an additional assumption on the Lipschitz continuity of $f$ (Appendix \ref{app:sec:extension:continuous:domain}).
We focus on the noisy setting, i.e., for every queried $\mathbf{x}$, we observe a noisy output $y(\mathbf{x})=f(\mathbf{x})+\zeta$ where $\zeta \sim \mathcal{N}(0,\sigma^2)$.
For simplicity, we focus on the setting of \emph{synchronous} batch BO with a batch size $B$
where a new batch of $B$ inputs are selected only after all evaluations of the previous batch are completed~\cite{desautels2014parallelizing}.
However, our theoretical results also hold for asynchronous batch BO where a new input query is selected once any pending query is completed (Appendix~\ref{app:sec:infinite:width}).
We denote the $i^{\text{th}}$ selected input in iteration $t$ as $\mathbf{x}^i_t$.
We analyze the \emph{cumulative regret} of our algorithms: $R_T=\sum^{T/B}_{t=1}\sum^{B}_{i=1}(f(\mathbf{x}^*)-f(\mathbf{x}^i_t))$,
because if $R_T$ is shown to be sub-linear in $T$, then the simple regret $S_T=\min_{t,i}(f(\mathbf{x}^*)-f(\mathbf{x}^i_t)) \leq R_T/T$ goes to $0$ asymptotically, which implies that our algorithm is \emph{asymptotically no-regret}.

\vspace{-1mm}
\section{Sample-Then-Optimize Batch Neural Thompson Sampling}
\label{sec:algo}
\vspace{-1mm}
Our STO-BNTS and STO-BNTS-Linear algorithms are presented in Algos.~\ref{algo:1} and~\ref{algo:2}.
In both algorithms, the NN surrogate $f(\mathbf{x};\theta)$ can be either infinite-width or finite-width.
Both STO-BNTS and STO-BNTS-Linear follow the TS policy to 
select an input query $\mathbf{x}^i_t$: They firstly (\emph{a}) obtain a function $f^i_t(\mathbf{x};\theta^i_t)$ which is equivalently sampled from the GP posterior with the NTK as the kernel: $\mathcal{GP}(\mu_{t-1}(\cdot),\beta_t^2\sigma^2_{t-1}(\cdot,\cdot))$~\cite{he2020bayesian} 
(see Appendix~\ref{app:sec:justification:gp:sampling} for details), and then (\emph{b}) maximize the function to select the next query: $\mathbf{x}^i_t = {\arg\max}_{\mathbf{x}\in\mathcal{X}} f^i_t(\mathbf{x};\theta^i_t)$.
Step (\emph{a}) is achieved via the sample-then-optimize procedure, i.e., by firstly \emph{sampling} initial parameters ($\theta_0$ and $\theta_0'$) to construct a function $f^i_t(\mathbf{x};\theta)$, and then \emph{optimizing} the function using gradient descent 
to obtain the resulting function of $f^i_t(\mathbf{x};\theta^i_t)$.



\textbf{STO-BNTS (Algo.~\ref{algo:1}).}
In every iteration $t$ of STO-BNTS, we firstly construct an NN $f(\mathbf{x};\theta)$ and multiply its output by $\beta_t = 2\log(\pi^2t^2|\mathcal{X}|/(3\delta))$, in which 
$\delta\in(0,1)$ (Theorem~\ref{theorem:regret:exact:ntk}).\footnote{Note that $\beta_t$ is introduced only for the theoretical analysis, and hence we set $\beta_t=1$ in our experiments.}
Next, to choose the $i^{\text{th}}$ query $\mathbf{x}^i_t$, we start by sampling initial parameters $\theta_0\sim\text{init}(\cdot)$ and $\theta_0'\sim\text{init}(\cdot)$ independently, and then set the parameters of $\theta_0'$ in the last layer to $0$ (lines 4-5). Next, we use the resulting $\theta_0$ and $\theta_0'$, as well as the 
NN $f(\mathbf{x},\theta)$, to construct a function $f^i_t(\mathbf{x},\theta)$ (line 6). 
Subsequently, in line 7, using the current history of observations (denoted as $\mathcal{D}_{t-1}$) as the training set, we train $f^i_t(\mathbf{x},\theta)$ (setting $\theta_0$ as the initial parameters) using gradient descent with the following loss function:
\begin{equation}
\mathcal{L}_t(\theta,\mathcal{D}_{t-1}) = \sum\nolimits^{t-1}_{\tau=1}\sum\nolimits^B_{j=1}(y^j_{\tau} - f^i_t(\mathbf{x}^j_{\tau};\theta))^2 + \beta_t^2\sigma^2\norm{\theta - \theta_0}^2_2,
\label{eq:loss:function}
\end{equation}
in which $\sigma^2$ is the observation noise variance (Sec.~\ref{sec:background}). 
After the training, we use the resulting function $f^i_t(\mathbf{x};\theta^i_t)$ as the \emph{acquisition function} to choose the $i^{\text{th}}$ query: $\mathbf{x}^i_t = {\arg\max}_{\mathbf{x}\in\mathcal{X}} f^i_t(\mathbf{x};\theta^i_t)$ (line 8). This procedure (lines 4-8) is repeated independently for $B\geq1$ times, after which a batch of $B$ queries $\{\mathbf{x}^i_t\}_{i=1,\ldots,B}$ are selected and then queried to produce the observations $\{y^i_{t}\}_{i=1,\ldots,B}$. 
Next, the newly collected input-output pairs 
$\{(\mathbf{x}^i_t, y^i_{t})\}_{i=1,\ldots,B}$
are added to $\mathcal{D}_{t-1}$ and the algorithm proceeds to the next iteration.
Importantly, if the NN $f(\mathbf{x};\theta)$ is infinite-width, the function $f^i_t(\mathbf{x};\theta^i_t)$ obtained after the 
training in line 7 is \emph{a sample from the GP posterior with the NTK as the kernel}: 
$\mathcal{GP}(\mu_{t-1}(\cdot),\beta_t^2\sigma^2_{t-1}(\cdot,\cdot))$~\cite{he2020bayesian}
(Appendix~\ref{app:sec:justification:gp:sampling}). 
This ensures the validity of the TS policy and is crucial for deriving the theoretical guarantee of STO-BNTS (Sec.~\ref{sec:theoretical:results}).

\setlength{\textfloatsep}{0.5cm}
\setlength{\floatsep}{0.10cm}
\begin{algorithm}[t]
\begin{algorithmic}[1]
	\FOR{$t=1,2,\ldots, T/B$}
            \STATE Construct NN $f(\mathbf{x};\theta)$ and multiply its output by $\beta_t$ (Theorem~\ref{theorem:regret:exact:ntk})
	    \FOR{$i=1,2,\ldots, B$}
            \STATE Sample $\theta_0 \sim \text{init}(\cdot)$ 
            \STATE Sample $\theta_0' \sim \text{init}(\cdot)$ and set the parameters of $\theta_0'$ in the last layer to $0$
            \STATE Set $f^i_t(\mathbf{x};\theta) = f(\mathbf{x};\theta) + \langle\nabla_{\theta}f(\mathbf{x};\theta_0), \theta_0'\rangle$
            \STATE Use observation history $\mathcal{D}_{t-1}$ to train $f^i_t(\mathbf{x};\theta)$
            with the loss function $\mathcal{L}_t(\theta,\mathcal{D}_{t-1})$~\eqref{eq:loss:function} (setting $\theta_0$ as the initial parameters) using gradient descent till convergence, to obtain $\theta^i_t={\arg\min}_{\theta}\mathcal{L}_t(\theta,\mathcal{D}_{t-1})$
            \STATE Choose $\mathbf{x}^i_t = {\arg\max}_{\mathbf{x}\in\mathcal{X}} f^i_t(\mathbf{x};\theta^i_t)$
        \ENDFOR
        \STATE Query the batch $\{\mathbf{x}^i_t\}_{i=1,\ldots,B}$ to yield the observations $\{y^i_t\}_{i=1,\ldots,B}$, and add them to $\mathcal{D}_{t-1}$
    \ENDFOR
\end{algorithmic}
\caption{STO-BNTS}
\label{algo:1}
\end{algorithm}

\begin{algorithm}[t]
\begin{algorithmic}[1]
	\FOR{$t=1,2,\ldots, T/B$}
            \STATE Construct NN $f(\mathbf{x};\theta)$ and multiply its output by $\beta_t$ (Theorem~\ref{theorem:regret:exact:ntk})
	    \FOR{$i=1,2,\ldots, B$}
            \STATE Sample $\theta_0' \sim \text{init}(\cdot)$, and define $f^{i}_t(\mathbf{x};\theta) = \langle \nabla_{\theta}f(\mathbf{x};\theta_0'), \theta \rangle$ \STATE Sample $\theta_0 \sim \text{init}(\cdot)$ 
            \STATE Use observation history $\mathcal{D}_{t-1}$ to train $f^{i}_t(\mathbf{x};\theta)$ with the loss function $\mathcal{L}_t(\theta,\mathcal{D}_{t-1})$~\eqref{eq:loss:function} (setting $\theta_0$ as the initial parameters)
            using gradient descent till convergence, to obtain $\theta^i_t={\arg\min}_{\theta}\mathcal{L}_t(\theta,\mathcal{D}_{t-1})$
            \STATE Choose the query $\mathbf{x}^i_t = {\arg\max}_{\mathbf{x}\in\mathcal{X}} f^{i}_t(\mathbf{x};\theta^i_t)$
        \ENDFOR
        \STATE Query the batch $\{\mathbf{x}^i_t\}_{i=1,\ldots,B}$ to yield the observations $\{y^i_t\}_{i=1,\ldots,B}$, and add them to $\mathcal{D}_{t-1}$
    \ENDFOR
\end{algorithmic}
\caption{STO-BNTS-Linear}
\label{algo:2}
\end{algorithm}


\textbf{STO-BNTS-Linear (Algo.~\ref{algo:2}).}
Similar to STO-BNTS, at the beginning of iteration $t$, STO-BNTS-Linear firstly constructs an NN $f(\mathbf{x};\theta)$ and multiply its output by $\beta_t$.
To choose the $i^{\text{th}}$ query in iteration $t$, STO-BNTS-Linear firstly constructs a linear model $f^i_t(\mathbf{x};\theta)$ using the neural tangent features (i.e., $\nabla_{\theta}f(\mathbf{x};\theta_0')$ where $\theta_0' \sim \text{init}(\cdot)$ are initial parameters) as the input features (line 4). Next, we sample $\theta_0 \sim \text{init}(\cdot)$ (line 5) and use it as the initial parameters to train $f^i_t(\mathbf{x};\theta)$ using gradient descent with the same loss function~\eqref{eq:loss:function} as STO-BNTS, to produce $\theta^i_t$ (line 6). 
After that, the $i^{\text{th}}$ query is selected by maximizing the acquisition function $f^i_t(\mathbf{x};\theta^i_t)$: $\mathbf{x}^i_t = {\arg\max}_{\mathbf{x}\in\mathcal{X}} f^i_t(\mathbf{x};\theta^i_t)$ (line $7$).

Lines 4-6 of STO-BNTS-Linear
can be interpreted as a sample-then-optimize method~\cite{matthews2017sample} using the neural tangent features as the input features. As a result, same as STO-BNTS, if an infinite-width NN is used, the function $f^i_t(\mathbf{x};\theta^i_t)$ obtained after the 
training in line 6 is a sample from the GP posterior with the NTK as the kernel: $\mathcal{GP}(\mu_{t-1}(\cdot),\beta_t^2\sigma^2_{t-1}(\cdot,\cdot))$ 
(Appendix~\ref{app:sec:justification:gp:sampling}). However, in contrast to STO-BNTS, if the NN is \emph{finite-width}, the function $f^i_t(\mathbf{x};\theta^i_t)$ derived after line 6 of Algo.~\ref{algo:2} still corresponds to a sample from the GP posterior with the \emph{empirical NTK} $\widetilde{\Theta}(\mathbf{x},\mathbf{x}')=\langle \nabla_{\theta}f(\mathbf{x};\theta_0'), \nabla_{\theta}f(\mathbf{x}';\theta_0') \rangle$ as the kernel. 
As a result, for infinite-width NNs, STO-BNTS-Linear and STO-BNTS enjoy the same sub-linear (under some conditions) upper bound on their cumulative regret (Sec.~\ref{subsec:theory:infinite}); however, for finite-width NNs, unlike STO-BNTS, STO-BNTS-Linear still enjoys a regret upper bound which is sub-linear as long as 
the NN is wide enough (Sec.~\ref{subsec:theory:finite}).

Although STO-BNTS-Linear (Algo.~\ref{algo:2}) has the theoretical advantage of a theoretically guaranteed convergence also for finite-width NNs (Sec.~\ref{subsec:theory:finite}), we expect STO-BNTS (Algo.~\ref{algo:1}) to perform better in practice.
This is because STO-BNTS explicitly trains an NN surrogate model in every iteration 
and is hence able to \emph{directly exploit the strong representation power of NNs} to model the objective function $f$. 
In contrast, STO-BNTS-Linear derives its representation power entirely from the neural tangent features of NTK.
However, it has been shown~\cite{allen2019can,zhou2020neural} that \emph{neural tangent features of NTK can not completely realize the representation power of NNs}. 
As a result, STO-BNTS is expected to be more competitive in practice, especially in problems where the strong representation power of NNs is essential for accurately modeling the complex objective functions.
We empirically verify this practical advantage of STO-BNTS in our experiments (Sec.~\ref{subsec:exp:discussion}).

\vspace{-2mm}
\section{Theoretical Results}
\label{sec:theoretical:results}
\vspace{-2mm}
We firstly prove a regret upper bound for both STO-BNTS and STO-BNTS-Linear 
using infinite-width NNs,
and show that both algorithms are asymptotically no-regret under certain conditions (Sec.~\ref{subsec:theory:infinite}).
Next, we derive a regret upper bound for STO-BNTS-Linear when the NN is finite-width, and show that 
the regret upper bound remains
sub-linear
as long as the NN is wide enough (Sec.~\ref{subsec:theory:finite}).



\vspace{-1.5mm}
\subsection{Infinite-width NNs}
\label{subsec:theory:infinite}
\vspace{-1.5mm}
Here we use $\mathbf{y}_{1:t}$ to denote the output observations from iterations $1$ to $t$ (i.e., $t\times B$ observations in total)
and use $\mathbf{y}_{A}$ to denote the vector of observations at a set of inputs $A\subset\mathcal{X}$.
Theorem~\ref{theorem:regret:exact:ntk} (proof in Appendix~\ref{app:sec:infinite:width}) gives a regret upper bound for both STO-BNTS and STO-BNTS-Linear:

\begin{theorem}[Infinite-width NNs]
\label{theorem:regret:exact:ntk}
Assume that $f$ is sampled from a GP with the NTK $\Theta$ as the kernel function, that $|f(\mathbf{x})| \leq B' \, \forall \mathbf{x}\in\mathcal{X}$ for some $B'>0$, and that $\Theta(\mathbf{x},\mathbf{x}') \leq K_0 \, \forall \mathbf{x},\mathbf{x}' \in \mathcal{X}$ for some $K_0>0$.
Let $\delta\in(0,1)$ and $\beta_t = 2\log(\pi^2t^2|\mathcal{X}|/(3\delta))$. Then with probability of $\geq 1-\delta$,
\vspace{-0.5mm}
\[
R_T = \widetilde{\mathcal{O}}(e^C \sqrt{T} (\sqrt{\gamma_T}+1))
\]
where $\widetilde{\mathcal{O}}$ ignores all logarithmic factors,
 $\gamma_T$ is the max information gain about $f$ from any set of $T$ observations,
and $C$ is an absolute constant 
s.t.~$\max_{A\subset\mathcal{X},|A|\leq B-1}\mathbb{I}(f;\mathbf{y}_{A} | \mathbf{y}_{1:t}) \leq C,\forall t\geq 1$.
\end{theorem}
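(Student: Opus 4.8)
The plan is to lift the frequentist-style analysis of sequential GP Thompson sampling~\cite{chowdhury2017kernelized} to the synchronous batch setting, using the ``initialisation via information gain'' device of batch GP bandits~\cite{desautels2014parallelizing} to absorb the within-batch staleness of the posterior. Fix the filtration $\mathcal{F}_{t-1}$ generated by the queries and observations of iterations $1,\dots,t-1$ (so $(t-1)B$ observations in total). Conditioned on $\mathcal{F}_{t-1}$, the posterior over $f$ is $\mathcal{GP}(\mu_{t-1},\sigma_{t-1}^2)$, and by the sample-then-optimize construction each acquisition function $f^i_t$ is an independent draw from $\mathcal{GP}(\mu_{t-1},\beta_t^2\sigma_{t-1}^2)$, with $\mathbf{x}^i_t = \arg\max_{\mathbf{x}\in\mathcal{X}}f^i_t(\mathbf{x})$.

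First I would set up the concentration events. A Gaussian tail bound together with a union bound over $\mathbf{x}\in\mathcal{X}$ and over $t\geq 1$ (the sum $\sum_t t^{-2}=\pi^2/6$ dictates the stated form of $\beta_t$) gives, with probability $\geq 1-\delta/2$, that $|f(\mathbf{x})-\mu_{t-1}(\mathbf{x})|\leq\sqrt{\beta_t}\,\sigma_{t-1}(\mathbf{x})$ for all $\mathbf{x},t$; a further union bound over the $B$ samples per batch gives, with probability $\geq 1-\delta/2$, that $|f^i_t(\mathbf{x})-\mu_{t-1}(\mathbf{x})|\leq b_t\,\sigma_{t-1}(\mathbf{x})$ for all $i,\mathbf{x},t$, where $b_t=\beta_t\sqrt{2\log(B|\mathcal{X}|\pi^2 t^2/(3\delta))}=\widetilde{\mathcal{O}}(1)$. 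Next comes anti-concentration: since $f^i_t(\mathbf{x}^*)-\mu_{t-1}(\mathbf{x}^*)\sim\mathcal{N}(0,\beta_t^2\sigma_{t-1}^2(\mathbf{x}^*))$ while on the good event $f(\mathbf{x}^*)\leq\mu_{t-1}(\mathbf{x}^*)+\sqrt{\beta_t}\,\sigma_{t-1}(\mathbf{x}^*)$, a single Gaussian lower-tail estimate together with $\beta_t\geq 1$ gives $\Pr(f^i_t(\mathbf{x}^*)\geq f(\mathbf{x}^*)\mid\mathcal{F}_{t-1})\geq p$ for an absolute constant $p>0$. Feeding the argmax property $f^i_t(\mathbf{x}^i_t)\geq f^i_t(\mathbf{x}^*)$, this optimism, and the two concentration events through the ``saturated set'' decomposition of~\cite{chowdhury2017kernelized} yields a per-step bound $\mathbb{E}[f(\mathbf{x}^*)-f(\mathbf{x}^i_t)\mid\mathcal{F}_{t-1}]\leq c\,(b_t/p)\,\mathbb{E}[\sigma_{t-1}(\mathbf{x}^i_t)\mid\mathcal{F}_{t-1}]+(\text{lower-order})$.

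The batch correction is the core step. Because the GP posterior variance is independent of the observed \emph{values}, the ``fantasised'' posterior standard deviation $\widetilde{\sigma}^{\,i}_t(\cdot)$ obtained by additionally conditioning on the already-selected within-batch inputs $A=\{\mathbf{x}^1_t,\dots,\mathbf{x}^{i-1}_t\}$ is well defined even though their outcomes are not yet available. The key lemma of~\cite{desautels2014parallelizing} then gives $\sigma_{t-1}(\mathbf{x})\leq\exp\!\big(\mathbb{I}(f;\mathbf{y}_A\mid\mathbf{y}_{1:t-1})\big)\,\widetilde{\sigma}^{\,i}_t(\mathbf{x})\leq e^{C}\,\widetilde{\sigma}^{\,i}_t(\mathbf{x})$ for all $\mathbf{x}$, using $|A|\leq B-1$ and the hypothesis $\max_{|A|\leq B-1}\mathbb{I}(f;\mathbf{y}_A\mid\mathbf{y}_{1:t})\leq C$ --- this is exactly where the factor $e^{C}$ in the statement enters. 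After this substitution the $\widetilde{\sigma}^{\,i}_t(\mathbf{x}^i_t)$, ordered over all $t$ and $i$, are precisely the posterior standard deviations along a single length-$T$ conditioning chain, so the standard telescoping bound gives $\sum_{t,i}(\widetilde{\sigma}^{\,i}_t(\mathbf{x}^i_t))^2=\mathcal{O}(\gamma_T)$ (the hidden constant depending on $\sigma^2$ and $K_0$), and Cauchy--Schwarz gives $\sum_{t,i}\widetilde{\sigma}^{\,i}_t(\mathbf{x}^i_t)=\mathcal{O}(\sqrt{T\gamma_T})$. Finally, a short chain of Azuma--Hoeffding arguments, with bounded differences controlled via $|f|\leq B'$ and $\Theta\leq K_0$, converts the conditional-expectation bounds into a high-probability bound on $R_T$ itself and supplies the additive $\sqrt{T}$ term; assembling everything gives $R_T=\widetilde{\mathcal{O}}(e^{C}\sqrt{T}(\sqrt{\gamma_T}+1))$ with probability $\geq 1-\delta$.

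I expect the main obstacle to be precisely this batch-correction step: one must simultaneously (i) justify replacing the stale $\sigma_{t-1}$ by the fantasised $\widetilde{\sigma}^{\,i}_t$ at only the multiplicative cost $e^{C}$ licensed by the conditional-information-gain assumption, and (ii) ensure the resulting quantities still telescope against $\gamma_T$ over the full horizon --- all while keeping compatibility with the saturated-set machinery of~\cite{chowdhury2017kernelized}, which was designed for a posterior that refreshes after every single query rather than once per batch.
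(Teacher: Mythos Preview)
The proposal is correct and takes essentially the same approach as the paper. The only cosmetic difference is that the paper reparameterises the double index $(t,i)$ into a single sequential index equipped with a ``last-feedback'' pointer $\text{fb}[t]$, so that your stale $\sigma_{t-1}$ becomes their $\sigma_{\text{fb}[t]}$ and your fantasised $\widetilde{\sigma}^{\,i}_t$ becomes their $\sigma_{t-1}$; with that dictionary in hand, the concentration/anti-concentration/saturated-set/$e^{C}$-correction/Azuma chain is identical, and the obstacle you single out is precisely the content of the paper's Lemma~\ref{lemma:ratio:sigma:C}.
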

The assumption of $\Theta(\mathbf{x},\mathbf{x}') \leq K_0$ is natural and in line with~\cite{kassraie2021neural}, and $|f(\mathbf{x})| \leq B'$ is also a mild assumption since most practical objective functions have bounded values.
Our primary assumption that $f$ is sampled from a GP with the NTK as the kernel function is a common assumption in the analysis of BO algorithms~\cite{desautels2014parallelizing,srinivas2009gaussian}.
Of note, compared with the assumptions from previous works using other commonly used kernels such as the squared exponential (SE) kernel~\cite{desautels2014parallelizing,srinivas2009gaussian},
our assumption on $f$ holds for more complicated objective functions.
Specifically, the class of functions sampled from a GP with the NTK as the kernel subsumes more non-smooth and sophisticated objective functions compared with the other commonly used kernels.
As shown in~\cite{desautels2014parallelizing}, $C$ can be chosen to be an absolute constant that is independent of $B$ and $T$ as long as we initialize our algorithm using the uncertainty sampling method, which entails choosing the initial inputs by sequentially maximizing the GP posterior variance (more details in Appendix~\ref{app:gp:posterior:variance:for:ntk}).
Specifically, for any chosen constant $C>0$, as long as we run the uncertainty sampling initialization stage for $T_{\text{init}}$ iterations such that $((B-1)\gamma_{T_{\text{init}}}) / T_{\text{init}} \leq C$, 
then 
$\max_{A\subset\mathcal{X},|A|\leq B-1}\mathbb{I}(f;\mathbf{y}_{A} | \mathbf{y}_{1:t}) \leq C,\forall t\geq 1$ 
is guaranteed to be satisfied. Since it has been shown by the work of~\cite{kassraie2021neural} that $\gamma_T = \widetilde{\mathcal{O}}(T^{(d-1)/d})$ grows sub-linearly for the NTK,\footnote{
Note that in order to quote the results from~\cite{kassraie2021neural}, we need follow their assumption to assume that the domain $\mathcal{X}$ is a subset of the $d$-dimensional unit hyper-sphere: $\mathcal{X}\subset\{\mathbf{x} | \norm{\mathbf{x}}_2=1\}$, which is more strict than our main assumption (Sec.~\ref{sec:background}) that $\mathcal{X}$ is a subset of the unit hyper-ball: $\mathcal{X}\subset\{\mathbf{x} | \norm{\mathbf{x}}_2\leq1\}$.
} therefore, $((B-1)\gamma_{T_{\text{init}}})/T_{\text{init}}$ is decreasing as $T_{\text{init}}$ increases.
As a result, for any chosen $C$, we are able to choose a finite $T_{\text{init}}$ such that the condition $((B-1)\gamma_{T_{\text{init}}}) / T_{\text{init}} \leq C$ is satisfied.


For example, if we choose $C=1$, then the required number of initial iterations is approximately $T_{\text{init}}=\Theta((B-1)^d)$. 
As a result, the regret upper bound will be a summation of $2B'T_{\text{init}}$ (i.e., the regrets incurred during the initializatioin stage, because the regret at every step is upper-bounded by $2B'$) and the regret upper bound from Theorem~\ref{theorem:regret:exact:ntk} with $C=1$.
Since both $B'$ and $T_{\text{init}}$ are constants independent of $T$ (assuming that $B$ is independent of $T$), 
the asymptotic regret upper bound can be simplified into $R_T = \widetilde{\mathcal{O}}(\sqrt{T}(1+\sqrt{\gamma_T}))$. 
Plugging in 
$\gamma_T = \widetilde{\mathcal{O}}(T^{(d-1)/d})$, the final regret upper bound becomes $R_T =\widetilde{\mathcal{O}}(T^{1/2}+T^{(2d-1)/(2d)})= \widetilde{\mathcal{O}}(T^{(2d-1)/(2d)})$ which is sub-linear in $T$ and hence implies that our STO-BNTS and STO-BNTS-Linear are both \emph{asymptotically no-regret} when the NN surrogate is infinite-width.

Moreover, when $T\gg B$ and $B$ is a constant which is independent of $T$, Theorem~\ref{theorem:regret:exact:ntk} gives us insights on the benefit of batch over sequential evaluations.
In this case, the regrets incurred during initialization (i.e., $\widetilde{\mathcal{O}}((B-1)^d)$) is negligible. 
Therefore, our analysis above suggests that our algorithms with batch evaluations ($B>1$) enjoys the same asymptotic regret upper bound as its sequential counterpart ($B=1$) since the resulting regret upper bound of $R_T = \widetilde{\mathcal{O}}(\sqrt{T}(1+\sqrt{\gamma_T}))$ does not depend on the batch size $B$.
This demonstrates the advantage of batch evaluations because when $B>1$, some of our evaluations can be run in parallel, which is not supported by the sequential setting with $B=1$.
As a simple illustration, for a large $T$, both the sequential and batch settings achieve a simple regret of the order $\widetilde{\mathcal{O}}(T^{-1/(2d)})$ after $T$ function evaluations. 
However, since our batch setting can evaluate every $B>1$ selected inputs in parallel in every iteration, our batch setting with $B>1$ achieves this simple regret after only $T/B$ iterations, which is only a fraction ($1/B$) of the $T$ iterations required by the sequential setting.
This also shows that we enjoy more benefit with a larger batch size $B$.


\subsection{Finite-width NNs}
\label{subsec:theory:finite}
Here we prove 
a regret upper bound for 
STO-BNTS-Linear 
with \emph{finite-width} NN.
The main technical challenge in the proof 
lies in the disparity
between the exact and empirical NTKs, i.e., the function $f$ is assumed to be sampled from a GP with the exact NTK $\Theta$ yet our acquisition function $f^i_t(\mathbf{x};\theta^i_t)$ (line 7 of Algo.~\ref{algo:2}) is obtained using the empirical NTK $\widetilde{\Theta}$ when the NN is finite-width.
To this end, we make use of the following theoretical guarantee on the approximation error between $\Theta$ and $\widetilde{\Theta}$.
\begin{proposition}[Theorem 3.1 of~\cite{arora2019exact}]
\label{prop:arora}
Choose $\varepsilon>0$ and $\delta\in(0,1)$. If the width of every layer in an NN satisfies $m = \Omega(\frac{L^6}{\varepsilon^4}\log(4L|\mathcal{X}|^2/\delta))$, 
then $\forall \mathbf{x},\mathbf{x}' \in \mathcal{X}$,
we have with probability $\geq 1-\delta/4$ that
\[
\left| \langle \nabla_{\theta}f(\mathbf{x},\theta_0), \nabla_{\theta}f(\mathbf{x}',\theta_0) \rangle - \Theta(\mathbf{x}, \mathbf{x}') \right| \leq (L+1) \varepsilon.
\]
\end{proposition}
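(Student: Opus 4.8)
The plan is to obtain the bound as a direct consequence of the finite-versus-infinite-width analysis of \cite{arora2019exact}: the statement is exactly their Theorem 3.1, so the proof reduces to checking that our NN construction, the standard-Gaussian initialization $\theta_0\sim\text{init}(\cdot)$, and our assumption that the inputs are bounded place us within their hypotheses, and then invoking their argument. First I would recall that both the exact NTK $\Theta$ and the empirical NTK $\widetilde{\Theta}(\mathbf{x},\mathbf{x}')=\langle\nabla_{\theta}f(\mathbf{x},\theta_0),\nabla_{\theta}f(\mathbf{x}',\theta_0)\rangle$ admit the same layerwise recursion: the layer-$\ell$ Gram quantity is produced from the layer-$(\ell-1)$ one by applying a (dual) activation map together with its derivative, the only difference being that in $\widetilde{\Theta}$ each population expectation $\mathbb{E}_{u\sim\mathcal{N}(0,1)}[\cdot]$ over a hidden unit is replaced by an empirical average over the $m$ units at that layer, i.e.\ over the random draw $\theta_0$.

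The heart of the argument is a layerwise induction over $\ell=1,\dots,L+1$. Conditioned on the inductively controlled Gram matrix at layer $\ell-1$, the layer-$\ell$ empirical quantity is an average of $m$ i.i.d.\ sub-exponential random variables whose mean equals the layer-$\ell$ exact quantity up to the error already accumulated; a Bernstein-type concentration inequality then bounds the fresh deviation by $O(\varepsilon/L)$, with failure probability exponentially small in $m\varepsilon^{2}/\mathrm{poly}(L)$. Summing the $O(\varepsilon/L)$ per-layer contributions across the $L+1$ layers yields the claimed $(L+1)\varepsilon$ bound, and the width requirement $m=\Omega\big(\tfrac{L^6}{\varepsilon^4}\log(4L|\mathcal{X}|^2/\delta)\big)$ is precisely what makes the union bound over the $L$ layers and the $|\mathcal{X}|^2$ pairs $(\mathbf{x},\mathbf{x}')$ succeed with probability at least $1-\delta/4$.

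The main obstacle — and the intuition behind why the width must scale like $L^6/\varepsilon^4$ rather than more gently — is the compounding of errors through depth: a small perturbation of the layer-$\ell$ Gram matrix gets amplified by the Lipschitz constants of the activation dual maps before it feeds into layer $\ell+1$, so the induction has to track the additive concentration error and this multiplicative amplification simultaneously, and set the per-layer target accuracy (hence $m$) small enough that the total stays below $(L+1)\varepsilon$. Since this compounding analysis is carried out in full in \cite{arora2019exact}, I would not reproduce it here but cite Theorem 3.1 of \cite{arora2019exact} directly, after noting that our setting matches theirs.
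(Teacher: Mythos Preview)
Your proposal is correct and matches the paper's treatment: the paper does not prove this proposition at all but simply states it as a direct quotation of Theorem~3.1 of \cite{arora2019exact} and uses it as a black box in the proof of Theorem~\ref{theorem:regret:approc:ntk}. Your additional sketch of the layerwise induction and error-compounding intuition is accurate and goes beyond what the paper provides, but the core ``proof'' in both cases is simply to invoke the cited result.
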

Proposition~\ref{prop:arora} ensures that we can reduce the upper bound $(L+1)\varepsilon$ on the approximation error between $\Theta$ and $\widetilde{\Theta}$ by increasing the width $m$ of the NN.
For example, let $m=C_{\text{ntk}}L^6\varepsilon^{-4}\log(4L|\mathcal{X}|^2/\delta)$ for some constant $C_{\text{ntk}}>0$, then the approximation quality of Proposition~\ref{prop:arora} can be expressed as $(L+1)\varepsilon = C_{\text{ntk}}(L+1)L^{3/2}\log^{1/4}(4L|\mathcal{X}|^2/\delta)m^{-1/4}$ which decreases as the width $m$ increases.
In our theoretical analysis, we assume that $(L+1)\varepsilon \leq 1$, which can be satisfied as long as $m$ is large enough.
The regret upper bound for STO-BNTS-Linear is given in Theorem~\ref{theorem:regret:approc:ntk} (proof in Appendix~\ref{app:sec:finite:width}):
\begin{theorem}[Finite-width NNs]
\label{theorem:regret:approc:ntk}
Let $\delta\in(0,1)$ and $\beta_t = 2\log(2\pi^2t^2|\mathcal{X}|/(3\delta))$. 
Then we have
\[
R_T = \widetilde{\mathcal{O}}(e^C \sqrt{T} (\sqrt{\gamma_T}+1) + T^3m^{-1/8}(L+1)^{5/4}),
\]
with probability of $\geq 1-\delta$. Here $\gamma_T$ and $C$ are the same as those defined in Theorem~\ref{theorem:regret:exact:ntk}.
\end{theorem}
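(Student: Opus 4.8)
The plan is to treat the finite-width case as a perturbation of the infinite-width analysis behind Theorem~\ref{theorem:regret:exact:ntk}, where the size of the perturbation is controlled by Proposition~\ref{prop:arora}. The starting observation is that for a finite-width NN, the acquisition function $f^i_t(\mathbf{x};\theta^i_t)$ produced in line~6 of Algo.~\ref{algo:2} is an exact sample from $\mathcal{GP}(\widetilde{\mu}_{t-1}(\cdot),\beta_t^2\widetilde{\sigma}^2_{t-1}(\cdot,\cdot))$, where $\widetilde{\mu}_{t-1}$ and $\widetilde{\sigma}^2_{t-1}$ are the GP posterior mean and variance computed with the \emph{empirical} NTK $\widetilde{\Theta}$ rather than $\Theta$. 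Hence the first step is to quantify, uniformly over $\mathbf{x}\in\mathcal{X}$ and $t$, the discrepancies $|\widetilde{\mu}_{t-1}(\mathbf{x})-\mu_{t-1}(\mathbf{x})|$ and $|\widetilde{\sigma}_{t-1}(\mathbf{x})-\sigma_{t-1}(\mathbf{x})|$ in terms of the entrywise kernel error $(L+1)\varepsilon$ guaranteed by Proposition~\ref{prop:arora}, on the high-probability event where that proposition holds.

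The second step is a matrix-perturbation analysis of kernel ridge regression. Writing $\mathbf{K}_{t-1},\widetilde{\mathbf{K}}_{t-1}$ for the $(t-1)B\times(t-1)B$ Gram matrices of the queried inputs under $\Theta$ and $\widetilde{\Theta}$, and $\mathbf{k}_{t-1}(\mathbf{x}),\widetilde{\mathbf{k}}_{t-1}(\mathbf{x})$ for the corresponding cross-covariance vectors, Proposition~\ref{prop:arora} gives $\|\widetilde{\mathbf{K}}_{t-1}-\mathbf{K}_{t-1}\|_{\mathrm{op}}\le (t-1)B(L+1)\varepsilon$ and $\|\widetilde{\mathbf{k}}_{t-1}(\mathbf{x})-\mathbf{k}_{t-1}(\mathbf{x})\|_2\le\sqrt{(t-1)B}(L+1)\varepsilon$. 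Since both regularized Gram matrices have smallest eigenvalue $\ge\sigma^2>0$, the bound $\|(\widetilde{\mathbf{K}}_{t-1}+\sigma^2 I)^{-1}-(\mathbf{K}_{t-1}+\sigma^2 I)^{-1}\|_{\mathrm{op}}\le\sigma^{-4}\|\widetilde{\mathbf{K}}_{t-1}-\mathbf{K}_{t-1}\|_{\mathrm{op}}$, together with the boundedness of $f$ and the noise (which makes $\|\mathbf{y}_{1:t-1}\|_2$ grow only polynomially), yields $|\widetilde{\mu}_{t-1}(\mathbf{x})-\mu_{t-1}(\mathbf{x})| = \widetilde{\mathcal{O}}(\mathrm{poly}(T,B,\sigma^{-1})(L+1)\varepsilon)$. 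For the standard deviation I would first bound $|\widetilde{\sigma}^2_{t-1}(\mathbf{x})-\sigma^2_{t-1}(\mathbf{x})|$ by a similar $\mathrm{poly}(T,B,\sigma^{-1})(L+1)\varepsilon$ expression via the same triangle-inequality decomposition of the quadratic form, and then invoke $|\widetilde{\sigma}_{t-1}(\mathbf{x})-\sigma_{t-1}(\mathbf{x})|\le\sqrt{|\widetilde{\sigma}^2_{t-1}(\mathbf{x})-\sigma^2_{t-1}(\mathbf{x})|}$ to obtain a bound of order $\sqrt{\mathrm{poly}(T,B,\sigma^{-1})(L+1)\varepsilon}$. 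This square root is precisely what turns the $m^{-1/4}$ of $(L+1)\varepsilon=\widetilde{\mathcal{O}}((L+1)^{5/2}m^{-1/4})$ into the $m^{-1/8}$ and $(L+1)^{5/4}$ appearing in the statement.

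The third step is to re-run the batch-TS regret decomposition of Theorem~\ref{theorem:regret:exact:ntk} with these perturbation terms inserted. In that decomposition the per-query regret $f(\mathbf{x}^*)-f(\mathbf{x}^i_t)$ is controlled by confidence bands of half-width $\beta_t\sigma_{t-1}(\cdot)$ around $\mu_{t-1}(\cdot)$ for the true $f$, and by the concentration of the sampled $f^i_t$ around its own posterior mean. Since $f$ lies in the RKHS of $\Theta$ and only approximately in that of $\widetilde{\Theta}$, I would keep the confidence band anchored at $\mu_{t-1},\sigma_{t-1}$ (the quantities for which the concentration argument of Theorem~\ref{theorem:regret:exact:ntk} was designed) and absorb the mismatch $|\widetilde{\mu}_{t-1}-\mu_{t-1}|+\beta_t|\widetilde{\sigma}_{t-1}-\sigma_{t-1}|$ into an additive error at each of the $T$ queries; the ``$\le 0$'' step using that $\mathbf{x}^i_t$ maximizes $f^i_t$ is unaffected. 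The inflated $\beta_t=2\log(2\pi^2 t^2|\mathcal{X}|/(3\delta))$ accommodates the extra $\delta/4$ failure probability of Proposition~\ref{prop:arora} in the union bound. Summing the ``infinite-width'' part reproduces $\widetilde{\mathcal{O}}(e^C\sqrt{T}(\sqrt{\gamma_T}+1))$ exactly as in Theorem~\ref{theorem:regret:exact:ntk}, and summing the additive perturbation over the $T/B$ iterations and $B$ queries per iteration (using $t\le T/B$) collapses, after substituting the $m$-dependence of $(L+1)\varepsilon$, to $\widetilde{\mathcal{O}}(T^3 m^{-1/8}(L+1)^{5/4})$.

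I expect the main obstacle to be the perturbation analysis of the posterior standard deviation $\sigma_{t-1}$: a naive operator-norm argument degrades when $\sigma_{t-1}(\mathbf{x})$ is small, and the fix of passing through $\sigma^2_{t-1}$ and taking a square root is exactly what forces the weaker $m^{-1/8}$ rate and makes the bookkeeping of the polynomial factors in $T$, $B$, and $\sigma^{-1}$ delicate. A secondary subtlety is that the true $f$ does not exactly belong to the RKHS of $\widetilde{\Theta}$, so the TS confidence argument — built around $\Theta$ — must be shown to still close even though the algorithm's decisions are driven by $\widetilde{\Theta}$; anchoring the confidence band at the exact-NTK posterior and treating the kernel mismatch purely as additive decision error is the way I would circumvent this.
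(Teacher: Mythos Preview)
Your proposal is correct and follows essentially the same approach as the paper: kernel-perturbation bounds on the posterior mean and variance via operator-norm control of $(\mathbf{K}+\sigma^2 I)^{-1}-(\widetilde{\mathbf{K}}+\sigma^2 I)^{-1}$, the square-root passage $|\widetilde{\sigma}-\sigma|\le\sqrt{|\widetilde{\sigma}^2-\sigma^2|}$ that produces the $m^{-1/8}(L+1)^{5/4}$ rate, and then re-running the Chowdhury--Gopalan style TS decomposition with an additive error term summed over $T$ steps. The only presentational difference is that the paper anchors the confidence band and the saturated-set definition at the \emph{empirical} posterior $(\widetilde{\mu},\widetilde{\sigma})$ plus an additive slack $\epsilon_{m,t}$ (since $\widetilde{f}_t$ is Gaussian around $\widetilde{\mu}$, this makes the anti-concentration step cleanest), and converts back to $\sigma$ only at the very end to invoke the $\gamma_T$ bound; your plan anchors at $(\mu,\sigma)$ from the start, which amounts to the same bookkeeping in a different order.
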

The first term in the regret upper bound in Theorem~\ref{theorem:regret:approc:ntk} is the same as that of Theorem~\ref{theorem:regret:exact:ntk} for infinite-width NNs and can hence be made sub-linear by using uncertainty sampling as the initialization stage: $R_T = \widetilde{\mathcal{O}}(\sqrt{T}(1+\sqrt{\gamma_T}))= \widetilde{\mathcal{O}}(T^{(2d-1)/(2d)})$.
The second term in the regret upper bound represents the additional regrets incurred by the use of finite-width NNs, which can also be made sub-linear 
by choosing $m=\Omega(T^{24})$.
In other words, 
if the width $m$ of the NN is chosen to be large enough (i.e., if $m=\Omega(T^{24})$), then the cumulative regret of STO-BNTS-Linear scales sub-linearly in $T$.
\subsection{Discussion}
The assumption on the function $f$ for our theoretical analyses in Theorems~\ref{theorem:regret:exact:ntk} and~\ref{theorem:regret:approc:ntk} differs from those made by the previous works on neural contextual bandits~\cite{zhang2020neural,zhou2020neural}.
Specifically, these previous works 
have relied on the assumption of a positive definite NTK Gram matrix 
to approximate the value of $f$ (only evaluated at the observed contexts up to iteration $T$) using a function that is linear in the neural tangent features. 
When translated into our setting (which is equivalent to the contextual bandit setting where all contexts are fixed for all $t\geq1$), the assumption of a positive definite NTK Gram matrix can be easily violated as long as any input $\mathbf{x}$ is queried more than once, thus rendering this assumption unrealistic in our setting.
In contrast, we have assumed that $f$ is sampled from a GP with the NTK as the kernel, which is a common assumption in the analysis of BO~\cite{desautels2014parallelizing,srinivas2009gaussian} and allows us to derive a sub-linear regret upper bound.
As a result of the different assumptions, our regret upper bounds are not directly comparable with those from the previous works on neural contextual bandits~\cite{zhang2020neural,zhou2020neural}.

\section{Experiments}
\label{sec:exp}
We compare our STO-BNTS and STO-BNTS-Linear with the baselines of Neural UCB~\cite{zhou2020neural} and Neural TS~\cite{zhang2020neural}, as well as GP-UCB and GP-TS which use GPs (with the SE kernel) instead of NNs as their surrogate models.
The original implementations of Neural UCB~\cite{zhou2020neural} and Neural TS~\cite{zhang2020neural} are only applicable to discrete domains. So, for a fair comparison in those tasks with continuous domains, we have modified their implementations to maximize their acquisition functions in the same way as our methods (i.e., through a combination of random search and L-BFGS-B, refer to Appendix~\ref{app:sec:experimental:details} for more details).
We firstly explore some interesting insights about our algorithms using a synthetic experiment in Sec.~\ref{sec:exp:synth}. Next, we apply our algorithms to real-world AutoML (Sec.~\ref{subsec:exp:automl}) and RL (Sec.~\ref{subsec:exp:rl}) problems, as well as an optimization task over images (Sec.~\ref{sec:exp:images}).
Finally, we discuss some interesting insights from our experiments in Sec.~\ref{subsec:exp:discussion}.
We plot the simple regret (or the best found observation till an iteration) when presenting the experimental results, which is the common practice in BO \cite{contal2013parallel,kandasamy2018parallelised}.
We have deferred some experimental details to Appendix~\ref{app:sec:experimental:details} due to 
space limitation.
Our code is available at \url{https://github.com/daizhongxiang/sto-bnts}.

\subsection{Synthetic Experiment}
\label{sec:exp:synth}
Here, we sample a smooth function from a GP with an SE kernel (with a length scale of $0.1$), defined on a discrete 1-dimensional domain within the range of $[0,1]$.
For all methods, we use an NN architecture with a depth of $L=8$ and a width of $m=64$ unless specified otherwise.

Figs.~\ref{fig:synth:illu} (a-c) illustrate 
the acquisition functions $f^i_t(\mathbf{x};\theta^i_t)$ (line 8 of Algo.~\ref{algo:1} and line 7 of Algo.~\ref{algo:2}) of different algorithms.
The \emph{Deep Ensemble} method~\cite{lakshminarayanan2016simple} in Fig.~\ref{fig:synth:illu}a can be regarded as a reduced version of our STO-BNTS algorithm (Algo.~\ref{algo:1}) in which the term $\langle\nabla_{\theta}f(\mathbf{x};\theta_0), \theta_0'\rangle$ (i.e., the second term in line 6 of Algo.~\ref{algo:1}) is removed. As a result, Deep Ensemble does not enjoy the theoretical guarantees of our algorithms (Sec.~\ref{sec:theoretical:results}).
In Figs.~\ref{fig:synth:illu} (a-c), given the same training set (red stars), every method constructs a batch of $B=100$ acquisition functions. 
E.g., STO-BNTS repeats lines 4-7 of Algo.~\ref{algo:1} independently for $B=100$ times to produce acquisition functions $f^i_t(\mathbf{x};\theta^i_t)$ for $i=1,\ldots,100$, which are plotted as the blue lines in Fig.~\ref{fig:synth:illu}b.
Note that every acquisition function (blue line) is maximized to select an input query.
The figures show that compared with the naive baseline of Deep Ensemble, our STO-BNTS and STO-BNTS-Linear are able to display more exploratory behaviors in unexplored regions (e.g., the interval of $[0.2,0.4]$). This may be explained by our theoretical guarantees (Sec.~\ref{sec:theoretical:results}) which imply that both of our algorithms are able to perform exploration in a principled way and hence naturally handle the the exploration-exploitation trade-off. Moreover, it has also been justified by~\cite{he2020bayesian} that the addition of the term $\langle\nabla_{\theta}f(\mathbf{x};\theta_0), \theta_0'\rangle$ improves the ability of the NN to characterize the uncertainty of predictions, which corroborates our findings here.

The simple regrets of different algorithms are plotted in Fig.~\ref{fig:synth:illu}d.\footnote{To show the benefit of batch evaluations, in all experiments (including real-world experiments), we use the \emph{iterations} $t$ as the horizontal axis and in every iteration $t$, we report the largest $f(\mathbf{x}^i_t)$ ($y^i_t$ in real-world experiments) within a batch (when $B>1$) as the observation in this iteration.}
The first interesting observation is that our STO-BNTS (orange) significantly outperforms Deep Ensemble (red), which corroborates the insight discussed above positing that the addition of the term $\langle\nabla_{\theta}f(\mathbf{x};\theta_0), \theta_0'\rangle$ leads to more principled exploration and hence better performances. 
Due to its 
lack of exploration as illustrated in Fig.~\ref{fig:synth:illu}a, Deep Ensemble fails to reach zero regret in Fig.~\ref{fig:synth:illu}d.
Furthermore, the discrepancy between the green and purple curves shows that batch evaluations ($B=4$) lead to significant performance improvement.
Moreover, compared with the green curve for which $m=64$, \emph{using a wider NN} (gray curve, $m=512$) \emph{substantially improves the performance of STO-BNTS-Linear} yet employing a shallower NN (yellow curve, $m=16$) significantly degrades the performance. 
These observations agree with Theorem~\ref{theorem:regret:approc:ntk} which states that a larger width $m$ reduces the regret of STO-BNTS-Linear.
Similarly, the NN surrogate model of STO-BNTS should also be wide enough since the use of a narrower NN (light blue curve, $m=16$) also leads to a worse performance for STO-BNTS.
Lastly, GP-TS (blue) performs competitively in this experiment with a very smooth objective function. However, as we will show in the next two sections, in real-world experiments with more complicated objective functions, GP-based methods 
are consistently outperformed by our algorithms.
\begin{figure}[t]
     \centering
     \begin{tabular}{cccc}
         \hspace{-5mm}
         \includegraphics[width=0.263\linewidth]{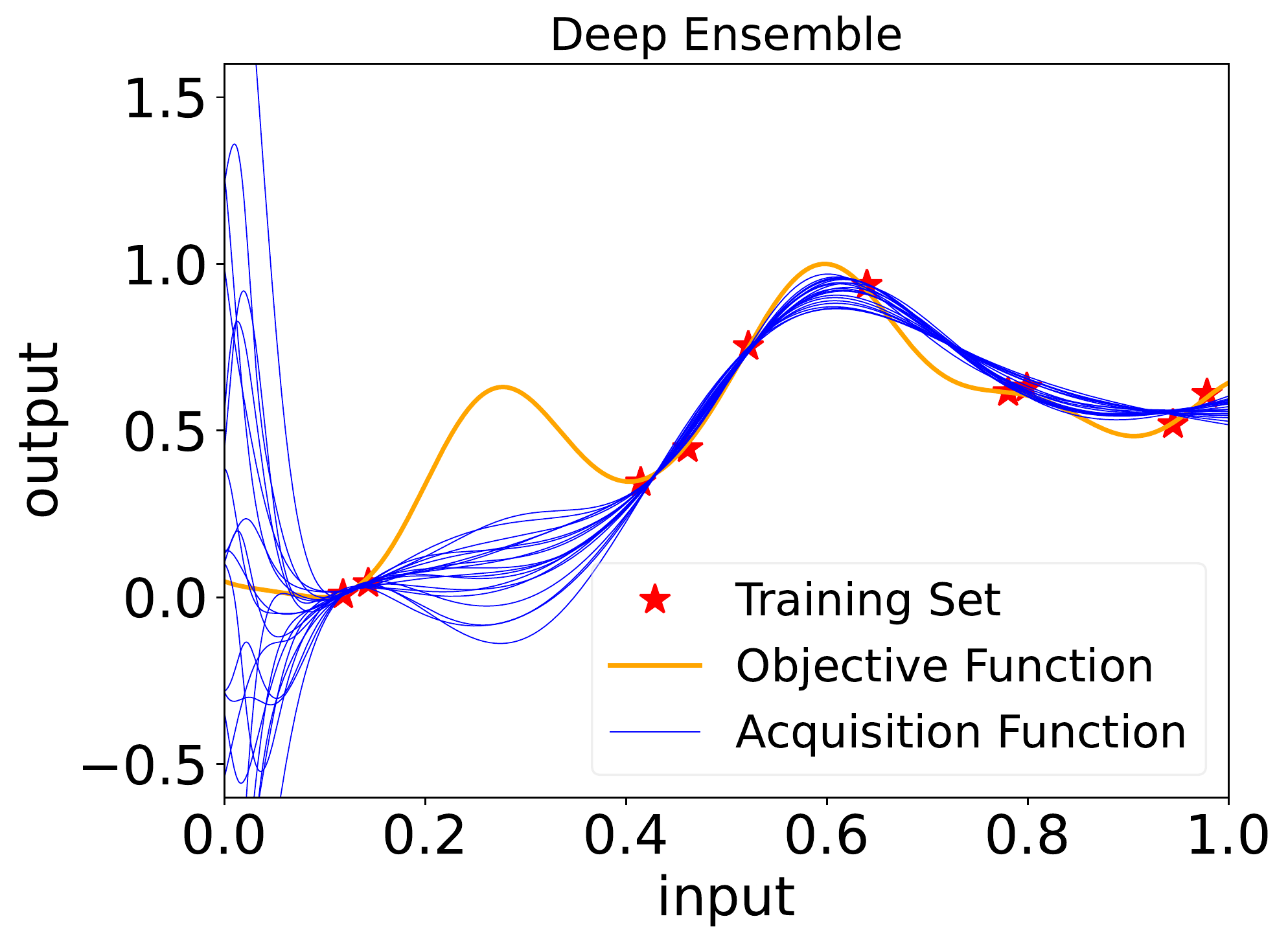} & \hspace{-6.9mm}
         \includegraphics[width=0.263\linewidth]{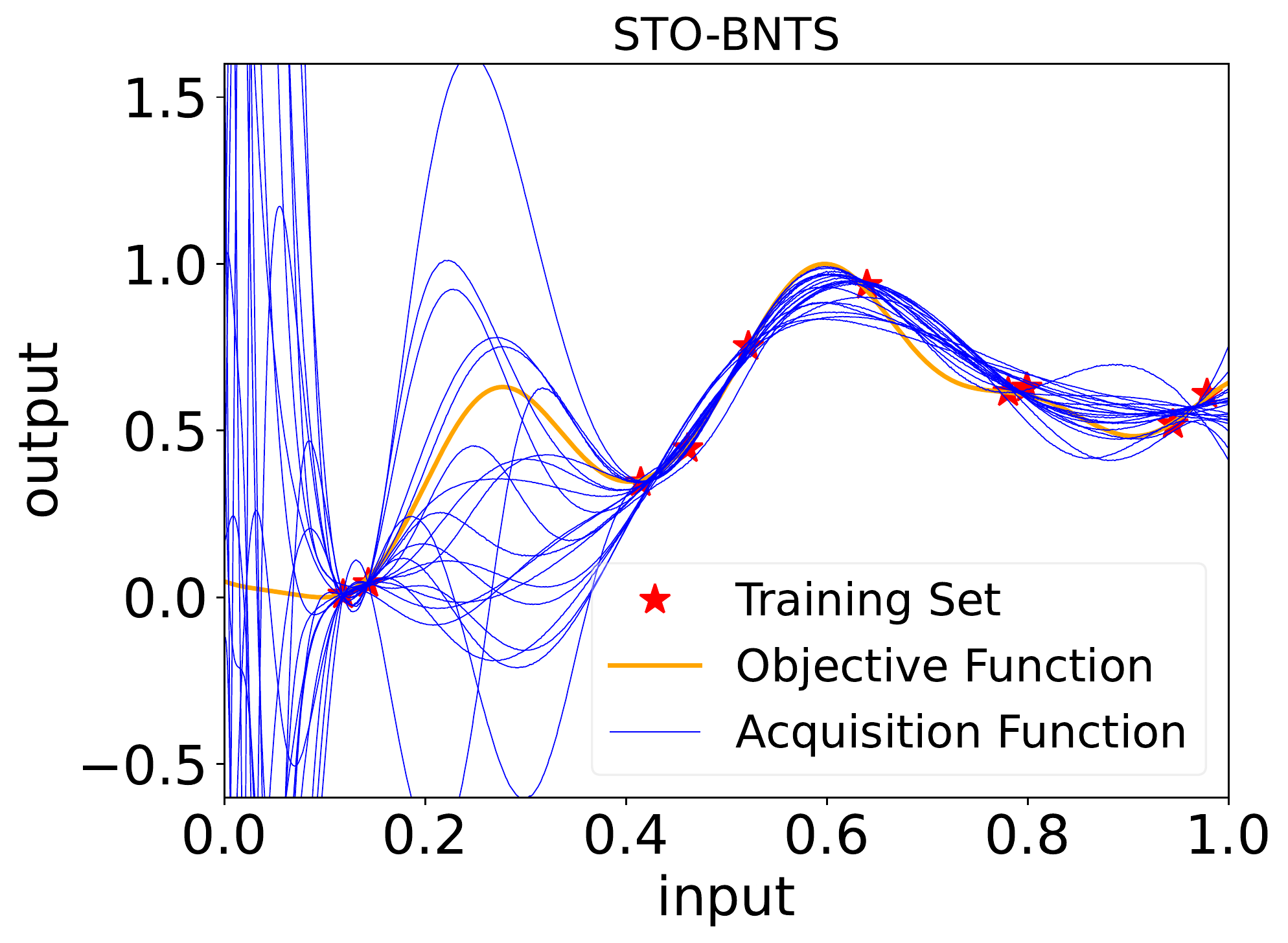} & \hspace{-6.9mm}
         \includegraphics[width=0.263\linewidth]{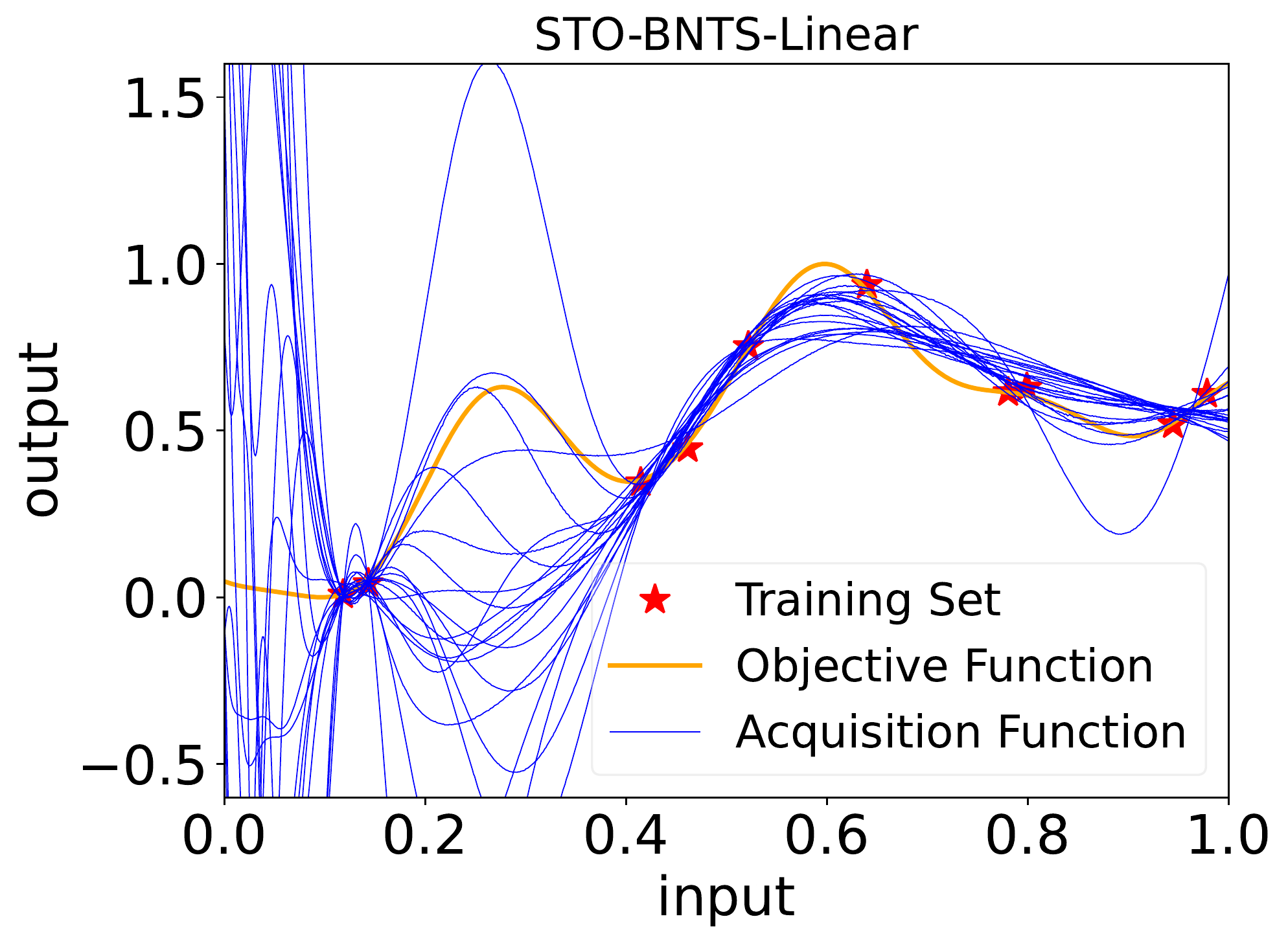} & \hspace{-6.9mm}
         \includegraphics[width=0.26\linewidth]{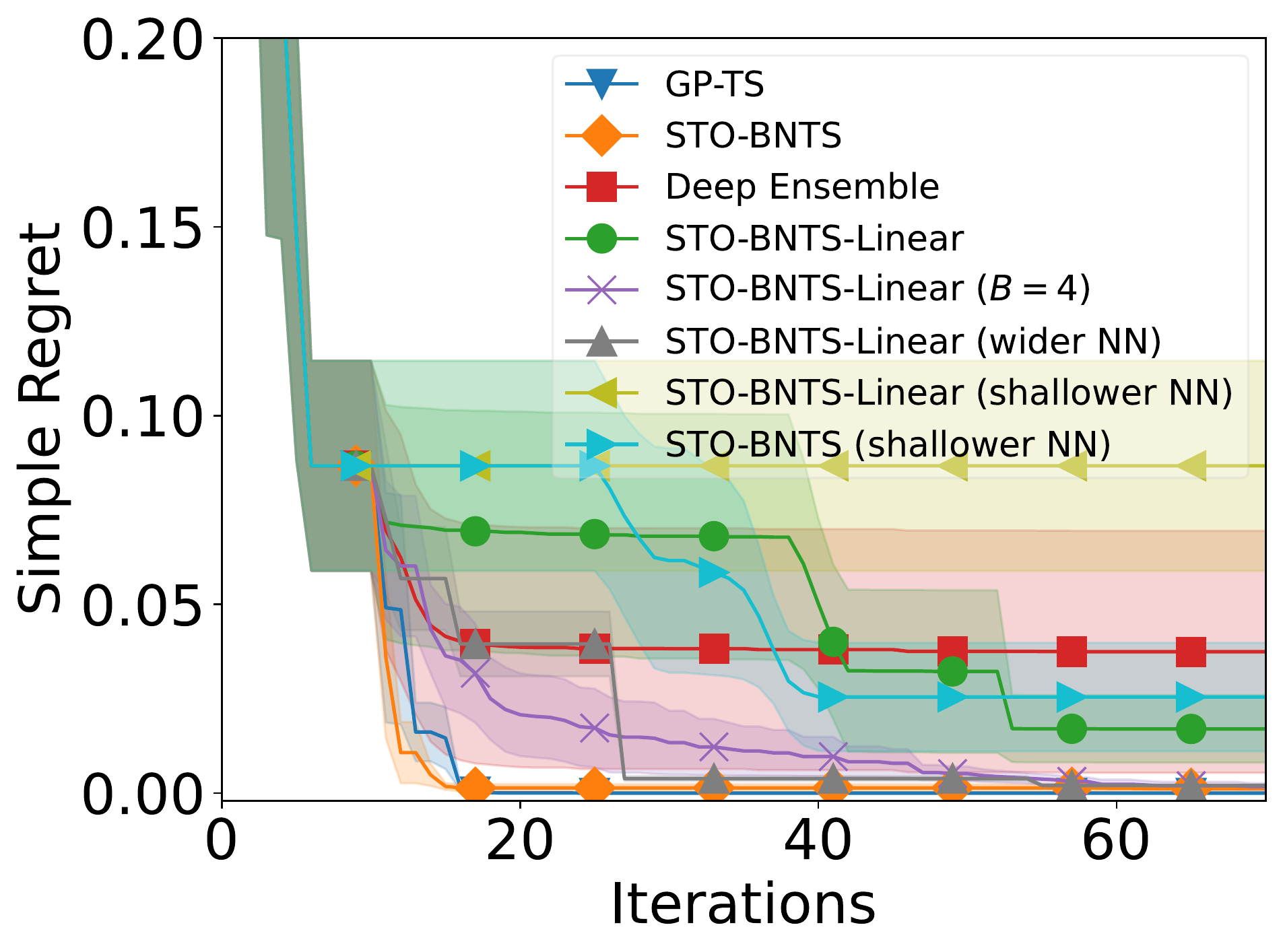}\\
         {(a)} & {(b)} & {(c)} & {(d)}
     \end{tabular}
 \vspace{-2.5mm}
     \caption{
     (a-c) Acquisition functions and (d) performances in the synthetic experiment (Sec.~\ref{sec:exp:synth}).
    }
     \label{fig:synth:illu}
 \vspace{-2mm}
\end{figure}
\subsection{Real-world Experiments on Automated Machine Learning (AutoML)}
\label{subsec:exp:automl}
Here, we 
adopt
3 hyperparameter tuning tasks.
We use
tasks involving categorical hyperparameters to highlight the advantage of our NN-based over GP-based methods.
We use a diabetes diagnosis dataset to tune $6$ categorical hyperparameters of random forest (RF), and then use the MNIST dataset to tune $9$ hyperparameters ($4$ continuous and $5$ categorical) of XGBoost and $9$ hyperparameters ($2$ continuous and $7$ categorical) of convolutional neural networks (CNNs).
Fig.~\ref{fig:exp:automl} plots the results for the RF (a,b) and XGBoost (c,d) tasks,
and the results for CNN are shown in Fig.~\ref{fig:exp:automl:cnn} (Appendix~\ref{app:subsec:exp:automl}).

For each task, we firstly compare the performances of different variants of our algorithms in Figs.~\ref{fig:exp:automl}a, c and Fig.~\ref{fig:exp:automl:cnn}a,
which demonstrate a number of interesting insights.
Firstly, our STO-BNTS outperforms Deep Ensemble, which is consistent with the results in 
Fig.~\ref{fig:synth:illu}d and further emphasizes the practical significance of the improved exploration performed by our STO-BNTS (Sec.~\ref{sec:exp:synth}).
Secondly, STO-BNTS-Linear is significantly outperformed by STO-BNTS in Figs.~\ref{fig:exp:automl} a and c, which can likely be attributed to its inability to fully leverage the representation power of NNs as we have discussed in Sec.~\ref{sec:algo} (see more discussions in Sec.~\ref{subsec:exp:discussion}).
Next, Figs.~\ref{fig:exp:automl}a, c and Fig.~\ref{fig:exp:automl:cnn}a also show that the performance of STO-BNTS tends to suffer when the NN surrogate model is either overly shallow ($L=1,m=512$, orange curves) or overly deep ($L=8,m=64$, red curves), 
that is, both the orange and red curves underperform significantly in two of the three experiments. 
This is likely because an overly shallow NN lacks the representation power to model complicated objective functions, whereas an overly deep NN may be prone to overfitting
since the size of the training set here is much smaller than typical deep learning applications. 
In contrast, the NN architecture of $L=2,m=256$ (blue curves) consistently performs well in all three experiments, therefore, we will use it as the default architecture in the experiments in the next section.
Moreover, the benefit of batch evaluations can also be further confirmed by comparing the blue ($B=1$) and green ($B=4$) curves in Figs.~\ref{fig:exp:automl}a, c and Fig.~\ref{fig:exp:automl:cnn}a.

Figs.~\ref{fig:exp:automl}b, d and Fig.~\ref{fig:exp:automl:cnn}b show that for the same NN architecture of $L=2,m=256$, our STO-BNTS is the best-performing method
since it performs better than all baselines in Figs.~\ref{fig:exp:automl}b and d and comparably with them in Fig.~\ref{fig:exp:automl:cnn}b.
The undesirable performances of Neural UCB and Neural TS may be explained by the errors due to their diagonal matrix approximation (Sec.~\ref{sec:introduction}), and the underwhelming performances of GP-UCB and GP-TS may result from the ineffectiveness of GP in modeling categorical inputs (more on this in Sec.~\ref{subsec:exp:discussion}).
\begin{figure}[t]
     \centering
     \begin{tabular}{cccc}
         \hspace{-4mm}
         \includegraphics[width=0.25\linewidth]{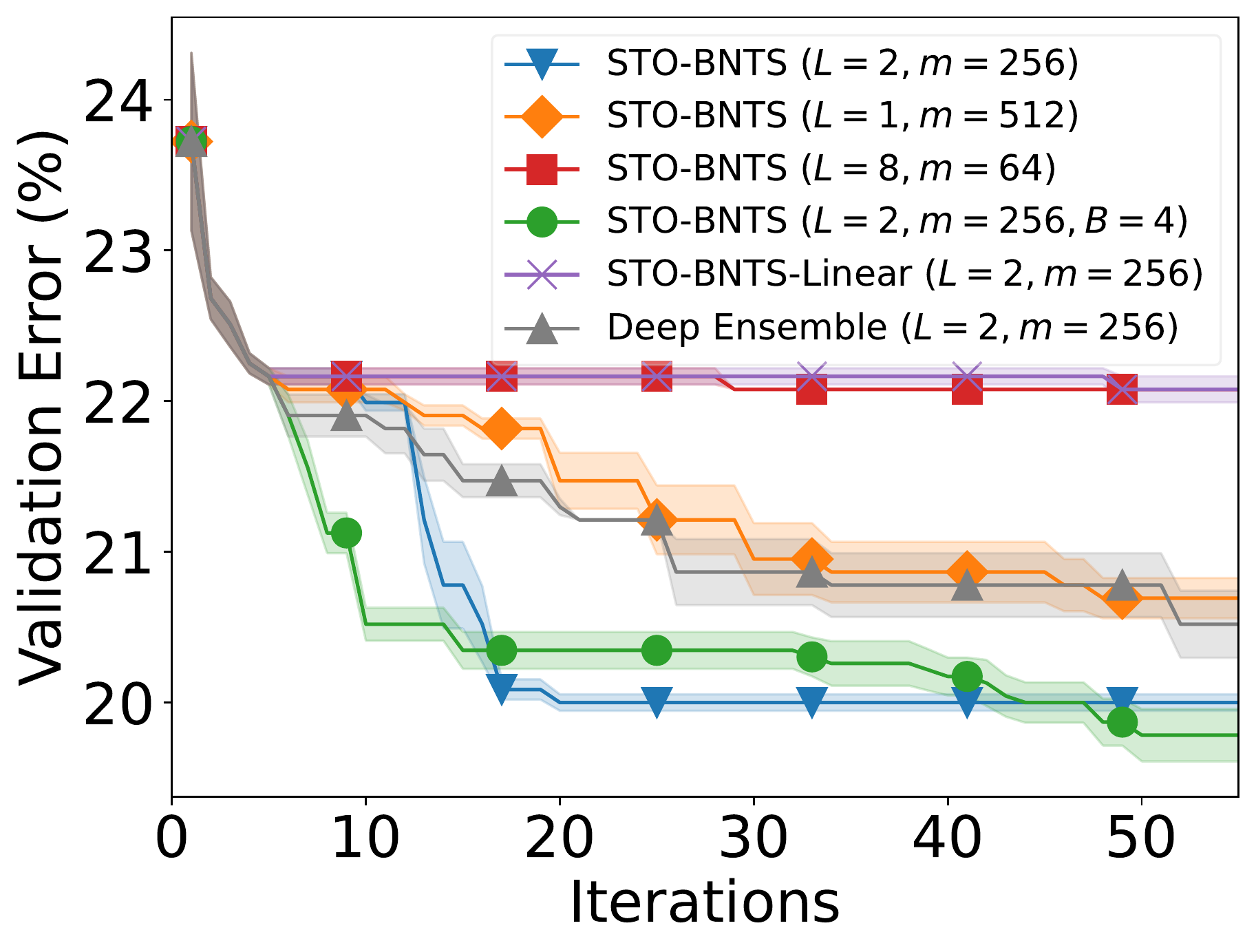} & \hspace{-5mm}
         \includegraphics[width=0.25\linewidth]{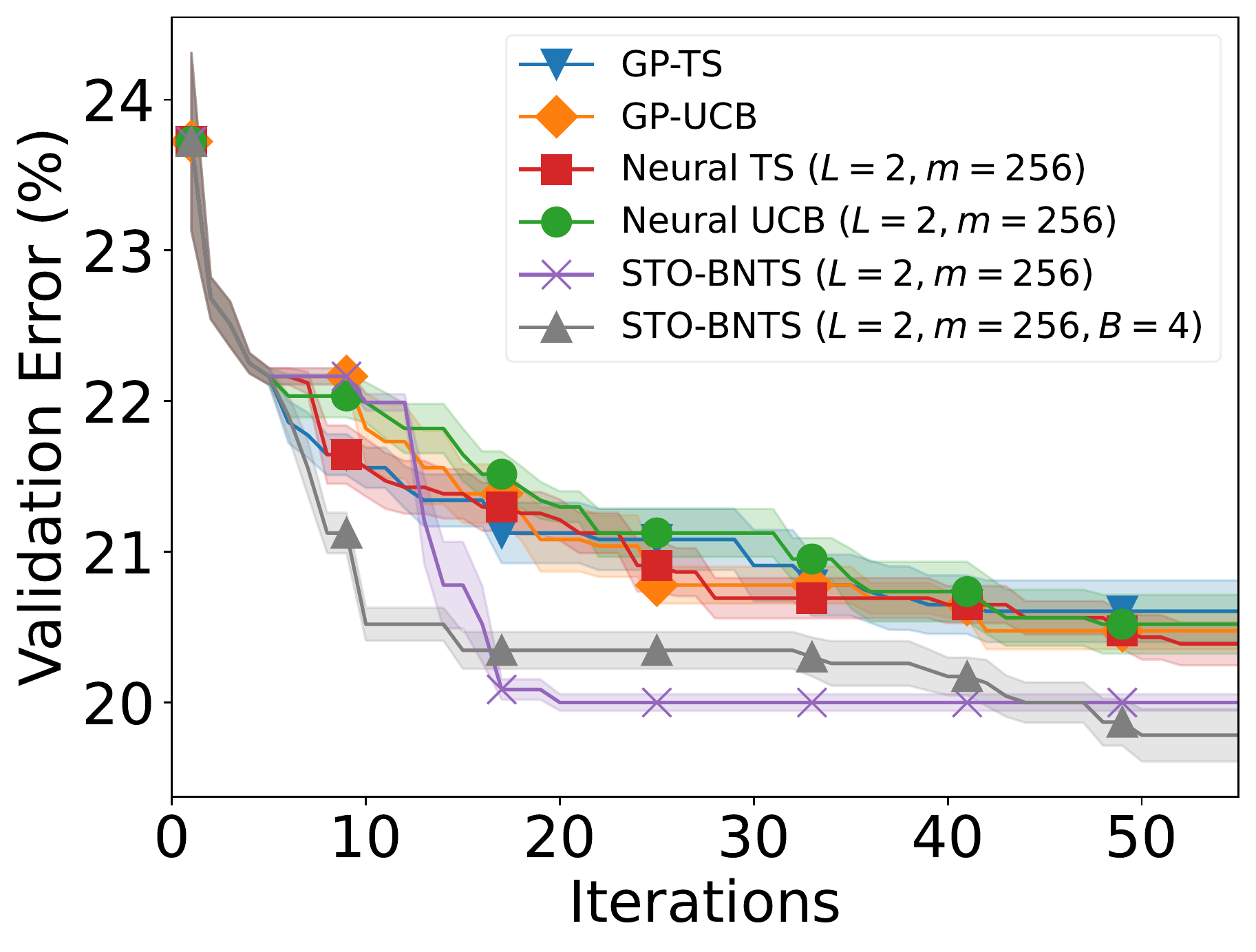} & \hspace{-5mm}
         \includegraphics[width=0.25\linewidth]{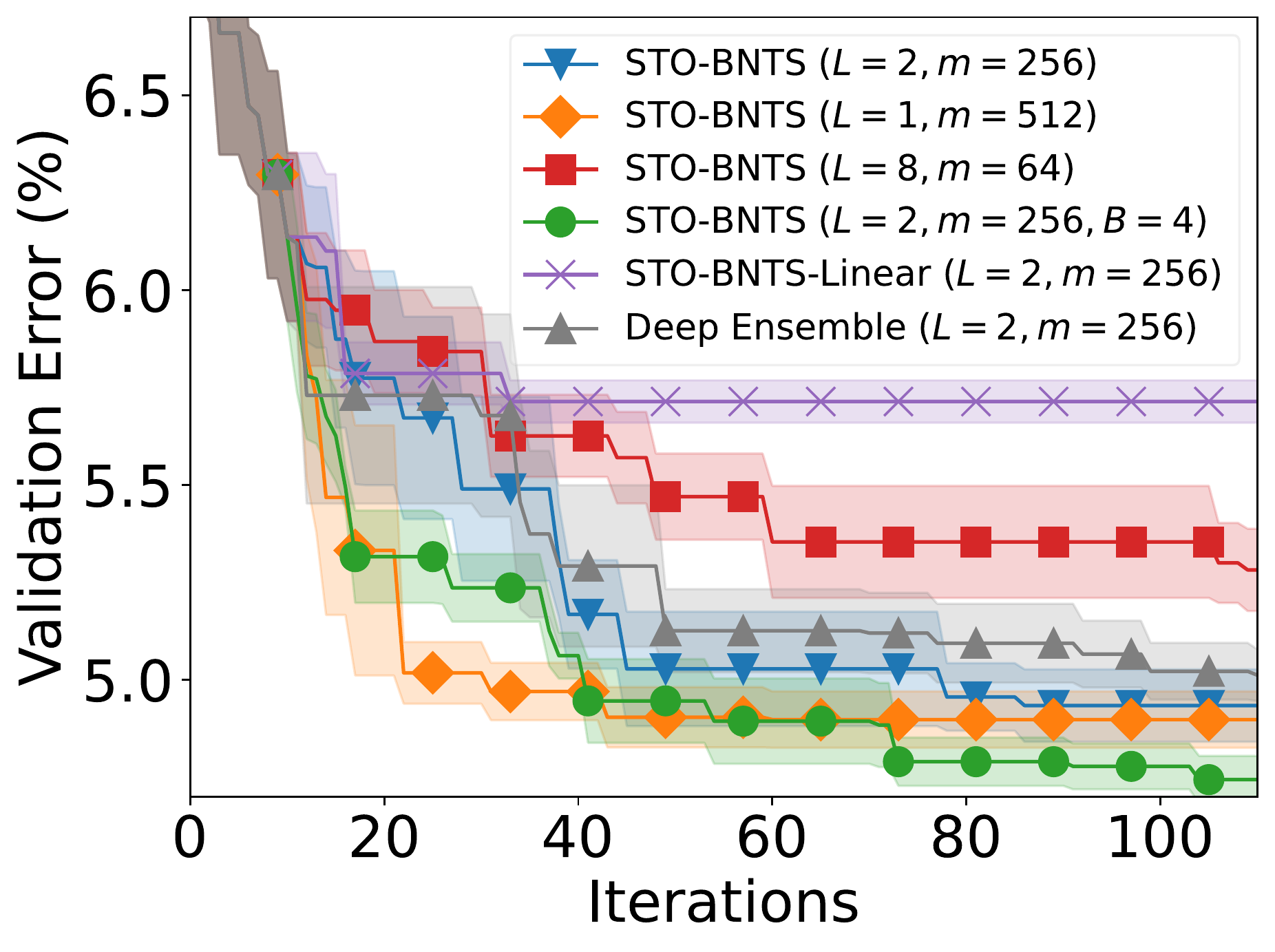} & \hspace{-5.5mm}
         \includegraphics[width=0.25\linewidth]{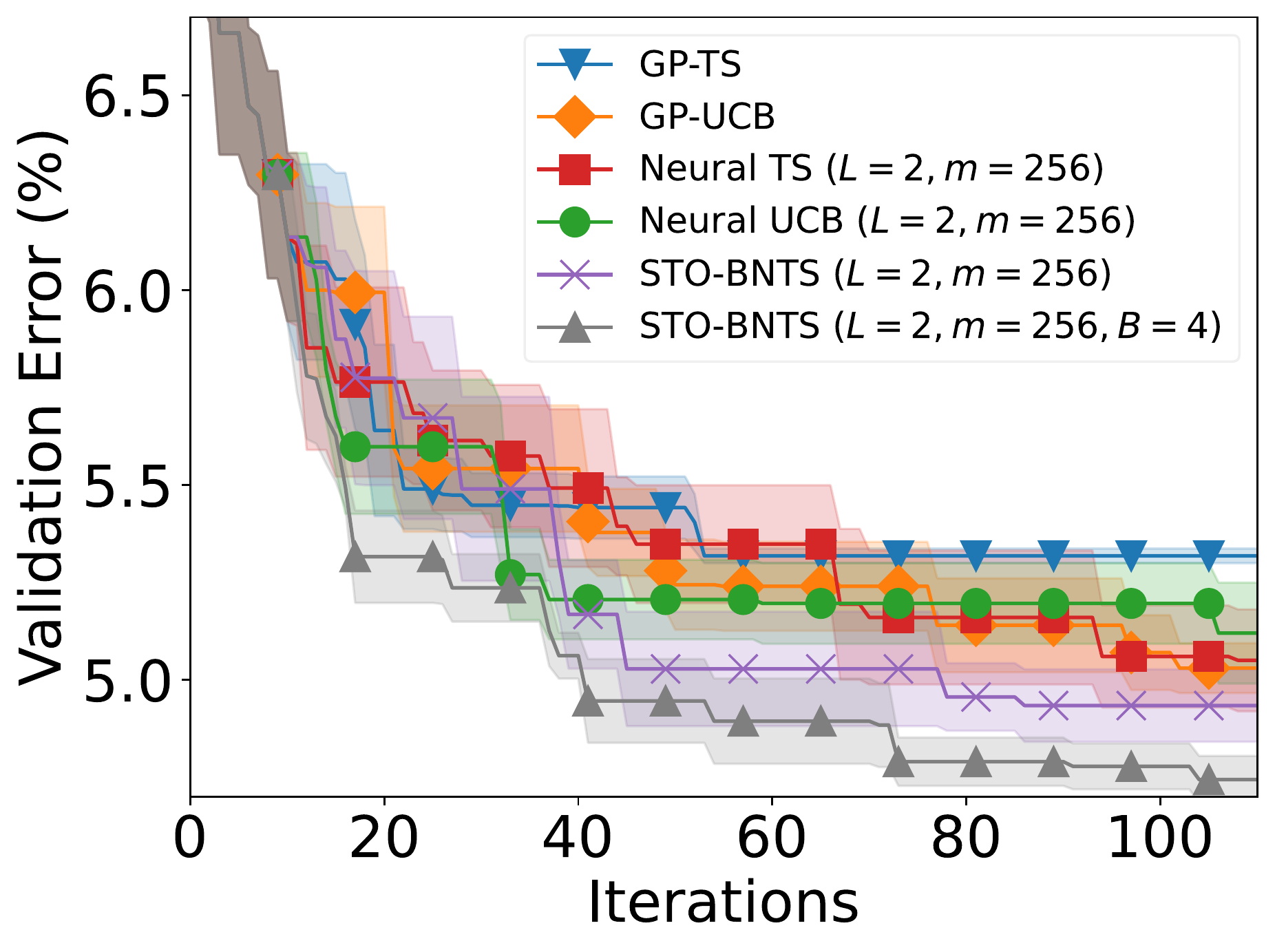}\\
         {(a)} & {(b)} & {(c)} & {(d)}
     \end{tabular}
 \vspace{-2.5mm}
     \caption{
     Validation errors for hyperparameter tuning of RF (a,b), XGBoost (c,d).
     $B=1$ by default.
     }
     \label{fig:exp:automl}
\end{figure}

\vspace{-1.5mm}
\subsection{Real-world Experiments on Reinforcement Learning (RL)}
\label{subsec:exp:rl}
\vspace{-1.0mm}
\begin{figure*}[t]
     \centering
     \begin{tabular}{cccc}
         \hspace{-4mm} \includegraphics[width=0.265\linewidth]{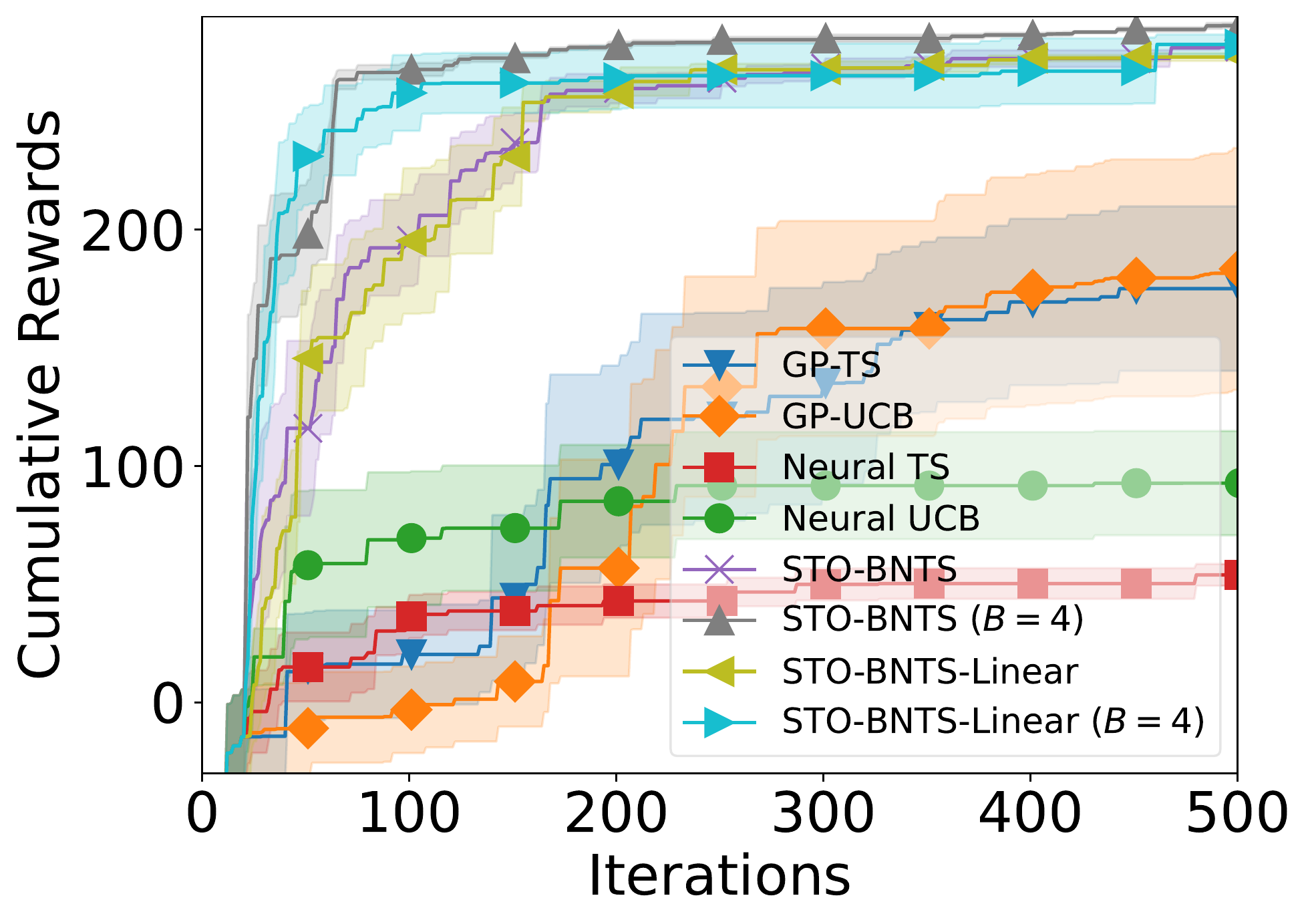} & \hspace{-6mm}
         \includegraphics[width=0.251\linewidth]{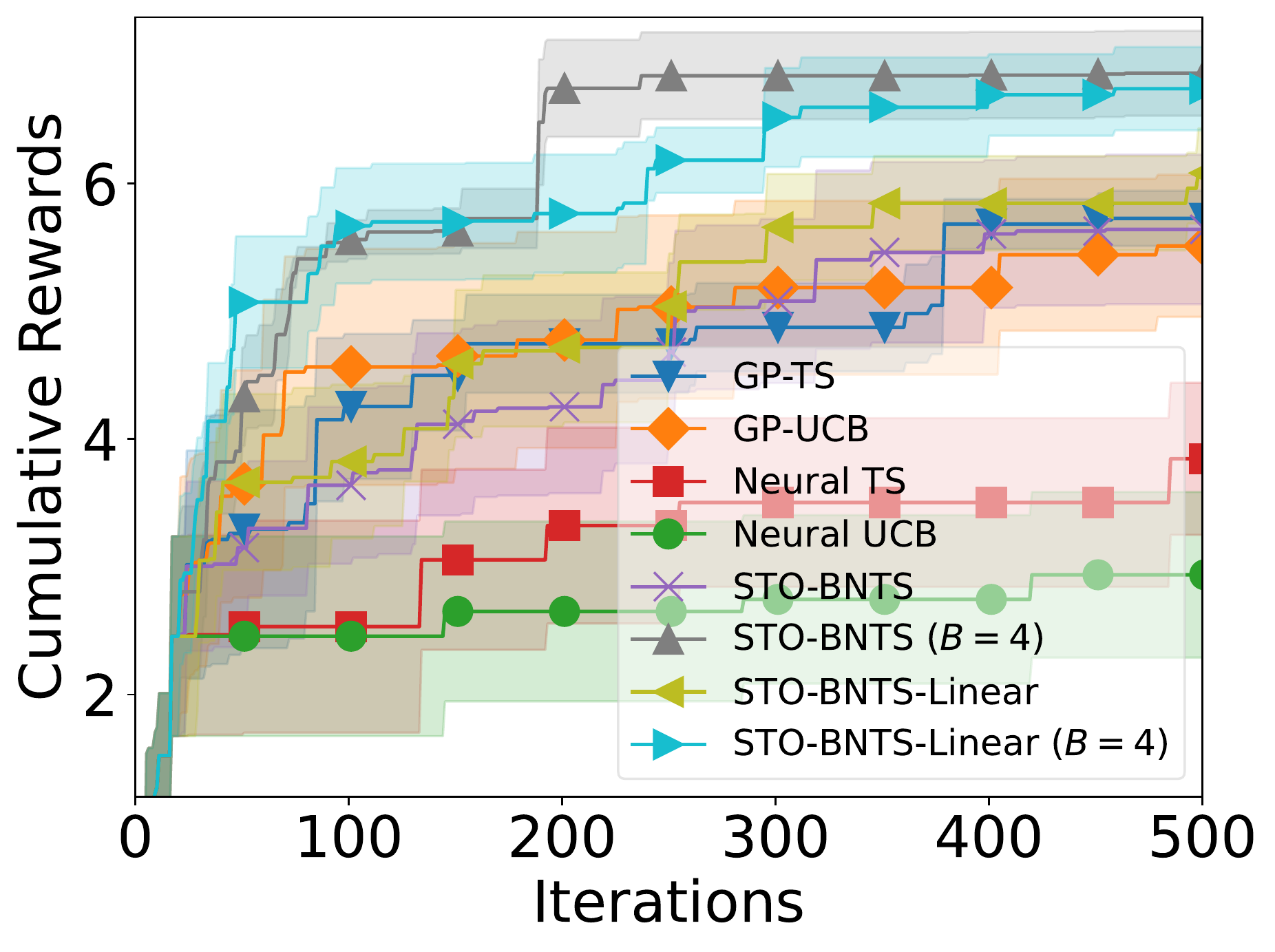}& \hspace{-6mm}
         \includegraphics[width=0.251\linewidth]{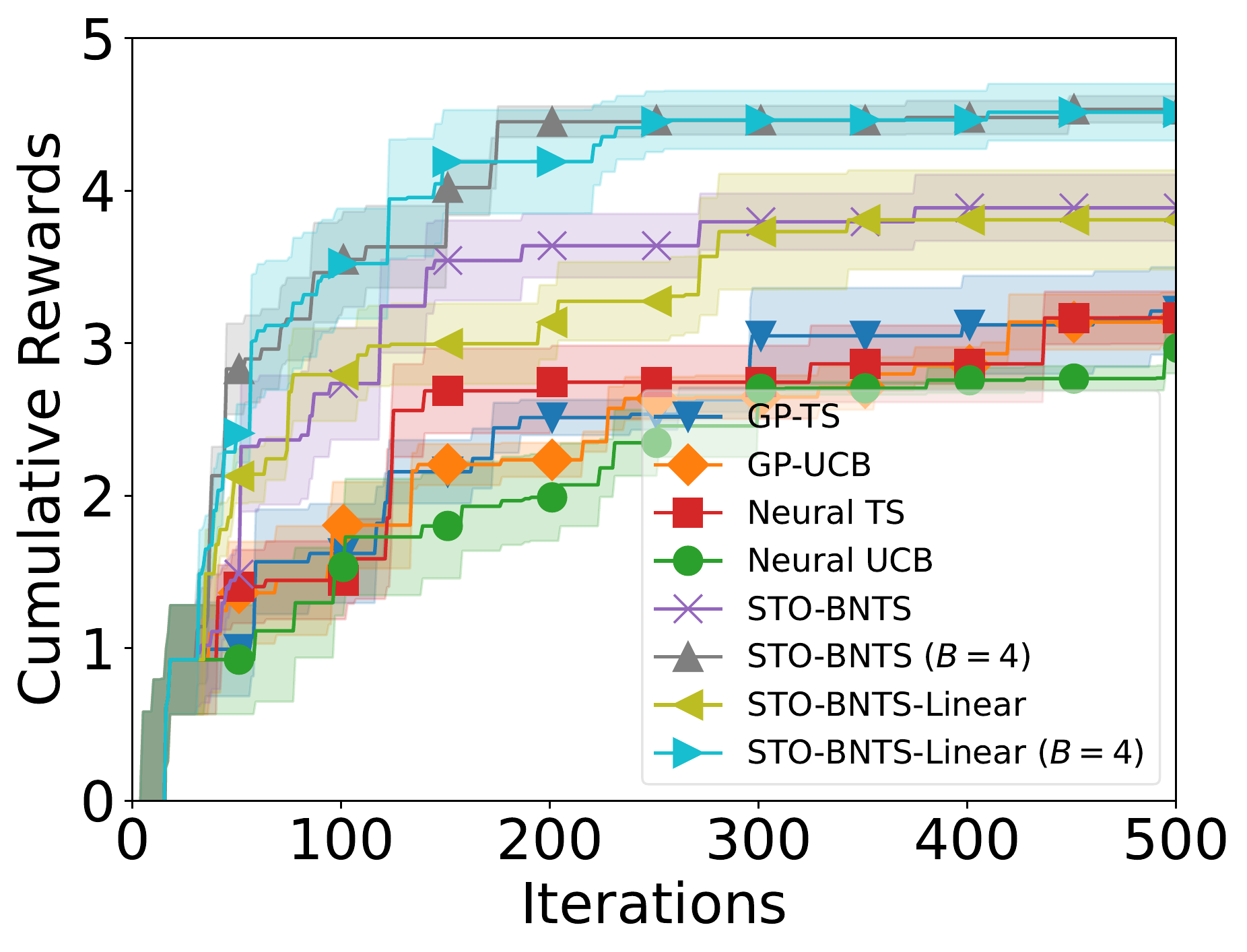}& \hspace{-6mm}
         \includegraphics[width=0.26\linewidth]{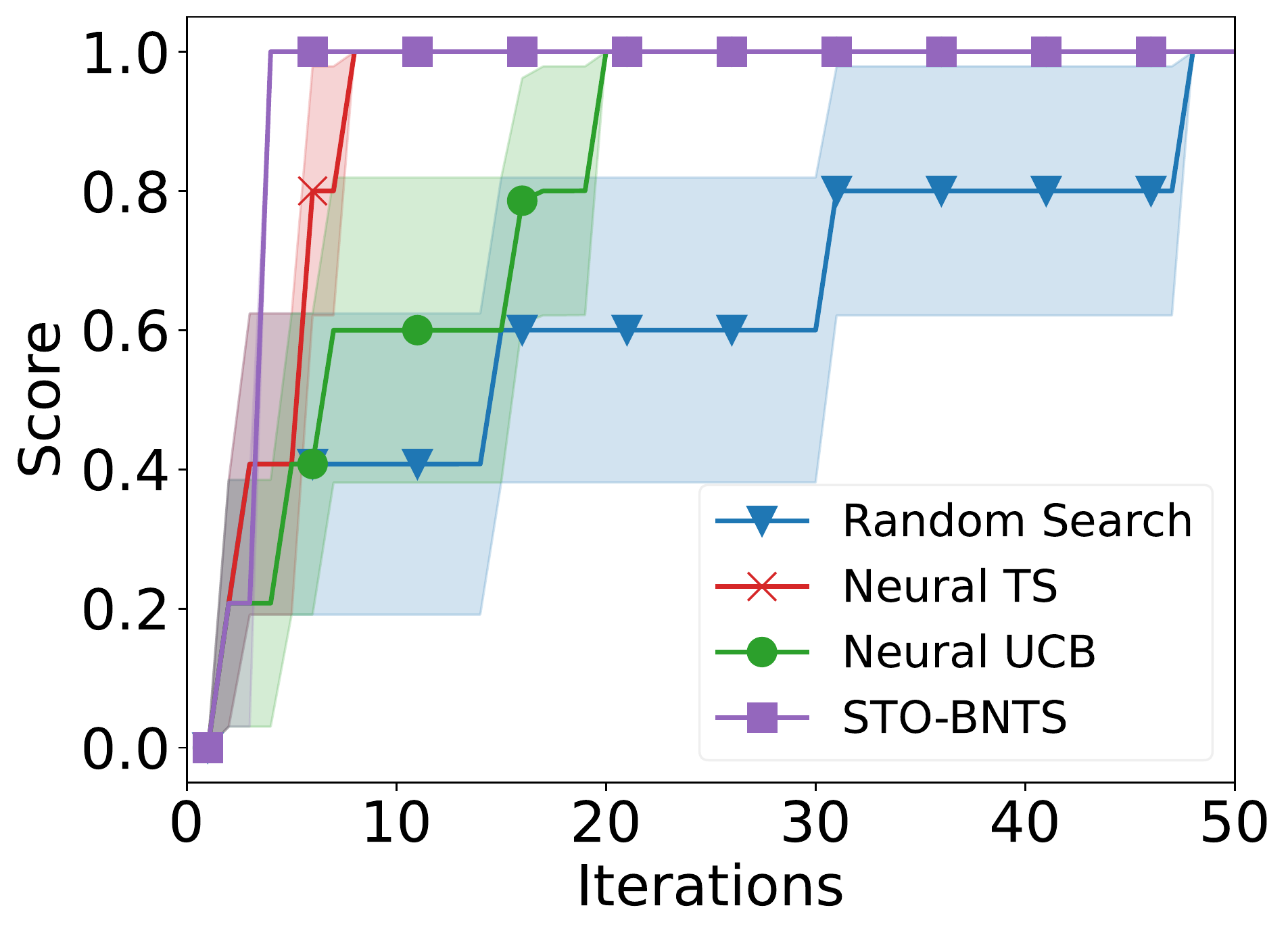}\\
         {(a)} & {(b)} & {(c)} & {(d)}
     \end{tabular}
\vspace{-2.5mm}
     \caption{
     Results for (a) $12$-D Lunar-Lander, (b) $14$-D robot pushing, (c) $20$-D rover trajectory planning, and (d) 
      optimization over images in Sec.~\ref{sec:exp:images}. 
     $B=1$ unless specified otherwise.
     }
     \label{fig:exp:rl}
\vspace{-2mm}
\end{figure*}
Here we optimize the control parameters of 3 RL problems:
we tune $d=12$ parameters of a heuristic controller for the Lunar-Lander task from OpenAI Gym~\cite{brockman2016openai}, $d=14$ parameters of a controller for a robot pushing task~\cite{wang2018batched}, and $d=20$ parameters for a rover trajectory planning problem~\cite{wang2018batched}.
We use $L=2,m=256$ for all methods.
The results 
in these three RL tasks are plotted in Figs.~\ref{fig:exp:rl}a, b and c.
Our STO-BNTS and STO-BNTS-Linear (purple and yellow curves) consistently perform the best among all methods with sequential evaluations ($B=1$), and our methods with batch evaluations ($B=4$, gray and light blue curves) achieve further performance improvements over their sequential counterparts. 
Of note, despite underperforming 
in Sec.~\ref{subsec:exp:automl}, STO-BNTS-Linear achieves comparable performances with STO-BNTS in all three experiments here for both the sequential and batch evaluations, outperforming the other baselines.
The inefficacy of GP-TS and GP-UCB here may result from the inability of GPs to effectively model high-dimensional input space~\cite{kandasamy2015high} (Sec.~\ref{subsec:exp:discussion}).

\vspace{-1.5mm}
\subsection{Optimization over Images}
\label{sec:exp:images}
\vspace{-1.5mm}
Some real-world applications require optimizing over an input domain of \emph{images}. For example, an image recommender system sequentially recommends different images to a user in order to select the image with the best rating/score~\cite{zhang2019deep}.
In these applications, GP-based BO methods require sophisticated techniques such as convolutional kernels in order to model the inputs of images~\cite{van2017convolutional}, in contrast, our methods can be easily applied by simply replacing the NN surrogate model with a CNN.
Here, we simulate these applications by maximizing a score over the domain of images from the MNIST dataset. 
For all CNN-based methods, we use a CNN with a convolutional layer followed by a fully connected layer (both with $m=64$) as the surrogate model (more details in Appendix~\ref{app:sec:exp:images}).
The results (Fig.~\ref{fig:exp:rl}d) show that our STO-BNTS performs the best.
Moreover, when using the same CNN architecture, STO-BNTS-Linear fails to achieve comparable performances to the other CNN-based methods since STO-BNTS-Linear is not able to fully leverage the representation power of CNN (Sec.~\ref{sec:algo}).
However, again consistent with Theorem~\ref{theorem:regret:approc:ntk}, increasing the width of the CNN can improve the performance of STO-BNTS-Linear to be comparable with STO-BNTS (Fig.~\ref{fig:app:exp:images} in Appendix~\ref{app:sec:exp:images}).

\vspace{-1.2mm}
\subsection{Discussion}
\label{subsec:exp:discussion}
\vspace{-1.2mm}
Secs.~\ref{subsec:exp:automl},~\ref{subsec:exp:rl} and~\ref{sec:exp:images} show that our algorithms, 
although \emph{without any special design for a specific type of problem} 
(e.g., problems involving categorical, high-dimensional or image inputs), are \emph{competitive in all these problems} thanks to the ability of NNs to model complicated real-world functions.
Compared with GP-based BO methods, our algorithms may incur more computation due to the need to train an NN surrogate model. However, the additional computation can be easily overshadowed by the cost of function evaluations since BO is usually used to optimize expensive-to-compute functions.

\textbf{STO-BNTS vs.~STO-BNTS-Linear.}
Our experimental results 
have demonstrated some interesting insights on the comparison
between our STO-BNTS (Algo.~\ref{algo:1}) and STO-BNTS-Linear (Algo.~\ref{algo:2}). 
As we have discussed in Sec.~\ref{sec:algo}, since STO-BNTS-Linear is unable to explicitly take advantage of the representation power of NNs, it is expected to be less competitive than STO-BNTS in practice especially in problems where the strong representation power of NNs is crucial for accurately modeling the objective function.
Examples of such problems include those with categorical (Sec.~\ref{subsec:exp:automl}) or image (Sec.~\ref{sec:exp:images}) inputs.
Interestingly, our results in Secs.~\ref{subsec:exp:automl} and~\ref{sec:exp:images} indeed corroborate this insight by showing that STO-BNTS-Linear is consistently outperformed by STO-BNTS in these experiments.
However, note that in such problems, the performance of STO-BNTS-Linear can be significantly improved by further increasing the width of the NN (CNN) as we have shown in Fig.~\ref{fig:app:exp:images}.
In addition, the practical efficacy of STO-BNTS-Linear can also be seen from the experiments in Sec.~\ref{subsec:exp:rl}, in which STO-BNTS-Linear performs comparably with STO-BNTS and \emph{consistently outperforms all other baselines}.
This demonstrates the empirical competence of STO-BNTS-Linear in some real-world problems such as RL.
Moreover, also note that compared with STO-BNTS, STO-BNTS-Linear enjoys the theoretical advantage of having a guaranteed convergence for finite-width NNs (Sec.~\ref{subsec:theory:finite}).



\textbf{Baseline Methods.}
The underwhelming performances of Neural UCB and Neural TS in our experiments are likely caused by the errors introduced by the diagonal matrix approximation that is used to avoid the inversion of the $p\times p$ matrix (Sec.~\ref{sec:introduction}).
The inadequate performances of GP-UCB and GP-TS may be explained by the ineffectiveness of GPs to model objective functions involving categorical~\cite{daxberger2019mixed,deshwal2021bayesian} or high-dimensional inputs~\cite{kandasamy2015high,mutny2019efficient}, which correspond to the experiments in Secs.~\ref{subsec:exp:automl} and~\ref{subsec:exp:rl}, respectively. 
Nevertheless, we expect GP-based methods to achieve better performances (than their performances here) in problems with lower-dimensional and purely continuous input space (e.g., GP-TS performs competitively in the synthetic experiment as shown in Fig.~\ref{fig:synth:illu}d).


\textbf{Depth $L$ and Width $m$ of the NN Surrogate Model.}
Our experimental results have provided some guidelines on the choices of the depth $L$ and width $m$ of the NN surrogate model in our algorithms.
Regarding the depth $L$, our experiments in Sec.~\ref{subsec:exp:automl} have shown that an overly shallow NN usually hurts the performance due to its lack of expressive power, whereas an excessively deep NN is also likely to deteriorate the performance due to overfitting.
Therefore, we discourage the use of NNs which are either exceedingly shallow or overly deep, and recommend shallower NNs for simpler tasks 
to prevent overfitting and deeper NNs for more complicated tasks to gain enough representation power.
The width $m$ should be chosen to be large enough since our experiments in Sec.~\ref{sec:exp:synth} (Fig.~\ref{fig:synth:illu}d) and Sec.~\ref{sec:exp:images} (Fig.~\ref{fig:app:exp:images} in Appendix~\ref{app:sec:exp:images}) suggest that a larger width usually improves the performance.
Moreover, we have shown that the choice of $L=2,m=256$ consistently leads to competitive performances in a wide range of experiments (i.e., all experiments in Sec.~\ref{subsec:exp:automl} and Sec.~\ref{subsec:exp:rl}). 
Therefore, we recommend it as the default choice for real-world problems.

\vspace{-2.0mm}
\section{Related Works}
\label{sec:related:works}
\vspace{-2mm}
The NTK provides a theoretical tool to study the training dynamics of NNs by drawing connections with kernel methods~\cite{arora2019exact,cao2019generalization,dai2022federated,jacot2018neural,lee2019wide}, and it has been successfully applied to a number of practical problems such as neural architecture search \cite{nasi,hnas2022}, active learning \cite{wang2021neural}, data valuation \cite{wu2022davinz}, among others.
The 
pioneering 
work of~\cite{zhou2020neural} exploited the theory of NTK to introduce the Neural UCB algorithm for contextual bandits, which uses an NN 
to learn the reward (objective) function and leverages neural tangent features for exploration following the UCB principle.
Neural UCB has been followed up by other works which mainly aimed to make Neural UCB more practical.
\cite{xu2020neural} proposed to improve the computational efficiency of Neural UCB through the additional assumption that the reward function is linear in the feature mapping of the last layer of the NN,~\cite{gu2021batched} also aimed to reduce the computational cost of Neural UCB by limiting the number of updates of the NN surrogate model, and~\cite{nabati2021online} proposed a method to reduce the memory requirement of generic neural bandit algorithms.
The recent work of~\cite{lisicki2021empirical} performed an empirical study of neural bandit algorithms, and discovered that they tend to perform competitively in problems that require learning complicated representations. 
Above all, the work that is most closely related to our paper is~\cite{zhang2020neural}, which introduced the Neural TS algorithm.
Following similar principles as Neural UCB, Neural TS learns the reward function using an NN surrogate model and constructs the exploration term through the neural tangent features, which are used as, respectively, the mean and variance of the Gaussian distribution from which the reward is sampled for running the TS routine.
GP-based BO methods \cite{bala20,dmitrii20b,topk,TMES} have achieved impressive performances in recent years, and have been extended to various problem settings such as 
high-dimensional BO \cite{eriksson2019scalable,NghiaAAAI18,kandasamy2015high},
multi-fidelity BO \cite{dai2019,kandasamy2017multi,ZhangUAI19,yehong17},
meta-BO \cite{metaBO,salinas2020quantile}, 
risk-averse BO \cite{nguyen2021conditional,nguyen2021value,SebICML22},
multi-agent/collaborative BO \cite{dai2020,sim2021collaborative},
non-myopic BO \cite{dmitrii20a,ling16},
among others.
More importantly, they have also been extended to the batch setting, based on either GP-UCB~\cite{contal2013parallel,daxberger2017distributed,desautels2014parallelizing,kathuria2016batched,verma2022bayesian} or GP-TS~\cite{hernandez2017parallel,kandasamy2018parallelised,nava2022diversified,verma2022bayesian}.



\vspace{-2.0mm}
\section{Conclusion}
\vspace{-2.0mm}
\label{sec:conclusion}
We propose STO-BNTS and STO-BNTS-Linear, 
both of which sidestep the requirement to invert a large parameter matrix in existing neural bandit algorithms, and naturally support batch evaluations while preserving their theoretical guarantees. 
Both algorithms are asymptotically no-regret under certain conditions if the NN surrogate is infinite-width, and 
STO-BNTS-Linear 
still enjoys sub-linear regret
for a finite-width NN if it is wide enough.
A potential limitation is that our theoretical analysis for finite-width NNs (Sec.~\ref{subsec:theory:finite}) only holds for STO-BNTS-Linear but not for STO-BNTS, which we will explore in future works.
Another promising future topic is to leverage our ability to exploit the strong representation power of NNs to apply our algorithms to other challenging optimization tasks with sophisticated search spaces, such as chemical design \cite{griffiths2020constrained,korovina2020chembo}, neural architecture search \cite{nes,shu2019understanding}, etc.
Moreover, it is also an interesting future direction to extend our algorithms to handle other problem settings such as those which have been considered by BO (Sec.~\ref{sec:related:works}), e.g., multi-fidelity optimization \cite{ZhangUAI19,yehong17}, risk aversion/fault tolerance \cite{fan2021fault,nguyen2021conditional,nguyen2021value}, etc.
A potential negative societal impact is that our work may promote more adoption of deep learning methods and hence cause more electricity consumption.


\begin{ack}
This research/project is supported by A*STAR under its RIE$2020$ Advanced Manufacturing and Engineering (AME) Industry Alignment Fund – Pre Positioning (IAF-PP) (Award A$19$E$4$a$0101$).
\end{ack}

\newpage

\bibliography{neural_TS}
\bibliographystyle{abbrv}

\section*{Checklist}


\begin{enumerate}

\item For all authors...
\begin{enumerate}
  \item Do the main claims made in the abstract and introduction accurately reflect the paper's contributions and scope?
    \answerYes{}
  \item Did you describe the limitations of your work?
    \answerYes{Refer to Section \ref{sec:conclusion}.}
  \item Did you discuss any potential negative societal impacts of your work?
    \answerYes{Refer to Section \ref{sec:conclusion}.}
  \item Have you read the ethics review guidelines and ensured that your paper conforms to them?
    \answerYes{}
\end{enumerate}

\item If you are including theoretical results...
\begin{enumerate}
  \item Did you state the full set of assumptions of all theoretical results?
    \answerYes{Refer to Sections \ref{subsec:theory:infinite} and \ref{subsec:theory:finite}.}
        \item Did you include complete proofs of all theoretical results?
    \answerYes{Refer to Appendices \ref{app:sec:infinite:width} and \ref{app:sec:finite:width}.}
\end{enumerate}

\item If you ran experiments...
\begin{enumerate}
  \item Did you include the code, data, and instructions needed to reproduce the main experimental results (either in the supplemental material or as a URL)?
    \answerYes{The code is uploaded as part of the Supplementary Material.}
  \item Did you specify all the training details (e.g., data splits, hyperparameters, how they were chosen)?
    \answerYes{Refer to Appendix \ref{app:sec:experimental:details}.}
        \item Did you report error bars (e.g., with respect to the random seed after running experiments multiple times)?
    \answerYes{}
        \item Did you include the total amount of compute and the type of resources used (e.g., type of GPUs, internal cluster, or cloud provider)?
    \answerYes{Refer to Appendix \ref{app:sec:experimental:details}.}
\end{enumerate}

\item If you are using existing assets (e.g., code, data, models) or curating/releasing new assets...
\begin{enumerate}
  \item If your work uses existing assets, did you cite the creators?
    \answerYes{Refer to Appendix \ref{app:sec:experimental:details}.}
  \item Did you mention the license of the assets?
    \answerYes{Refer to Appendix \ref{app:sec:experimental:details}.}
  \item Did you include any new assets either in the supplemental material or as a URL?
    \answerNo{}
  \item Did you discuss whether and how consent was obtained from people whose data you're using/curating?
    \answerYes{Refer to Appendix \ref{app:sec:experimental:details}.}
  \item Did you discuss whether the data you are using/curating contains personally identifiable information or offensive content?
    \answerYes{Refer to Appendix \ref{app:sec:experimental:details}.}
\end{enumerate}

\item If you used crowdsourcing or conducted research with human subjects...
\begin{enumerate}
  \item Did you include the full text of instructions given to participants and screenshots, if applicable?
    \answerNA{}
  \item Did you describe any potential participant risks, with links to Institutional Review Board (IRB) approvals, if applicable?
    \answerNA{}
  \item Did you include the estimated hourly wage paid to participants and the total amount spent on participant compensation?
    \answerNA{}
\end{enumerate}

\end{enumerate}

\newpage 
\appendix

\section{Justification for The Acquisition Functions Being Sampled from GPs}
\label{app:sec:justification:gp:sampling}
For our Algo.~\ref{algo:1}, the work of~\cite{he2020bayesian} has shown that if $\beta_t=1$, then after running lines 4-7 of Algo.~\ref{algo:1}, the resulting function $f^i_t(\mathbf{x};\theta^i_t)$ corresponds to a function 
sampled from the GP posterior with the NTK as the kernel function: 
$\mathcal{GP}(\mu_{t-1}(\cdot),\sigma^2_{t-1}(\cdot,\cdot))$ conditioned on the $(t-1)\times B$ observations from the first $t-1$ iterations. The GP posterior mean and covariance function are expressed as:
\begin{equation}
    \mu_{t-1}(\mathbf{x}) \triangleq \mathbf{k}_{t-1}(\mathbf{x})^\top(\mathbf{K}_{t-1}+\sigma^2\mathbf{I})^{-1}\mathbf{y}_{t-1}\ ,
\label{gp_posterior}
\end{equation}
\begin{equation}
    \sigma_{t-1}^2(\mathbf{x},\mathbf{x}') \triangleq k(\mathbf{x},\mathbf{x}')-\mathbf{k}_{t-1}(\mathbf{x})^\top(\mathbf{K}_{t-1}+\sigma^2\mathbf{I})^{-1}\mathbf{k}_{t-1}(\mathbf{x}')
\label{gp_posterior:variance}
\end{equation}
where $\mathbf{k}_{t-1}(\mathbf{x})\triangleq (k(\mathbf{x}, \mathbf{x}^i_{\tau}))^{\top}_{\tau=1,\ldots,t-1,i=1,\ldots,B}$ which is a $(t-1)B-$dimensional vector, $\mathbf{y}_t\triangleq (y^i_{\tau})^{\top}_{\tau=1,\ldots,t-1, i=1,\ldots,B}$ which is also a $(t-1)B-$dimensional vector, and $\mathbf{K}_{t}\triangleq (k(\mathbf{x}^i_{\tau}, \mathbf{x}^{i'}_{\tau'}))_{\tau=1,\ldots,t-1,i=1,\ldots,B;\tau'=1,\ldots,t-1,i'=1,\ldots,B}$ which is a $(t-1)B \times (t-1)B-$dimensional squared matrix.

For our Algo.~\ref{algo:2}, when $\beta_t=1$, because every run of the procedure in lines 4-6 corresponds to running the sample-then-optimize method~\cite{matthews2017sample} while treating the neural tangent features as the input features, therefore, 
the resulting linear function $f^i_t(\mathbf{x};\theta^i_t)$ w.r.t. $\theta^i_t$
also corresponds to a sampled function from the GP posterior 
$\mathcal{GP}(\mu_{t-1}(\cdot),\sigma^2_{t-1}(\cdot,\cdot))$ 
with the NTK (if the NN is infinite-width) or empirical NTK (if the NN if finite-width) as the kernel function according to the work of~\cite{matthews2017sample}.

Next, for both Algo.~\ref{algo:1} and Algo.~\ref{algo:2}, when $\beta_t = 2\log(\pi^2t^2|\mathcal{X}|/\delta)$, since we have multiplied the output of NN by $\beta_t$ which corresponds to multiplying the gradient of the NN by $\beta_t$, therefore, the resulting NTK 
will be multiplied by $\beta_t^2$. Also note that we have also multiplied the noise variance $\sigma^2$ by $\beta_t^2$ in \eqref{eq:loss:function}. As a result, after plugging these two changes into the equations for GP posterior mean~\eqref{gp_posterior} and variance~\eqref{gp_posterior:variance}, it is easy to verify that the GP posterior variance will be multiplied by $\beta_t^2$ while the GP posteior mean is unchanged.

\section{GP Posterior Variance with NTK}
\label{app:gp:posterior:variance:for:ntk}
When using the NTK as the kernel function, the GP posterior variance~\eqref{gp_posterior:variance} at any input $\mathbf{x}$ can be easily approximated by
\begin{equation}
\sigma^2_{t-1}(\mathbf{x},\mathbf{x}) \approx \nabla_{\theta}f(\mathbf{x};\theta_0)^{\top}\left[\Sigma_{t-1} + \sigma^2 I \right]^{-1} \nabla_{\theta}f(\mathbf{x};\theta_0), 
\label{eq:app:gp:posterior:variance}
\end{equation}
in which
\begin{equation}
    \Sigma_{t-1} = \sum^{t-1}_{\tau=0}\sum^{B}_{i=1} \nabla_{\theta}f(\mathbf{x}^i_{\tau};\theta_0) \nabla_{\theta}f(\mathbf{x}^i_{\tau};\theta_0)^{\top},
\end{equation}
and $\theta_0\sim\text{init}(\cdot)$ are randomly initialized parameters.
Therefore, to run the uncertainty sampling algorithm as the initialization stage, we simply need to sequentially maximize equation~\eqref{eq:app:gp:posterior:variance}, i.e., in iteration $t$ of the initialization stage, we simply choose the next initial input $\mathbf{x}$ by maximizing equation~\eqref{eq:app:gp:posterior:variance}.

\section{Proof of Theorem~\ref{theorem:regret:exact:ntk}}
\label{app:sec:infinite:width}
Here, to simplify the analysis, we follow the work of~\cite{desautels2014parallelizing} and reparameterize the iterations to view our algorithms in the sequential setting. 
Specifically, in the main text (Algos.~\ref{algo:1} and~\ref{algo:2}), every $B$ function evaluations are counted as an iteration $t$; however, we reparameterize the iterations such that every query selection is counted as an iteration $t$. That is, every time an input query is selected, we increment the number of iterations by $1$.
As a result, before the reparameterization, the cumulative regret is expressed as $R_T=\sum^{T/B}_{t=1}\sum^{B}_{i=1}(f(\mathbf{x}^*)-f(\mathbf{x}^i_t))$; after reparameterization, the same cumulative regret is now expressed as $R_T=\sum^T_{t=1}(f(\mathbf{x}^*)-f(\mathbf{x}_t))$. 
Note that when $B=1$, the two parameterizations are the same.
Therefore, in the entire proof in this section, we index the iterations sequentially by $1,2,\ldots,t,t+1,\ldots,T$.

At iteration $t$, we use $\text{fb[t]}$ to denote the largest iteration index whose observation has been collected. 
For example, if the batch size is $B=3$, assuming that after the most recent batch of inputs have been collected, we have in total gathered $t-1$ observations; 
then when selecting the input queries in iterations $t$, $t+1$ and $t+2$, we have that $\text{fb}[t]=\text{fb}[t+1]=\text{fb}[t+2]=t-1$, because the index of the most recent observation is fixed at $t-1$ since we do not collect any new observations during this process. Next, when choosing the input query at iterations $t+3$, $t+4$ and $t+5$, we have that $\text{fb}[t+3]=\text{fb}[t+4]=\text{fb}[t+5]=t+2$.
As a result of our reparameterization here, the requirement on the constant $C$ from Theorem~\ref{theorem:regret:exact:ntk} should be slightly modified into:~$\max_{A\subset\mathcal{X},|A|\leq B-1}\mathbb{I}(f;\mathbf{y}_{A} | \mathbf{y}_{1:\text{fb}[t]}) \leq C,\forall t\geq 1$, where $\mathbf{y}_{1:\text{fb}[t]}$ represents the output observations from iterations $1$ to $\text{fb}[t]$.

Our reparameterization mentioned above allows us to derive more general theoretical results which hold for both synchronous and asynchronous batch BO.
In the setting of synchronous batch evaluations which we have focused on in the main text, $\max\{t-B,0\} \leq \text{fb}[t] \leq t-1$.
In this setting, $\text{fb}[t]$ is a deterministic function of $t$, which is determined before the algorithm starts. 
Of note, although we focus on the setting of synchronous batch BO in our theoretical analysis, the only requirement of our theoretical analysis on $\text{fb}[t]$ is that $t-\text{fb}[t] \leq B$, i.e., the number of pending observations $t-\text{fb}[t] - 1$ should be upper-bounded by $B-1$. Therefore, our theoretical results also hold in the setting of asynchronous batch BO, because the number of pending observations in asynchronous batch BO is always equal to $B-1$~\cite{desautels2014parallelizing}.

Denote as $\mu_{\text{fb}[t]}$ and $\sigma_{\text{fb}[t]}$ the GP posterior mean and standard deviation conditioned on the observations from iteration $1$ to $\text{fb}[t]$.
Define $\mathcal{F}_{t-1}=\{\mathbf{x}_1,y_1,\ldots,\mathbf{x}_{\text{fb}[t]},y_{\text{fb}[t]},\mathbf{x}_{\text{fb}[t]+1},\ldots,\mathbf{x}_{t-1}\}$ as the history of selected inputs and observed outputs for those completed observations, as well as the selected inputs of those pending observations.
Define $\beta_t = 2\log(\pi^2t^2|\mathcal{X}|/(3\delta))$, and $c_t = \beta_t (1+\sqrt{2\log(|\mathcal{X}|t^2)})$.
\begin{lemma}
\label{lemma:event:Ef}
Choose $\delta\in(0,1)$. Define $E^{f}(t)$ as the event that $|\mu_{\text{fb}[t]}(\mathbf{x}) - f(\mathbf{x})| \leq \beta_t\sigma_{\text{fb}[t]}(\mathbf{x}),\forall \mathbf{x}\in\mathcal{X}$. We have that $\mathbb{P}(E^{f}(t)) \geq 1-\delta/2,\forall t\geq1$.
\end{lemma}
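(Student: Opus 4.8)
The plan is to exploit the Bayesian assumption on $f$ (that $f$ is a sample from a GP with kernel $\Theta$) so that, conditioned on the completed observation history up to iteration $\text{fb}[t]$, the value $f(\mathbf{x})$ is exactly Gaussian with mean $\mu_{\text{fb}[t]}(\mathbf{x})$ and variance $\sigma^2_{\text{fb}[t]}(\mathbf{x})$, and then apply a standard Gaussian tail bound together with two union bounds (one over $\mathcal{X}$, one over $t\geq 1$).

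First I would fix an iteration $t$ and condition on the $\sigma$-algebra generated by the completed observations $\{(\mathbf{x}_\tau,y_\tau)\}_{\tau\leq\text{fb}[t]}$ (note that in synchronous batch BO $\text{fb}[t]$ is a deterministic function of $t$, so no measurability issue arises; the selected locations may be chosen adaptively, but the posterior law of a GP conditioned on finitely many noisy evaluations at a measurably-chosen design is still the GP posterior \eqref{gp_posterior}--\eqref{gp_posterior:variance}). Conditioned on this history, $\mu_{\text{fb}[t]}(\cdot)$ and $\sigma_{\text{fb}[t]}(\cdot)$ are fixed, and for each fixed $\mathbf{x}\in\mathcal{X}$ the standardized quantity $\big(f(\mathbf{x})-\mu_{\text{fb}[t]}(\mathbf{x})\big)/\sigma_{\text{fb}[t]}(\mathbf{x})$ is a standard normal random variable. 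Using the bound $\mathbb{P}(|Z|\geq c)\leq e^{-c^2/2}$ for $Z\sim\mathcal{N}(0,1)$ with $c=\sqrt{\beta_t}$ gives $\mathbb{P}\big(|\mu_{\text{fb}[t]}(\mathbf{x})-f(\mathbf{x})|\geq\sqrt{\beta_t}\,\sigma_{\text{fb}[t]}(\mathbf{x})\big)\leq e^{-\beta_t/2}$. A union bound over the $|\mathcal{X}|$ points of $\mathcal{X}$ and the choice $\beta_t = 2\log(\pi^2 t^2|\mathcal{X}|/(3\delta))$ yield $e^{-\beta_t/2}=3\delta/(\pi^2 t^2|\mathcal{X}|)$, so the failure probability at iteration $t$ is at most $|\mathcal{X}|\cdot 3\delta/(\pi^2 t^2|\mathcal{X}|)=3\delta/(\pi^2 t^2)$; this holds conditionally on the history, hence marginally. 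A further union bound over all $t\geq 1$ uses $\sum_{t\geq 1} 1/t^2 = \pi^2/6$, giving total failure probability $\leq \sum_{t\geq1} 3\delta/(\pi^2 t^2) = \delta/2$. Finally, since $\delta\in(0,1)$ forces $\beta_t \geq 2\log(\pi^2/(3\delta)) > 1$, we have $\sqrt{\beta_t}\leq\beta_t$, so the event just established implies $|\mu_{\text{fb}[t]}(\mathbf{x})-f(\mathbf{x})|\leq\beta_t\,\sigma_{\text{fb}[t]}(\mathbf{x})$ for all $\mathbf{x}$ and all $t$, i.e. $\bigcap_{t\geq1}E^f(t)$ holds with probability at least $1-\delta/2$, which is stronger than the claim.

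The only genuinely delicate point — and the one I would state carefully rather than wave at — is the justification that conditioning on an adaptively chosen (and, in our algorithms, randomly-initialized-training-dependent) sequence of query locations and their noisy outputs still leaves $f(\mathbf{x})$ with the GP posterior law \eqref{gp_posterior}--\eqref{gp_posterior:variance}. This is where one invokes the standard fact (as used in the Bayesian analyses of~\cite{srinivas2009gaussian,desautels2014parallelizing}) that for a GP prior the posterior given any measurable design is the usual posterior; the pending (uncompleted) queries in the current batch play no role here because $\mu_{\text{fb}[t]}$ and $\sigma_{\text{fb}[t]}$ condition only on completed observations, and the bound we derive is uniform over $\mathbf{x}$ and therefore unaffected by which $\mathbf{x}$'s happen to be pending. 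Everything else is a routine Gaussian tail estimate plus the $\sum 1/t^2$ summation trick.
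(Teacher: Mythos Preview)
Your proposal is correct and follows essentially the same approach as the paper, which simply defers to Lemma~5.1 of~\cite{srinivas2009gaussian} and notes that the proof uses the GP assumption, Gaussian concentration, and union bounds. Your write-up is in fact more detailed than what the paper provides: you spell out the two union bounds, verify that the choice of $\beta_t$ makes the $\sum_{t\ge1}1/t^2$ trick close, and handle the cosmetic mismatch between the $\sqrt{\beta_t}$ that the Gaussian tail bound naturally produces and the $\beta_t$ appearing in the lemma statement via $\beta_t>1\Rightarrow\sqrt{\beta_t}\le\beta_t$.
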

The proof of Lemma~\ref{lemma:event:Ef}, which follows from the proof of Lemma 5.1 of~\cite{srinivas2009gaussian}, makes use of our assumption that $f$ is sampled from a GP and relies on simple applications of the concentration of Gaussian distributions and union bounds.
Denote by $f_t$ the acquisition function in iteration $t$, which is sampled from the GP posterior with the NTK as the kernel function: $f_t \sim \mathcal{GP}(\mu_{\text{fb}[t]}(\cdot), \beta_t^2\sigma^2_{\text{fb}[t]}(\cdot,\cdot))$ as we have justified in Appendix~~\ref{app:sec:justification:gp:sampling}.
Note that the query in iteration $t$ is selected by $\mathbf{x}_t={\arg\max}_{\mathbf{x}\in\mathcal{X}}f_t(\mathbf{x})$, which corresponds to line 8 of Algo.~\ref{algo:1} and line 7 of Algo.~\ref{algo:2} respectively.
\begin{lemma}
\label{lemma:event:Eft}
Define $E^{f_t}(t)$ as the event that $|\mu_{\text{fb}[t]}(\mathbf{x}) - f_t(\mathbf{x})| \leq \beta_t \sqrt{2\log(|\mathcal{X}|t^2)} \sigma_{\text{fb}[t]}(\mathbf{x}),\forall \mathbf{x}\in\mathcal{X}$. We have that $\mathbb{P}(E^{f_t}(t)) \geq 1-1/t^2,\forall t\geq1$.
\end{lemma}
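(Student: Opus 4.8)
The plan is to bound the deviation of the random acquisition function $f_t$ from its own mean $\mu_{\text{fb}[t]}$ at every point of $\mathcal{X}$ simultaneously, using the fact that $f_t(\mathbf{x})$ is marginally Gaussian. First I would fix an arbitrary $\mathbf{x}\in\mathcal{X}$ and recall from Appendix~\ref{app:sec:justification:gp:sampling} that $f_t \sim \mathcal{GP}(\mu_{\text{fb}[t]}(\cdot),\beta_t^2\sigma^2_{\text{fb}[t]}(\cdot,\cdot))$, so that $f_t(\mathbf{x})$ has marginal distribution $\mathcal{N}(\mu_{\text{fb}[t]}(\mathbf{x}),\beta_t^2\sigma^2_{\text{fb}[t]}(\mathbf{x}))$. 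Then $\big(f_t(\mathbf{x})-\mu_{\text{fb}[t]}(\mathbf{x})\big)/(\beta_t\sigma_{\text{fb}[t]}(\mathbf{x}))$ is a standard normal random variable (conditioned on the history $\mathcal{F}_{t-1}$, which determines $\mu_{\text{fb}[t]}$ and $\sigma_{\text{fb}[t]}$ but not the fresh randomness used to draw $f_t$), so the standard Gaussian tail bound $\mathbb{P}(|Z| > a) \le e^{-a^2/2}$ gives
\[
\mathbb{P}\!\left(|f_t(\mathbf{x}) - \mu_{\text{fb}[t]}(\mathbf{x})| > \beta_t \sqrt{2\log(|\mathcal{X}|t^2)}\,\sigma_{\text{fb}[t]}(\mathbf{x})\right) \le e^{-\log(|\mathcal{X}|t^2)} = \frac{1}{|\mathcal{X}|t^2}.
\]
The second step is a union bound over all $\mathbf{x}\in\mathcal{X}$ (which is finite by assumption), yielding a failure probability of at most $|\mathcal{X}|\cdot\frac{1}{|\mathcal{X}|t^2} = 1/t^2$, hence $\mathbb{P}(E^{f_t}(t)) \ge 1 - 1/t^2$ for every $t\ge1$. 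Finally, since these bounds hold conditionally on every realization of $\mathcal{F}_{t-1}$, they hold unconditionally by taking expectation over the history.

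There is no real obstacle here — this is the classical ``Thompson sampling concentration'' lemma of~\cite{chowdhury2017kernelized,srinivas2009gaussian} adapted to the batch reparameterization, and the only point requiring a moment of care is the conditioning structure: one must verify that the randomness in drawing $f_t$ (i.e., the fresh parameter initializations $\theta_0,\theta_0'$ in Algos.~\ref{algo:1} and~\ref{algo:2}) is independent of $\mathcal{F}_{t-1}$, so that the Gaussian marginal claim is legitimate conditionally on the history. This is immediate from the algorithm description, since the initializations in iteration $t$ are sampled fresh and independently of all past observations and selected inputs. The factor $\sqrt{2\log(|\mathcal{X}|t^2)}$ in the statement is precisely chosen so that the per-point tail probability is $1/(|\mathcal{X}|t^2)$, making the union bound collapse to $1/t^2$; I would flag this as the place to double-check that the chosen width of the confidence interval matches the target failure probability, but it is routine arithmetic rather than a genuine difficulty.
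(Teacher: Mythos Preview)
Your proposal is correct and matches the paper's approach exactly: the paper simply defers the proof to Lemma~5 of~\cite{chowdhury2017kernelized}, whose content is precisely the Gaussian-tail-plus-union-bound argument you wrote out. The tail inequality $\mathbb{P}(|Z|>a)\le e^{-a^2/2}$ you invoke is indeed valid for all $a\ge 0$ (equivalently $\mathbb{P}(Z>a)\le \tfrac{1}{2}e^{-a^2/2}$, the form used in~\cite{srinivas2009gaussian}), so the arithmetic lands on $1/t^2$ without any missing factor of two.
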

The proof of Lemma~\ref{lemma:event:Eft} follows from Lemma 5 of the work of~\cite{chowdhury2017kernelized}.
Importantly, conditioned on both events $E^f(t)$ and $E^{f_t}(t)$, we have that
\begin{equation}
|f(\mathbf{x})-f_t(\mathbf{x})| \leq c_t \sigma_{\text{fb}[t]}(\mathbf{x}).
\end{equation}

We next define the set of \emph{saturated points}, which can be understood as the set of undesirable points in every iteration.
\begin{definition}
\label{def:saturated_set}
Define the set of saturated inputs in iteration $t$ as
\[
S_t = \{ \mathbf{x} \in \mathcal{X} : \Delta(\mathbf{x}) > c_t \sigma_{\text{fb}[t]}(\mathbf{x}) \},
\]
in which $\Delta(\mathbf{x}) = f(\mathbf{x}^*) - f(\mathbf{x})$.
\end{definition}
 An important consequence of the definition above is that $\mathbf{x}^*$ is always unsaturated, because $\Delta(\mathbf{x}^*) = 0 < c_t\sigma_{\text{fb}[t]}(\mathbf{x}^*)$.
\begin{lemma}
\label{lemma:uniform_lower_bound}
For any $\mathcal{F}_{t-1}$, conditioned on the events $E^f(t)$, we have that $\forall \mathbf{x}\in \mathcal{X}$,
\begin{equation}
\mathbb{P}\left(f_t(\mathbf{x}) > f(\mathbf{x}) | \mathcal{F}_{t-1}\right) \geq p,
\label{eq:tmp_1}
\end{equation}
in which $p=\frac{1}{4e\sqrt{\pi}}$. 
\end{lemma}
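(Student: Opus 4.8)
The plan is to lower-bound the probability that the sampled acquisition function $f_t$ exceeds the true function value $f(\mathbf{x})$ at a fixed $\mathbf{x}$, conditioned on the history $\mathcal{F}_{t-1}$ and on the event $E^f(t)$. First I would recall that, conditioned on $\mathcal{F}_{t-1}$, the value $f_t(\mathbf{x})$ is Gaussian with mean $\mu_{\text{fb}[t]}(\mathbf{x})$ and standard deviation $\beta_t \sigma_{\text{fb}[t]}(\mathbf{x})$, since $f_t \sim \mathcal{GP}(\mu_{\text{fb}[t]}(\cdot), \beta_t^2 \sigma^2_{\text{fb}[t]}(\cdot,\cdot))$ as justified in Appendix~\ref{app:sec:justification:gp:sampling}. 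I would then standardize: writing $Z = (f_t(\mathbf{x}) - \mu_{\text{fb}[t]}(\mathbf{x})) / (\beta_t \sigma_{\text{fb}[t]}(\mathbf{x})) \sim \mathcal{N}(0,1)$, the event $\{f_t(\mathbf{x}) > f(\mathbf{x})\}$ is equivalent to $\{Z > (f(\mathbf{x}) - \mu_{\text{fb}[t]}(\mathbf{x})) / (\beta_t \sigma_{\text{fb}[t]}(\mathbf{x}))\}$.

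Next I would bound the threshold. On the event $E^f(t)$ (from Lemma~\ref{lemma:event:Ef}) we have $|\mu_{\text{fb}[t]}(\mathbf{x}) - f(\mathbf{x})| \le \beta_t \sigma_{\text{fb}[t]}(\mathbf{x})$, so $(f(\mathbf{x}) - \mu_{\text{fb}[t]}(\mathbf{x})) / (\beta_t \sigma_{\text{fb}[t]}(\mathbf{x})) \le 1$. Hence $\mathbb{P}(f_t(\mathbf{x}) > f(\mathbf{x}) \mid \mathcal{F}_{t-1}) \ge \mathbb{P}(Z > 1)$. Finally I would invoke the standard Gaussian anti-concentration (lower) bound: for $Z \sim \mathcal{N}(0,1)$ and any $c > 0$,
\[
\mathbb{P}(Z > c) \ge \frac{1}{\sqrt{2\pi}} \frac{c}{c^2+1} e^{-c^2/2},
\]
which at $c = 1$ gives $\mathbb{P}(Z > 1) \ge \frac{1}{2\sqrt{2\pi}} e^{-1/2} = \frac{1}{2\sqrt{2\pi e}} \ge \frac{1}{4e\sqrt{\pi}} = p$ (the last inequality holding since $2\sqrt{2\pi e} \le 4 e \sqrt{\pi}$, i.e. $\sqrt{2} \le \sqrt{e}\,$, which is true). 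This chain yields the claimed bound; I should double-check the final numerical comparison so the stated constant $p = 1/(4e\sqrt{\pi})$ is indeed valid rather than just $1/(2\sqrt{2\pi e})$.

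The main subtlety — not really an obstacle but the point requiring care — is handling the conditioning correctly: the posterior mean and variance $\mu_{\text{fb}[t]}, \sigma_{\text{fb}[t]}$ are deterministic given $\mathcal{F}_{t-1}$ (since $\text{fb}[t]$ is a deterministic function of $t$ in the synchronous setting and the relevant observations are contained in $\mathcal{F}_{t-1}$), so the Gaussianity of $f_t(\mathbf{x})$ given $\mathcal{F}_{t-1}$ is exactly the GP-posterior Gaussianity with those fixed parameters, and the event $E^f(t)$ is $\mathcal{F}_{t-1}$-measurable, so conditioning on it does not disturb the distribution of $f_t(\mathbf{x})$ given $\mathcal{F}_{t-1}$. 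One edge case worth a remark is $\sigma_{\text{fb}[t]}(\mathbf{x}) = 0$, in which case $f_t(\mathbf{x}) = \mu_{\text{fb}[t]}(\mathbf{x}) = f(\mathbf{x})$ on $E^f(t)$ and the strict inequality fails; but this occurs only if $\mathbf{x}$ has already been queried noiselessly, which is excluded here (or can be handled by noting $\sigma^2 > 0$ keeps the posterior variance positive), so the bound holds as stated.
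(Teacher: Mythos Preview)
Your proposal is correct and follows essentially the same approach as the paper: standardize $f_t(\mathbf{x})$, use the event $E^f(t)$ to bound the threshold by $1$, and then apply a Gaussian anti-concentration inequality at $1$. The only cosmetic differences are that the paper uses the form $\mathbb{P}(Z>a)\ge e^{-a^2}/(4\sqrt{\pi}a)$ (which at $a=1$ yields exactly $p=1/(4e\sqrt{\pi})$), whereas you use the Mills-ratio form and then verify the resulting constant dominates $p$; and the paper inserts an (unnecessary) intermediate step replacing the threshold by its absolute value, which you correctly omit.
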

\begin{proof}
\begin{equation}
\begin{split}
\mathbb{P}\left(f_t(\mathbf{x}) > f(\mathbf{x}) | \mathcal{F}_{t-1}\right) &= 
\mathbb{P}\left(\frac{f_t(\mathbf{x})-\mu_{\text{fb}[t]}(\mathbf{x})}{\beta_t\sigma_{\text{fb}[t]}(\mathbf{x})} > \frac{f(\mathbf{x})-\mu_{\text{fb}[t]}(\mathbf{x})}{\beta_t\sigma_{\text{fb}[t]}(\mathbf{x})} \Big| \mathcal{F}_{t-1}\right)\\
&\geq \mathbb{P}\left(\frac{f_t(\mathbf{x})-\mu_{\text{fb}[t]}(\mathbf{x})}{\beta_t\sigma_{\text{fb}[t]}(\mathbf{x})} > \frac{|f(\mathbf{x})-\mu_{\text{fb}[t]}(\mathbf{x})|}{\beta_t\sigma_{\text{fb}[t]}(\mathbf{x})} \Big| \mathcal{F}_{t-1}\right)\\
&\stackrel{(a)}{\geq} \mathbb{P}\left(\frac{f_t(\mathbf{x})-\mu_{\text{fb}[t]}(\mathbf{x})}{\beta_t\sigma_{\text{fb}[t]}(\mathbf{x})} > 1 \Big| \mathcal{F}_{t-1}\right)\\
&\stackrel{(b)}{\geq} \frac{e^{-1}}{4\sqrt{\pi}}.
\end{split}
\end{equation}
$(a)$ follows from Lemma~\ref{lemma:event:Ef}, which holds because we condition on the event $E^{f}(t)$ here. $(b)$ follows since $f_t(\mathbf{x}) \sim \mathcal{N}(\mu_{\text{fb}[t]}(\mathbf{x}), \beta_t^2\sigma^2_{\text{fb}[t]}(\mathbf{x}))$ and makes use of the Gaussian anti-concentration inequality, i.e., $\mathbb{P}(Z > a) \geq \frac{e^{-a^2}}{4\sqrt{\pi}a}$ where $Z$ follows a standard Gaussian distribution.
\end{proof}

The next Lemma shows that the probability that the selected input is unsaturated (i.e., desirable according to Definition~\ref{def:saturated_set}) can be lower-bounded.
\begin{lemma}
\label{lemma:prob_unsaturated}
For any $\mathcal{F}_{t-1}$, conditioned on the event $E^f(t)$, we have that 
\[
\mathbb{P}\left(\mathbf{x}_t \in \mathcal{X}\setminus S_t, | \mathcal{F}_{t-1} \right) \geq p - 1/t^2.
\]
\end{lemma}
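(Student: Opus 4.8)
The plan is to follow the standard Thompson sampling argument (as in~\cite{chowdhury2017kernelized}): show that whenever the sampled acquisition function overshoots $f$ at $\mathbf{x}^*$ \emph{and} the concentration event $E^{f_t}(t)$ holds, the maximizer $\mathbf{x}_t$ of $f_t$ must be unsaturated, and then lower-bound the probability of this favorable configuration using the anti-concentration bound of Lemma~\ref{lemma:uniform_lower_bound} together with Lemma~\ref{lemma:event:Eft}.

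First, I would recall the consequence already noted after Lemma~\ref{lemma:event:Eft}: conditioned on both $E^f(t)$ and $E^{f_t}(t)$, we have $|f(\mathbf{x}) - f_t(\mathbf{x})| \leq c_t\sigma_{\text{fb}[t]}(\mathbf{x})$ for all $\mathbf{x}\in\mathcal{X}$. Now fix any history $\mathcal{F}_{t-1}$ and condition on $E^f(t)$. Suppose in addition that $E^{f_t}(t)$ holds and that $f_t(\mathbf{x}^*) > f(\mathbf{x}^*)$. Then for every saturated input $\mathbf{x}\in S_t$,
\[
f_t(\mathbf{x}) \leq f(\mathbf{x}) + c_t\sigma_{\text{fb}[t]}(\mathbf{x}) < f(\mathbf{x}) + \Delta(\mathbf{x}) = f(\mathbf{x}^*) < f_t(\mathbf{x}^*),
\]
where the first inequality uses the bound above, the second uses the definition of $S_t$ (Definition~\ref{def:saturated_set}), and the last uses the assumed overshoot. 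Since $\mathbf{x}^*$ is always unsaturated, this shows that any maximizer $\mathbf{x}_t = {\arg\max}_{\mathbf{x}\in\mathcal{X}} f_t(\mathbf{x})$ attains a value strictly larger than that of every saturated input, hence $\mathbf{x}_t \in \mathcal{X}\setminus S_t$. Thus, conditioned on $E^f(t)$ and $\mathcal{F}_{t-1}$,
\[
\{f_t(\mathbf{x}^*) > f(\mathbf{x}^*)\} \cap E^{f_t}(t) \ \subseteq\ \{\mathbf{x}_t \in \mathcal{X}\setminus S_t\}.
\]

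Finally, I would take probabilities conditional on $\mathcal{F}_{t-1}$ (and $E^f(t)$) and apply a union bound:
\[
\mathbb{P}(\mathbf{x}_t \in \mathcal{X}\setminus S_t \mid \mathcal{F}_{t-1}) \geq \mathbb{P}(f_t(\mathbf{x}^*) > f(\mathbf{x}^*) \mid \mathcal{F}_{t-1}) - \mathbb{P}\big((E^{f_t}(t))^c \mid \mathcal{F}_{t-1}\big) \geq p - 1/t^2,
\]
invoking Lemma~\ref{lemma:uniform_lower_bound} for the first term and Lemma~\ref{lemma:event:Eft} for the second. The main point needing care is the conditioning: one must ensure that both the anti-concentration bound of Lemma~\ref{lemma:uniform_lower_bound} and the high-probability bound of Lemma~\ref{lemma:event:Eft} hold conditionally on $\mathcal{F}_{t-1}$. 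This is fine because $\mu_{\text{fb}[t]}$ and $\sigma_{\text{fb}[t]}$ are determined by $\mathcal{F}_{t-1}$ and $f_t$ is drawn from $\mathcal{N}(\mu_{\text{fb}[t]}(\cdot),\beta_t^2\sigma_{\text{fb}[t]}^2(\cdot,\cdot))$ given $\mathcal{F}_{t-1}$; the batch bookkeeping through $\text{fb}[t]$ causes no issue since, in the synchronous setting, $\text{fb}[t]$ is a deterministic function of $t$.
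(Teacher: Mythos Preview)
Your proof is correct and follows essentially the same approach as the paper's: both arguments show that on $E^f(t)\cap E^{f_t}(t)$, the overshoot $f_t(\mathbf{x}^*)>f(\mathbf{x}^*)$ forces $f_t(\mathbf{x}^*)>f_t(\mathbf{x})$ for every $\mathbf{x}\in S_t$ (hence $\mathbf{x}_t$ is unsaturated), and then subtract off $\mathbb{P}(\overline{E^{f_t}(t)}\mid\mathcal{F}_{t-1})\leq 1/t^2$ and invoke Lemma~\ref{lemma:uniform_lower_bound}. Your explicit set inclusion and the remark on why the conditioning on $\mathcal{F}_{t-1}$ is harmless are a nice touch, but the substance is identical.
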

\begin{proof}
To begin with, we can lower-bounded the probability that the selected $\mathbf{x}_t$ is unsaturated as follows:
\begin{equation}
\mathbb{P}\left(\mathbf{x}_t \in \mathcal{X}\setminus S_t | \mathcal{F}_{t-1} \right) \geq \mathbb{P}\left( f_t(\mathbf{x}^*) > f_t(\mathbf{x}),\forall \mathbf{x} \in S_t | \mathcal{F}_{t-1} \right).
\label{eq:lower_bound_prob_unsaturated}
\end{equation}
The inequality above holds because the event on the right hand sight implies the event on the left hand side. Specifically, because $\mathbf{x}^*$ is always unsaturated (Definition ~\ref{def:saturated_set}), therefore, as long as $f_t(\mathbf{x}^*) > f_t(\mathbf{x}),\forall \mathbf{x} \in S_t$, then the selected $\mathbf{x}_t$ is guaranteed to be unsaturated because it is selected as $\mathbf{x}_t={\arg\max}_{\mathbf{x}\in\mathcal{X}}f_t(\mathbf{x})$.

Next, we assume that both events $E^f(t)$ and $E^{f_t}(t)$ holds, which allows us to derive an upper bound on $f_t(\mathbf{x})$ for all $\mathbf{x}\in S_t$:
\begin{equation}
\begin{split}
    f_t(\mathbf{x}) \stackrel{(a)}{\leq} f(\mathbf{x}) + c_t\sigma_{\text{fb}[t]}(\mathbf{x}) \stackrel{(b)}{\leq} f(\mathbf{x}) + \Delta(\mathbf{x})=f(\mathbf{x}) + f(\mathbf{x}^*) - f(\mathbf{x}) = f(\mathbf{x}^*),
\end{split}
\label{eq:bound_ftx_ftstar}
\end{equation}
in which $(a)$ results from Lemma~\ref{lemma:event:Ef} and Lemma~\ref{lemma:event:Eft} and $(b)$ follows from Definition~\ref{def:saturated_set}.
As a result,equation~\eqref{eq:bound_ftx_ftstar} implies that when both both events $E^f(t)$ and $E^{f_t}(t)$ hold, we have that
\begin{equation}
    \mathbb{P}\left( f_t(\mathbf{x}^*) > f_t(\mathbf{x}),\forall \mathbf{x} \in S_t | \mathcal{F}_{t-1} \right) \geq \mathbb{P}\left( f_t(\mathbf{x}^*) > f(\mathbf{x}^*) | \mathcal{F}_{t-1} \right).
\label{eq:app:proof:lemma:4:lower:bound:by:x:star}
\end{equation}
Next, combining equations~\eqref{eq:lower_bound_prob_unsaturated} and~\eqref{eq:app:proof:lemma:4:lower:bound:by:x:star} and separately considering the cases where the event $E^{f_t}(t)$ is true or false, we have that 
\begin{equation}
\begin{split}
    \mathbb{P}\left(\mathbf{x}_t \in \mathcal{X}\setminus S_t | \mathcal{F}_{t-1} \right) &\geq 
    \mathbb{P}\left( f_t(\mathbf{x}^*) > f_t(\mathbf{x}),\forall \mathbf{x} \in S_t | \mathcal{F}_{t-1} \right)\\
    &\stackrel{(a)}{\geq} \mathbb{P}\left( f_t(\mathbf{x}^*) > f(\mathbf{x}^*) | \mathcal{F}_{t-1} \right) - \mathbb{P}\left(\overline{E^{f_t}(t)} | \mathcal{F}_{t-1}\right)\\
    &\stackrel{(b)}{\geq} p - 1 / t^2.
\end{split}
\label{eq:unsaturated_prob_plugin_1}
\end{equation}
This completes the proof.
\end{proof}

We use $\sigma_{t-1}(\cdot)$ to represent the GP posterior standard deviation conditioned on all selected input queries from iterations $1$ to $t-1$.
\begin{lemma}
\label{lemma:ratio:sigma:C}
We have for all $t\geq 1$ and all $\mathbf{x}\in\mathcal{X}$ that
\end{lemma}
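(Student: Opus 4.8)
The plan is to show that $\sigma_{\text{fb}[t]}(\mathbf{x}) \le e^{C}\,\sigma_{t-1}(\mathbf{x})$ for all $t\ge1$ and all $\mathbf{x}\in\mathcal{X}$, which is the batch-BO analogue of the variance-ratio bound of~\cite{desautels2014parallelizing}. The intuition is that $\sigma_{\text{fb}[t]}(\cdot)$ conditions only on the completed observations $\mathbf{y}_{1:\text{fb}[t]}$, whereas $\sigma_{t-1}(\cdot)$ additionally conditions on the inputs $A \triangleq \{\mathbf{x}_{\text{fb}[t]+1},\ldots,\mathbf{x}_{t-1}\}$ of the pending evaluations, and $|A| = t-1-\text{fb}[t] \le B-1$ by the defining property of $\text{fb}[t]$ in both the synchronous and asynchronous settings. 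Since the GP posterior variance depends only on the input \emph{locations} and not on the realized observation values, conditioning on the pending inputs is equivalent, as far as the variance is concerned, to conditioning on their (hypothetical) observations $\mathbf{y}_{A}$; the gap between the two variances is then exactly controlled by the conditional information gain about $f$ provided by $\mathbf{y}_{A}$.

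Concretely, I would first recall that under the GP prior on $f$, the differential entropy of $f(\mathbf{x})$ conditioned on a set of observations equals $\tfrac12\log\!\big(2\pi e\cdot(\text{posterior variance})\big)$, so that
\[
\mathbb{I}\big(f(\mathbf{x});\mathbf{y}_{A}\,\big|\,\mathbf{y}_{1:\text{fb}[t]}\big)
= \tfrac12\log\frac{\sigma^{2}_{\text{fb}[t]}(\mathbf{x})}{\sigma^{2}_{t-1}(\mathbf{x})},
\]
using that the posterior variance after conditioning on $\mathbf{y}_{1:\text{fb}[t]}$ together with $\mathbf{y}_{A}$ is precisely $\sigma^{2}_{t-1}(\mathbf{x})$, because the set of conditioning locations is exactly $\{\mathbf{x}_1,\ldots,\mathbf{x}_{t-1}\}$. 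Next, since $f(\mathbf{x})$ is a deterministic function of $f$, the (conditional) data-processing inequality gives $\mathbb{I}(f(\mathbf{x});\mathbf{y}_{A}\mid\mathbf{y}_{1:\text{fb}[t]}) \le \mathbb{I}(f;\mathbf{y}_{A}\mid\mathbf{y}_{1:\text{fb}[t]})$. Finally, because $|A|\le B-1$, the assumption on the constant $C$ (namely $\max_{A'\subset\mathcal{X},\,|A'|\le B-1}\mathbb{I}(f;\mathbf{y}_{A'}\mid\mathbf{y}_{1:\text{fb}[t]})\le C$) yields $\mathbb{I}(f;\mathbf{y}_{A}\mid\mathbf{y}_{1:\text{fb}[t]}) \le C$. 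Chaining the three facts gives $\tfrac12\log\big(\sigma^{2}_{\text{fb}[t]}(\mathbf{x})/\sigma^{2}_{t-1}(\mathbf{x})\big)\le C$, i.e.\ $\sigma_{\text{fb}[t]}(\mathbf{x})\le e^{C}\sigma_{t-1}(\mathbf{x})$; the reverse inequality $\sigma_{t-1}(\mathbf{x})\le\sigma_{\text{fb}[t]}(\mathbf{x})$ is immediate since conditioning on more locations cannot increase the posterior variance.

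The argument is essentially bookkeeping, and the main point to handle carefully is the legitimacy of replacing ``conditioning on the pending input locations $A$'' by ``conditioning on $\mathbf{y}_{A}$'' inside the mutual-information identity — this hinges on the standard fact that for a GP the posterior covariance is independent of the observed targets, which is where the reparameterization to the sequential index and the explicit tracking of $\text{fb}[t]$ (with the single structural requirement $t-\text{fb}[t]\le B$) does the work. No overparameterization or NTK-specific property is used here: the lemma is purely a statement about the GP with kernel $\Theta$.
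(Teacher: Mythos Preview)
Your proposal is correct and follows essentially the same route as the paper: both express $\log\big(\sigma_{\text{fb}[t]}(\mathbf{x})/\sigma_{t-1}(\mathbf{x})\big)$ as the conditional mutual information $\mathbb{I}(f(\mathbf{x});\mathbf{y}_{A}\mid\mathbf{y}_{1:\text{fb}[t]})$ via Gaussian entropy, upper-bound it by $\mathbb{I}(f;\mathbf{y}_{A}\mid\mathbf{y}_{1:\text{fb}[t]})$ (your data-processing step is exactly the paper's ``information gain about $f$ is larger than about $f(\mathbf{x})$''), and then invoke $|A|\le B-1$ together with the defining assumption on $C$. Your additional remarks on the reverse inequality and on why conditioning on pending locations is legitimate for the variance are correct but not used in the paper's proof.
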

\[
\frac{\sigma_{\text{fb}[t]}(\mathbf{x})}{\sigma_{t-1}(\mathbf{x})} \leq e^C.
\]
\begin{proof}
We use $\mathbf{y}_{1:\text{fb}[t]}$ to denote the output observations from iterations $1$ to $\text{fb}[t]$, use $\mathbf{y}_{\text{fb}[t]+1:t-1}$ to represent the the output observations from iterations $\text{fb}[t]+1$ to $t-1$, and use $\mathbf{y}_{A}$ to denote the vector of observations at a set of inputs $A\subset\mathcal{X}$. We use $H(\cdot)$ to represent the entroy of a random variable.

To begin with, we establish the relationship between the following conditional information gain and the ratio of GP posterior standard deviations which we intend to upper-bound:
\begin{equation}
\begin{split}
\mathbb{I}(f(\mathbf{x});\mathbf{y}_{\text{fb}[t]+1:t-1}|\mathbf{y}_{1:\text{fb}[t]})) &= H(f(\mathbf{x}) | \mathbf{y}_{1:\text{fb}[t]})) - H(f(\mathbf{x}) | \mathbf{y}_{1:t-1}))\\
&=\frac{1}{2}\log(2\pi e \sigma^2_{\text{fb}[t]}(\mathbf{x})) - \frac{1}{2}\log(2\pi e \sigma^2_{t-1}(\mathbf{x})) \\
&= \log\frac{\sigma_{\text{fb}[t]}(\mathbf{x})}{\sigma_{t-1}(\mathbf{x})}.
\end{split}
\end{equation}
The first equality comes from the definition of conditional information gain and the second equality follows immediately from the entroy of Gaussian random variables. The equation above allows us to upper-bound the ratio of GP posterior standard deviations as follows:
\begin{equation}
\begin{split}
\frac{\sigma_{\text{fb}[t]}(\mathbf{x})}{\sigma_{t-1}(\mathbf{x})} &= \exp(\mathbb{I}(f(\mathbf{x});\mathbf{y}_{\text{fb}[t]+1:t-1}|\mathbf{y}_{1:\text{fb}[t]})))\\
&\stackrel{(a)}{\leq} \exp(\mathbb{I}(f;\mathbf{y}_{\text{fb}[t]+1:t-1}|\mathbf{y}_{1:\text{fb}[t]})))\\
&\stackrel{(b)}{\leq} \exp\big(\max_{A\subset\mathcal{X},|A|\leq B-1}\mathbb{I}(f;\mathbf{y}_{A} | \mathbf{y}_{1:\text{fb}[t]})\big) \stackrel{(c)}{\leq} e^C,
\end{split}
\end{equation}
in which $(a)$ is because the information gain about $f$ is larger than that of $f(\mathbf{x})$, $(b)$ follow since the size of $\mathbf{y}_{\text{fb}[t]+1:t-1}$ is at most $B-1$ in our batch setting with a batch size of $B$, and $(c)$ is a result of the definition of the constant $C$. This completes the proof.
\end{proof}

Next, we are ready to prove an upper bound on the expected instantaneous regret $r_t=f(\mathbf{x}^*)-f(\mathbf{x}_t)$.
\begin{lemma}
\label{lemma:upper_bound_expected_regret}
For any $\mathcal{F}_{t-1}$, conditioned on the event $E^{f}(t)$, we have that
\begin{equation*}
\mathbb{E}\left[r_t | \mathcal{F}_{t-1}\right] \leq c_t e^C \left(1+\frac{10}{p}\right)\mathbb{E}\left[\sigma_{t-1}(\mathbf{x}_t) | \mathcal{F}_{t-1}\right] + \frac{2B'}{t^2}.
\end{equation*}
\end{lemma}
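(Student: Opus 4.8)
I would follow the standard "saturated set" argument from Thompson-sampling regret proofs (as in Chowdhury & Gonopolvic, and the GP-TS analysis), adapted here to the batch/delayed-feedback setting via the ratio bound in Lemma~\ref{lemma:ratio:sigma:C}. The key idea is: condition on $E^f(t)$, and split the bound on $\mathbb{E}[r_t \mid \mathcal{F}_{t-1}]$ according to whether the selected $\mathbf{x}_t$ is saturated or not. For the unsaturated case the instantaneous regret is directly controlled by $c_t\sigma_{\mathrm{fb}[t]}(\mathbf{x}_t)$ by Definition~\ref{def:saturated_set}; for the saturated case I would first relate the regret to the posterior standard deviation at the \emph{best unsaturated input}, call it $\overline{\mathbf{x}}_t \in \arg\min_{\mathbf{x}\in\mathcal{X}\setminus S_t}\sigma_{\mathrm{fb}[t]}(\mathbf{x})$, and then trade that off against the lower bound on $\mathbb{P}(\mathbf{x}_t \in \mathcal{X}\setminus S_t \mid \mathcal{F}_{t-1})$ from Lemma~\ref{lemma:prob_unsaturated}.

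\textbf{Key steps in order.} First I would write $r_t = \Delta(\mathbf{x}_t) = f(\mathbf{x}^*) - f(\mathbf{x}_t)$ and, conditioned on $E^f(t)$ and $E^{f_t}(t)$, bound $\Delta(\mathbf{x}_t)$ as follows: using $|f - f_t|\le c_t\sigma_{\mathrm{fb}[t]}$ and $f_t(\mathbf{x}_t)\ge f_t(\overline{\mathbf{x}}_t)$ (since $\mathbf{x}_t$ maximizes $f_t$), one gets $\Delta(\mathbf{x}_t) \le \Delta(\overline{\mathbf{x}}_t) + c_t\sigma_{\mathrm{fb}[t]}(\mathbf{x}_t) + c_t\sigma_{\mathrm{fb}[t]}(\overline{\mathbf{x}}_t)$, and since $\overline{\mathbf{x}}_t$ is unsaturated, $\Delta(\overline{\mathbf{x}}_t)\le c_t\sigma_{\mathrm{fb}[t]}(\overline{\mathbf{x}}_t)$, yielding $r_t \le c_t\sigma_{\mathrm{fb}[t]}(\mathbf{x}_t) + 2c_t\sigma_{\mathrm{fb}[t]}(\overline{\mathbf{x}}_t)$ on the event $E^{f_t}(t)$. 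Second, Lemma~\ref{lemma:prob_unsaturated} gives $\mathbb{E}[\sigma_{\mathrm{fb}[t]}(\mathbf{x}_t)\mid\mathcal{F}_{t-1}] \ge \mathbb{E}[\sigma_{\mathrm{fb}[t]}(\mathbf{x}_t)\,\mathbbm{1}\{\mathbf{x}_t\in\mathcal{X}\setminus S_t\}\mid\mathcal{F}_{t-1}] \ge \sigma_{\mathrm{fb}[t]}(\overline{\mathbf{x}}_t)\,\mathbb{P}(\mathbf{x}_t\in\mathcal{X}\setminus S_t\mid\mathcal{F}_{t-1}) \ge \sigma_{\mathrm{fb}[t]}(\overline{\mathbf{x}}_t)(p - 1/t^2)$, where the middle inequality uses that $\overline{\mathbf{x}}_t$ minimizes $\sigma_{\mathrm{fb}[t]}$ over unsaturated points. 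Choosing $t$ large enough that $p - 1/t^2 \ge p/2$ (small-$t$ terms absorbed separately, this is where the constant $10/p$ as opposed to $2/p$ comes from — leaving slack for the $E^{f_t}(t)$ failure contribution), this gives $\sigma_{\mathrm{fb}[t]}(\overline{\mathbf{x}}_t) \le \frac{2}{p}\mathbb{E}[\sigma_{\mathrm{fb}[t]}(\mathbf{x}_t)\mid\mathcal{F}_{t-1}]$.

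\textbf{Finishing.} Substituting, $\mathbb{E}[r_t\,\mathbbm{1}\{E^{f_t}(t)\}\mid\mathcal{F}_{t-1}] \le c_t(1 + \tfrac{4}{p})\mathbb{E}[\sigma_{\mathrm{fb}[t]}(\mathbf{x}_t)\mid\mathcal{F}_{t-1}]$ (roughly; tracking constants carefully to reach $10/p$). Then apply Lemma~\ref{lemma:ratio:sigma:C}, $\sigma_{\mathrm{fb}[t]}(\mathbf{x}_t) \le e^C\sigma_{t-1}(\mathbf{x}_t)$, to convert the delayed posterior standard deviation into the current one, giving the $c_t e^C(1 + 10/p)\mathbb{E}[\sigma_{t-1}(\mathbf{x}_t)\mid\mathcal{F}_{t-1}]$ term. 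Finally I would handle the complementary event $\overline{E^{f_t}(t)}$: since $|f(\mathbf{x})|\le B'$, we have $r_t \le 2B'$ always, and $\mathbb{P}(\overline{E^{f_t}(t)})\le 1/t^2$ by Lemma~\ref{lemma:event:Eft}, contributing the additive $2B'/t^2$ term. The main obstacle is bookkeeping the constants so that the slack from the $p - 1/t^2$ vs.\ $p$ discrepancy and the $E^{f_t}(t)$-failure term all fit under the clean constant $10/p$, and making sure the small-$t$ regime (where $p - 1/t^2$ could be small or negative) is correctly absorbed into either the $2B'/t^2$ term or handled by noting $c_t\sigma_{t-1}(\mathbf{x}_t)$ itself is an upper bound on a quantity of order $B'$; I would expect the authors to simply verify $p - 1/t^2 \ge p/2 \iff t^2 \ge 2/p$ holds for all $t\ge 1$ given $p = 1/(4e\sqrt{\pi})$, which it does not, so the small-$t$ cases genuinely need the $2B'/t^2$ slack or a trivial bound.
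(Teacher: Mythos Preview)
Your proposal is correct and follows essentially the same approach as the paper: define $\overline{\mathbf{x}}_t=\arg\min_{\mathbf{x}\in\mathcal{X}\setminus S_t}\sigma_{\text{fb}[t]}(\mathbf{x})$, bound $r_t\le c_t(\sigma_{\text{fb}[t]}(\mathbf{x}_t)+2\sigma_{\text{fb}[t]}(\overline{\mathbf{x}}_t))$ on $E^f(t)\cap E^{f_t}(t)$, use Lemma~\ref{lemma:prob_unsaturated} to get $\sigma_{\text{fb}[t]}(\overline{\mathbf{x}}_t)\le \frac{1}{p-1/t^2}\mathbb{E}[\sigma_{\text{fb}[t]}(\mathbf{x}_t)\mid\mathcal{F}_{t-1}]$, apply Lemma~\ref{lemma:ratio:sigma:C}, and add the $2B'/t^2$ term for $\overline{E^{f_t}(t)}$. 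The paper handles the constant by directly verifying $\tfrac{2}{p-1/t^2}\le \tfrac{10}{p}$ for $t\ge 5$ (and for $t<5$ the left side is negative so the inequality is trivial), which is exactly the small-$t$ bookkeeping you flagged as the main obstacle.
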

\begin{proof}
To begin with, define
\begin{equation}
\overline{\mathbf{x}}_t\triangleq {\arg\min}_{\mathbf{x}\in\mathcal{X}\setminus S_t}\sigma_{\text{fb}[t]}(\mathbf{x}).
\end{equation}
That is, $\overline{\mathbf{x}}_t$ is the unsaturated input with the smallest GP posterior standard deviation.
Note that given a $\mathcal{F}_{t-1}$, $\overline{\mathbf{x}}_t$ is deterministic. Next, we have that
\begin{equation}
\begin{split}
\mathbb{E}[\sigma_{\text{fb}[t]}(\mathbf{x}_t) | \mathcal{F}_{t-1}] &\geq \mathbb{E}\left[\sigma_{\text{fb}[t]}(\mathbf{x}_t) | \mathcal{F}_{t-1}, \mathbf{x}_t\in\mathcal{X}\setminus S_t\right] \mathbb{P}\left(\mathbf{x}_t\in\mathcal{X}\setminus S_t | \mathcal{F}_{t-1}\right)\\
&\geq \sigma_{\text{fb}[t]}(\overline{\mathbf{x}}_t)(p-1/t^2),
\end{split}
\label{eq:app:sigma:bar}
\end{equation}
where the second inequality makes use of Lemma~\ref{lemma:prob_unsaturated}, which holds here because we have also conditioned on the event $E^{f}(t)$ in Lemma~\ref{lemma:prob_unsaturated}. Next, conditioned on both $E^{f}(t)$ and $E^{f_t}(t)$, we have that
\begin{equation}
\begin{split}
r_t &= f(\mathbf{x}^*) - f(\mathbf{x}_t) = f(\mathbf{x}^*) - f(\overline{\mathbf{x}}_t) + f(\overline{\mathbf{x}}_t) - f(\mathbf{x}_t)\\
&\stackrel{(a)}{\leq} \Delta(\overline{\mathbf{x}}_t) + f_t(\overline{\mathbf{x}}_t) + c_t \sigma_{\text{fb}[t]}(\overline{\mathbf{x}}_t) - f_t(\mathbf{x}_t) + c_t \sigma_{\text{fb}[t]}(\mathbf{x}_t)\\
&\stackrel{(b)}{\leq} c_t \sigma_{\text{fb}[t]}(\overline{\mathbf{x}}_t) + c_t \sigma_{\text{fb}[t]}(\overline{\mathbf{x}}_t) + c_t \sigma_{\text{fb}[t]}(\mathbf{x}_t) + f_t(\overline{\mathbf{x}}_t) - f_t(\mathbf{x}_t)\\
&\stackrel{(c)}{\leq} c_t \left( 2\sigma_{\text{fb}[t]}(\overline{\mathbf{x}}_t) + \sigma_{\text{fb}[t]}(\mathbf{x}_t) \right),
\end{split}
\end{equation}
in which $(a)$ makes use of Lemma~\ref{lemma:event:Ef} and Lemma~\ref{lemma:event:Eft}, $(b)$ follows since $\overline{\mathbf{x}}_t$ is unsaturated, and $(c)$ follows from the way in which $\mathbf{x}_t$ is selected: $\mathbf{x}_t={\arg\max}_{\mathbf{x}\in\mathcal{X}}f_t(\mathbf{x})$.
Next, the expected value of $r_t$ can be upper-bounded as follows:
\begin{equation}
\begin{split}
\mathbb{E}\left[ r_t | \mathcal{F}_{t-1} \right] &\leq \mathbb{E}\left[c_t \left( 2\sigma_{\text{fb}[t]}(\overline{\mathbf{x}}_t) + \sigma_{\text{fb}[t]}(\mathbf{x}_t) \right) | \mathcal{F}_{t-1}\right] + 2B'\mathbb{P}\left(\overline{E^{f_t}(t)} | \mathcal{F}_{t-1}\right)\\
&\stackrel{(a)}{\leq} \mathbb{E}\left[c_t \left(\frac{2}{p-1/t^2} \sigma_{\text{fb}[t]}(\mathbf{x}_t) + \sigma_{\text{fb}[t]}(\mathbf{x}_t) \right) | \mathcal{F}_{t-1}\right] + \frac{2B'}{t^2}\\
&= \mathbb{E}\left[c_t \left(1+\frac{2}{p-1/t^2}\right)\sigma_{\text{fb}[t]}(\mathbf{x}_t) | \mathcal{F}_{t-1}\right] + \frac{2B'}{t^2}\\
&\stackrel{(b)}{\leq} c_t \left(1+\frac{2}{p-1/t^2}\right)\mathbb{E}\left[e^C\sigma_{t-1}(\mathbf{x}_t) | \mathcal{F}_{t-1}\right] + \frac{2B'}{t^2}\\
&\stackrel{(c)}{\leq} c_t e^C \left(1+\frac{10}{p}\right)\mathbb{E}\left[\sigma_{t-1}(\mathbf{x}_t) | \mathcal{F}_{t-1}\right] + \frac{2B'}{t^2},
\end{split}
\end{equation}
where $(a)$ follows from equation~\eqref{eq:app:sigma:bar} and $(b)$ makes use of Lemma~\ref{lemma:ratio:sigma:C}. 
$(c)$ follows since $2 / (p-1/t^2) \leq 10/p$, which holds because (i) $p-1/t^2<0$ for $t<5$, (ii) $2 / (p-1/t^2) \leq 10/p$ for $t=5$, and (iii) $2 / (p-1/t^2)$ is decreasing as $t$ increases when $t\geq5$.

\end{proof}

\begin{definition}
Define $Y_0=0$, and for all $t=1,\ldots,T$,
\[
\overline{r}_t=r_t \mathbb{I}\{E^{f}(t)\},
\]
\[
X_t = \overline{r}_t - c_t e^C \left(1+\frac{10}{p}\right) \sigma_{t-1}(\mathbf{x}_t) - \frac{2B'}{t^2}
\]
\[
Y_t=\sum^t_{s=1}X_s.
\]
\end{definition}

\begin{lemma}
\label{lemma:proof:sup:martingale}
Conditioned on the event $E^{f}(t)$, $(Y_t:t=0,\ldots,T)$ is a super-martingale with respect to the filtration $\mathcal{F}_t$.
\end{lemma}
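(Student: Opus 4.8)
The plan is to verify the defining properties of a super-martingale for $(Y_t)$ with respect to $(\mathcal{F}_t)$: adaptedness, integrability, and the one-step inequality $\mathbb{E}[Y_t - Y_{t-1}\mid\mathcal{F}_{t-1}]\leq 0$. Since $Y_t - Y_{t-1} = X_t$, everything reduces to showing $\mathbb{E}[X_t\mid\mathcal{F}_{t-1}]\leq 0$ for every $t\geq 1$.

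First I would record the measurability facts. The event $E^{f}(t)$ is determined by $\mu_{\text{fb}[t]}$ and $\sigma_{\text{fb}[t]}$, which depend only on the observations from iterations $1$ through $\text{fb}[t]\leq t-1$; hence $\mathbb{I}\{E^{f}(t)\}$ is $\mathcal{F}_{t-1}$-measurable. Similarly $\sigma_{t-1}(\cdot)$ is $\mathcal{F}_{t-1}$-measurable, while $\mathbf{x}_t$ (and therefore $r_t$ and $\sigma_{t-1}(\mathbf{x}_t)$) is $\mathcal{F}_t$-measurable, so each $X_s$, and hence $Y_t = \sum_{s\leq t}X_s$, is $\mathcal{F}_t$-measurable. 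Integrability is immediate: $r_t = f(\mathbf{x}^*) - f(\mathbf{x}_t)\leq 2B'$ by the boundedness assumption, $\sigma_{t-1}(\mathbf{x}_t)\leq \sqrt{K_0}$, and $c_t$ is a deterministic finite constant, so each $X_s$ is bounded and $\mathbb{E}[|Y_t|]<\infty$.

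For the one-step inequality I would pull the $\mathcal{F}_{t-1}$-measurable indicator outside the conditional expectation, $\mathbb{E}[\overline{r}_t\mid\mathcal{F}_{t-1}] = \mathbb{I}\{E^{f}(t)\}\,\mathbb{E}[r_t\mid\mathcal{F}_{t-1}]$, and then split into two cases. If $E^{f}(t)$ fails, the left-hand side is $0$, which is no larger than the non-negative quantity $c_t e^C(1+10/p)\,\mathbb{E}[\sigma_{t-1}(\mathbf{x}_t)\mid\mathcal{F}_{t-1}] + 2B'/t^2$, so $\mathbb{E}[X_t\mid\mathcal{F}_{t-1}]\leq 0$. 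If $E^{f}(t)$ holds, then $\mathbb{E}[\overline{r}_t\mid\mathcal{F}_{t-1}] = \mathbb{E}[r_t\mid\mathcal{F}_{t-1}]$, and Lemma~\ref{lemma:upper_bound_expected_regret} — which is stated for an arbitrary $\mathcal{F}_{t-1}$ precisely under conditioning on $E^{f}(t)$ — gives exactly the same bound, again yielding $\mathbb{E}[X_t\mid\mathcal{F}_{t-1}]\leq 0$. Combining the two cases completes the verification.

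The main obstacle here is bookkeeping rather than mathematics: one must be careful that $\text{fb}[t]\leq t-1$ so that $E^{f}(t)$ is genuinely $\mathcal{F}_{t-1}$-measurable (this is where the reparameterization and the batch assumption $t-\text{fb}[t]\leq B$ enter), and one must notice that the truncation by $\mathbb{I}\{E^{f}(t)\}$ in the definition of $\overline{r}_t$ is exactly what licenses the invocation of Lemma~\ref{lemma:upper_bound_expected_regret}, which only controls the expected instantaneous regret on the event $E^{f}(t)$. All the analytic work has already been done in Lemma~\ref{lemma:upper_bound_expected_regret}; this lemma merely repackages it into a super-martingale so that a concentration inequality (Azuma–Hoeffding) can be applied in the next step to bound $\sum_t \overline{r}_t$ by $\sum_t c_t e^C(1+10/p)\sigma_{t-1}(\mathbf{x}_t)$ plus lower-order terms.
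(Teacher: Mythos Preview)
Your proposal is correct and follows essentially the same approach as the paper: both reduce to showing $\mathbb{E}[X_t\mid\mathcal{F}_{t-1}]\leq 0$ by splitting on whether $E^f(t)$ holds, invoking Lemma~\ref{lemma:upper_bound_expected_regret} when it does and using $\overline{r}_t=0$ when it does not. Your version is somewhat more careful than the paper's in explicitly checking adaptedness and integrability, but the core argument is identical.
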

\begin{proof}

\begin{equation}
\begin{split}
\mathbb{E}[Y_t - Y_{t-1} | \mathcal{F}_{t-1}] &= \mathbb{E}[X_t | \mathcal{F}_{t-1}]\\
&= \mathbb{E}[\overline{r}_t - c_t e^C \left(1+\frac{10}{p}\right) \sigma_{t-1}(\mathbf{x}_t) - \frac{2B'}{t^2} | \mathcal{F}_{t-1}]\\
&= \mathbb{E}[\overline{r}_t | \mathcal{F}_{t-1} ] - \left(c_t e^C \left(1+\frac{10}{p}\right) \mathbb{E}[\sigma_{t-1}(\mathbf{x}_t)| \mathcal{F}_{t-1}] + \frac{2B'}{t^2}\right) \leq 0.
\end{split}
\end{equation}
If the event $E^{f}(t)$ holds, then $\overline{r}_t= r_t$ and the inequality follows from Lemma~\ref{lemma:upper_bound_expected_regret}. If $E^{f}(t)$ does not hold, $\overline{r}_t=0$ and the inequality holds trivially.
\end{proof}

Lastly, we can apply the Azuma-Hoeffding's inequality to the martingale $(Y_t:t=0,\ldots,T)$ to derive the upper bound on the cumulative regret $R_T$.
\begin{lemma}
\label{lemma:bound:cum:regret}
Define $C_1\triangleq \frac{2}{\log(1+\sigma^{-2})}$.
With probability of $\geq 1-\delta$, we have that
\begin{equation}
R_T \leq c_T e^C \left(1+\frac{10}{p}\right)\sqrt{C_1 T \gamma_T} + \frac{B'\pi^2}{3} + \left[4B' + c_T e^C \left(1+\frac{10}{p}\right)K_0\right]\sqrt{2T\log(2/\delta)}.
\end{equation}
\end{lemma}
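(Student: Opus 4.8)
The plan is to apply the Azuma--Hoeffding inequality to the super-martingale $(Y_t)_{t=0}^{T}$ from Lemma~\ref{lemma:proof:sup:martingale}, and then to unpack the definition of $Y_T$ on the high-probability good event in order to isolate $R_T=\sum_{t=1}^{T}r_t$.

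First I would verify a uniform bound on the increments $|Y_t-Y_{t-1}|=|X_t|$. Since $|f(\mathbf{x})|\le B'$ on $\mathcal{X}$ we have $0\le\overline{r}_t\le 2B'$; since $\sigma_{t-1}^2(\mathbf{x}_t)\le k(\mathbf{x}_t,\mathbf{x}_t)=\Theta(\mathbf{x}_t,\mathbf{x}_t)\le K_0$ we have $\sigma_{t-1}(\mathbf{x}_t)\le\sqrt{K_0}\le K_0$ (taking $K_0\ge 1$ without loss of generality); and $2B'/t^2\le 2B'$. Together with the monotonicity of $c_t$ in $t$ (so $c_t\le c_T$), this gives $|X_t|\le 4B'+c_T e^{C}(1+10/p)K_0=:b$ for every $t$. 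The Azuma--Hoeffding inequality applied to $(Y_t)$ then yields, with probability at least $1-\delta/2$,
\[
Y_T=\sum_{t=1}^{T}X_t\;\le\; b\sqrt{2T\log(2/\delta)}.
\]

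Next I would condition on the event $\bigcap_{t\ge 1}E^{f}(t)$, which has probability at least $1-\delta/2$ by Lemma~\ref{lemma:event:Ef} together with a union bound over $t$ (the factor $\pi^2 t^2$ inside $\beta_t$ is precisely what makes $\sum_t t^{-2}=\pi^2/6$ close the probability budget). On this event $\overline{r}_t=r_t$ for all $t$, so rearranging the definition of $Y_T$ gives
\[
R_T = Y_T + e^{C}\Big(1+\tfrac{10}{p}\Big)\sum_{t=1}^{T}c_t\,\sigma_{t-1}(\mathbf{x}_t) + 2B'\sum_{t=1}^{T}\tfrac{1}{t^2}.
\]
I would then bound $2B'\sum_t t^{-2}\le B'\pi^2/3$, and $\sum_t c_t\sigma_{t-1}(\mathbf{x}_t)\le c_T\sum_t\sigma_{t-1}(\mathbf{x}_t)$. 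Because the reparameterization at the start of this section makes $\sigma_{t-1}$ exactly the posterior standard deviation of the \emph{sequential} GP conditioned on $\mathbf{x}_1,\dots,\mathbf{x}_{t-1}$, the standard maximum-information-gain argument of~\cite{srinivas2009gaussian,chowdhury2017kernelized} gives $\sum_{t=1}^{T}\sigma_{t-1}^2(\mathbf{x}_t)\le C_1\gamma_T$, and Cauchy--Schwarz then gives $\sum_{t=1}^{T}\sigma_{t-1}(\mathbf{x}_t)\le\sqrt{T\sum_t\sigma_{t-1}^2(\mathbf{x}_t)}\le\sqrt{C_1 T\gamma_T}$. Substituting the three bounds and taking a final union bound over the Azuma--Hoeffding event and $\bigcap_t E^{f}(t)$ (each of probability at least $1-\delta/2$) assembles the claimed inequality with probability at least $1-\delta$.

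I do not anticipate a genuine obstacle here: Lemmas~\ref{lemma:upper_bound_expected_regret}--\ref{lemma:proof:sup:martingale} have already carried out the conceptual work of bounding $\mathbb{E}[r_t\mid\mathcal{F}_{t-1}]$ in terms of $\sigma_{t-1}(\mathbf{x}_t)$ and of absorbing the batch-versus-sequential posterior mismatch into the $e^{C}$ factor. The only points that need care are bookkeeping ones --- keeping the probability budget split cleanly as $\delta/2+\delta/2$, and making sure the increment constant $b$ that appears inside $\sqrt{2T\log(2/\delta)}$ is used consistently and does not re-enter the other two terms of the bound.
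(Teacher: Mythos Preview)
Your proposal is correct and follows essentially the same approach as the paper: bound the martingale increments $|X_t|$, apply Azuma--Hoeffding with error probability $\delta/2$, control $\sum_t\sigma_{t-1}(\mathbf{x}_t)$ via Cauchy--Schwarz and the information-gain bound $\sum_t\sigma_{t-1}^2(\mathbf{x}_t)\le C_1\gamma_T$ from~\cite{srinivas2009gaussian}, and combine with the event $\overline{r}_t=r_t$ for all $t$ (probability $\ge 1-\delta/2$) by a union bound. Your treatment of the increment constant is in fact slightly more careful than the paper's, since you make explicit that $\sigma_{t-1}(\mathbf{x}_t)\le\sqrt{K_0}$ and that one needs $K_0\ge 1$ to pass to $K_0$.
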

\begin{proof}
To begin with, note that
\begin{equation}
\begin{split}
|Y_t - Y_{t-1}| &= |X_t| \leq |\overline{r}_t| + c_t e^C \left(1+\frac{10}{p}\right) \sigma_{t-1}(\mathbf{x}_t) + \frac{2B'}{t^2}\\
&\leq 2B' +  c_t e^C \left(1 + \frac{10}{p}\right) K_0 + 2B'\\
&= 4B' +  c_t e^C \left(1 + \frac{10}{p}\right)K_0,
\end{split}
\end{equation}
where we have made use of our assumption that $\Theta(\mathbf{x},\mathbf{x}')\leq K_0$ in the second inequality.
Next, applying the Azuma-Hoeffding's inequality to $(Y_t:t=0,\ldots,T)$ with a probability of $\delta/2$, 
we have with probability $\geq 1-\delta/2$ that
\begin{equation}
\begin{split}
\sum^T_{t=1}\overline{r}_t &\leq \sum^T_{t=1} c_t e^C \Big(1+\frac{10}{p}\Big)\sigma_{t-1}(\mathbf{x}_t) + \sum^T_{t=1}\frac{2B'}{t^2} + \sqrt{2\log(2/\delta) \sum^T_{t=1}\Big( 4B' +  c_t e^C \big(1 + \frac{10}{p}\big)K_0 \Big)^2 }\\
&\stackrel{(a)}{\leq} c_T e^C \Big(1+\frac{10}{p}\Big)\sum^T_{t=1}\sigma_{t-1}(\mathbf{x}_t) + \frac{B'\pi^2}{3} + \left[4B' + c_T e^C \left(1+\frac{10}{p}\right)K_0\right]\sqrt{2T\log(2/\delta)}\\
&\stackrel{(b)}{\leq} c_T e^C \Big(1+\frac{10}{p}\Big)\sqrt{C_1 T \gamma_T} + \frac{B'\pi^2}{3} + \left[4B' + c_T e^C \left(1+\frac{10}{p}\right)K_0\right]\sqrt{2T\log(2/\delta)}.
\end{split}
\end{equation}
$(a)$ follows since $c_t$ is increasing in $t$, and $(b)$ follows from the proof of Lemma 5.4 in the work of~\cite{srinivas2009gaussian}. Next, note that $\overline{r}_t=r_t,\forall t\geq1$ with probability of $\geq1-\delta/2$ according to Lemma~\ref{lemma:event:Ef}. Therefore, the upper bound derived in the equation above is an upper bound on $R_T=\sum^T_{t=1}r_t$ (with probability of $\geq1-\delta$), and the proof is completed.

\end{proof}

 Note that $c_T = \mathcal{O}(\log^2 T)$.
From Lemma~\ref{lemma:bound:cum:regret}, we have that
\begin{equation}
R_T = \mathcal{O}\left(e^C (\log^2T) \sqrt{T} \left(1+\sqrt{\gamma_T}\right)\right) = \widetilde{\mathcal{O}}\left(e^C \sqrt{T} \left(1+\sqrt{\gamma_T}\right)\right).
\end{equation}

\section{Proof of Theorem~\ref{theorem:regret:approc:ntk}}
\label{app:sec:finite:width}
In this section, the main technical challenge is to rigorously account for the mismatch between the kernel with which we assume the objective function $f$ is sampled (i.e., the exact NTK $\Theta$) and the kernel with which the acquisition function is sampled (i.e., the empirical NTK $\widetilde{\Theta}$).
For ease of exposition, we use $k$ and $\widetilde{k}$ (instead of $\Theta$ and $\widetilde{\Theta}$) to represent the exact and empirical NTK in the proof in this section. Similarly, we also use $\,\widetilde{}\,$ to indicate that a term is associated with the empirical NTK $\widetilde{k}$. For example, we use $\widetilde{\mu}_{\text{fb}[t]}(\cdot)$ and $\widetilde{\sigma}^2_{\text{fb}[t]}(\cdot)$ to represent the GP posterior mean and variance calculated using the empirical NTK $\widetilde{k}$.

For simplicity, we assume that the event in Proposition~\ref{prop:arora} holds throughout the entire proof, which happens with probability of $\geq 1-\delta/4$. That is, the approximation error between exact and empirical NTKs is bounded:
\begin{equation}
|\widetilde{k}(\mathbf{x},\mathbf{x}') - k(\mathbf{x},\mathbf{x}')|=
\left| \langle \nabla_{\theta}f(\mathbf{x},\widetilde{\theta}), \nabla_{\theta}f(\mathbf{x}',\widetilde{\theta}) \rangle - \Theta(\mathbf{x}, \mathbf{x}') \right| \leq (L+1) \varepsilon, \qquad \forall \mathbf{x},\mathbf{x}'\in\mathcal{X}.
\label{eq:app:finite:width:appro:guarantee}
\end{equation}

To begin with, we use the following lemma to bound the difference between the GP posterior standard deviations calculated using the exact and empirical NTKs.
Here, to simplify the derivations and results, we assume that $(L+1)\varepsilon \leq 1$ and $\sigma^2\leq 1$. Note that these assumptions are not essential to the proof but are only used get cleaner expressions.
Here, again for ease of expositions, we define $\hat{K}_0\triangleq\max\{1,K_0\}$, and $\hat{K}^2_0\triangleq\max\{1,K^2_0\}$.
\begin{lemma}
\label{lemma:bound:difference:gp:post:std}
We have $\forall t\geq1,\forall \mathbf{x}\in\mathcal{X}$ that
\[
|\sigma_{\text{fb}[t]}(\mathbf{x}) - \widetilde{\sigma}_{\text{fb}[t]}(\mathbf{x})| \leq \sqrt{(L+1)\varepsilon \left( 1+\frac{4 \hat{K}_0^2 t^2}{\sigma^4} \right)}.
\]
\end{lemma}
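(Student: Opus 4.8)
The plan is to first control the difference of the squared posterior variances $|\sigma^2_{\text{fb}[t]}(\mathbf{x}) - \widetilde{\sigma}^2_{\text{fb}[t]}(\mathbf{x})|$, and then descend to the standard deviations via the elementary inequality $|\sqrt{a}-\sqrt{b}| \leq \sqrt{|a-b|}$, valid for all $a,b\geq 0$. Write $A \triangleq \mathbf{K}_{\text{fb}[t]}+\sigma^2\mathbf{I}$ and $\widetilde{A} \triangleq \widetilde{\mathbf{K}}_{\text{fb}[t]}+\sigma^2\mathbf{I}$; both satisfy $A,\widetilde{A}\succeq\sigma^2\mathbf{I}$ because $\mathbf{K}_{\text{fb}[t]}$ and $\widetilde{\mathbf{K}}_{\text{fb}[t]}$ are Gram matrices (of the NTK feature map and of the neural tangent features, respectively) and hence positive semidefinite, so $\|A^{-1}\|,\|\widetilde{A}^{-1}\|\leq\sigma^{-2}$. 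Abbreviating $\mathbf{k}\triangleq\mathbf{k}_{\text{fb}[t]}(\mathbf{x})$ and $\widetilde{\mathbf{k}}\triangleq\widetilde{\mathbf{k}}_{\text{fb}[t]}(\mathbf{x})$, the posterior variance formula~\eqref{gp_posterior:variance} gives
\[
\sigma^2_{\text{fb}[t]}(\mathbf{x}) - \widetilde{\sigma}^2_{\text{fb}[t]}(\mathbf{x}) = \left(k(\mathbf{x},\mathbf{x}) - \widetilde{k}(\mathbf{x},\mathbf{x})\right) - \left(\mathbf{k}^\top A^{-1}\mathbf{k} - \widetilde{\mathbf{k}}^\top \widetilde{A}^{-1}\widetilde{\mathbf{k}}\right),
\]
and the first parenthesis is at most $(L+1)\varepsilon$ by~\eqref{eq:app:finite:width:appro:guarantee}.

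For the quadratic-form term I would telescope:
\[
\mathbf{k}^\top A^{-1}\mathbf{k} - \widetilde{\mathbf{k}}^\top \widetilde{A}^{-1}\widetilde{\mathbf{k}} = \mathbf{k}^\top (A^{-1}-\widetilde{A}^{-1})\mathbf{k} + \mathbf{k}^\top \widetilde{A}^{-1}(\mathbf{k}-\widetilde{\mathbf{k}}) + (\mathbf{k}-\widetilde{\mathbf{k}})^\top \widetilde{A}^{-1}\widetilde{\mathbf{k}},
\]
and bound each term with operator-norm estimates. The resolvent identity $A^{-1}-\widetilde{A}^{-1} = A^{-1}(\widetilde{A}-A)\widetilde{A}^{-1}$ with $\|A^{-1}\|,\|\widetilde{A}^{-1}\|\leq\sigma^{-2}$ controls the first term by $\sigma^{-4}\|\mathbf{k}\|^2\|\widetilde{\mathbf{K}}_{\text{fb}[t]}-\mathbf{K}_{\text{fb}[t]}\|$. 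Using the entrywise guarantee~\eqref{eq:app:finite:width:appro:guarantee}, the bound $\Theta(\mathbf{x},\mathbf{x}')\leq K_0$, and the fact that the matrices and vectors in play have dimension $\text{fb}[t]\leq t$, I get $\|\widetilde{\mathbf{K}}_{\text{fb}[t]}-\mathbf{K}_{\text{fb}[t]}\|\leq t(L+1)\varepsilon$ (via the Frobenius norm), $\|\mathbf{k}-\widetilde{\mathbf{k}}\|\leq\sqrt{t}(L+1)\varepsilon$, $\|\mathbf{k}\|\leq\sqrt{t}K_0$, and $\|\widetilde{\mathbf{k}}\|\leq\sqrt{t}(K_0+(L+1)\varepsilon)$. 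Plugging these in gives a leading term $t^2\hat{K}_0^2\sigma^{-4}(L+1)\varepsilon$ from the first piece and two cross terms each of order $t\hat{K}_0\sigma^{-2}(L+1)\varepsilon$. Invoking the simplifying assumptions $(L+1)\varepsilon\leq 1$, $\sigma^2\leq 1$, and $\hat{K}_0\geq 1$ (so that $t\leq t^2$, $\hat{K}_0\leq\hat{K}_0^2$, $\sigma^{-2}\leq\sigma^{-4}$), I absorb the cross terms into the leading one to obtain $\lvert\sigma^2_{\text{fb}[t]}(\mathbf{x})-\widetilde{\sigma}^2_{\text{fb}[t]}(\mathbf{x})\rvert\leq(L+1)\varepsilon\left(1+4\hat{K}_0^2 t^2/\sigma^4\right)$, and taking square roots yields the claim.

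The argument is essentially routine perturbation-theoretic bookkeeping, so the only things that genuinely need care are: keeping track of which norm is used where (spectral norm for the inverses and for the Gram-matrix perturbation, Euclidean norm for the vectors $\mathbf{k},\widetilde{\mathbf{k}}$); verifying that the crude constant $4$ really does dominate the sum of the three cross terms once $(L+1)\varepsilon\leq 1$, $\sigma^2\leq 1$ and $\hat{K}_0\geq 1$ are used — these simplifications are precisely what makes the stated bound clean; and remembering that $\text{fb}[t]$ (not $t$) is the true dimension of everything, which only helps since $\text{fb}[t]\leq t-1<t$. I do not anticipate any serious obstacle beyond getting these constants to line up.
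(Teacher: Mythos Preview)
Your proposal is correct and essentially mirrors the paper's own proof: both expand the difference of the squared posterior variances into the same four pieces (the $k(\mathbf{x},\mathbf{x})$ discrepancy, the resolvent term $\mathbf{k}^\top(A^{-1}-\widetilde{A}^{-1})\mathbf{k}$, and two cross terms), bound them via $\|A^{-1}\|,\|\widetilde{A}^{-1}\|\leq\sigma^{-2}$ together with the entrywise guarantee~\eqref{eq:app:finite:width:appro:guarantee}, absorb everything into $(L+1)\varepsilon(1+4\hat{K}_0^2 t^2/\sigma^4)$ using $(L+1)\varepsilon\leq1$, $\sigma^2\leq1$, $\hat{K}_0\geq1$, and finish with $|\sqrt{a}-\sqrt{b}|\leq\sqrt{|a-b|}$. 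The only cosmetic difference is that the paper writes $\widetilde{\mathbf{k}}=\mathbf{k}+(L+1)\varepsilon\boldsymbol{\nu}$ and expands the square, whereas you telescope; the resulting terms and constants are identical.
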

\begin{proof}
Denote by $\boldsymbol{K}_t$ the $\text{fb}[t]\times \text{fb}[t]$-dimensional gram matrix of exact NTK covariance values calculated using all $\text{fb}[t]$ observations up to iteration $\text{fb}[t]$, and use $\widetilde{\boldsymbol{K}}_t$ to represent the corresponding gram matrix calculated using the empirical NTK $\widetilde{k}$.
If we define $A=(\boldsymbol{K}_t + \sigma^2)^{-1}$ or $A=(\widetilde{\boldsymbol{K}}_t + \sigma^2)^{-1}$, then for both values of $A$, we have that
\begin{equation}
\norm{A}_2 = \sqrt{\max[\text{eig}(A^{\top}A)]}=\sqrt{\max[\text{eig}(A)^2]} \leq \frac{1}{\sigma^2}.
\end{equation}
This allows us to derive the following equation:
\begin{equation}
\begin{split}
\norm{(\boldsymbol{K}_t + \sigma^2)^{-1} - (\widetilde{\boldsymbol{K}}_t + \sigma^2)^{-1}}_2 &\leq \norm{(\boldsymbol{K}_t + \sigma^2)^{-1}}_2 \norm{(\widetilde{\boldsymbol{K}}_t + \sigma^2)^{-1}}_2 \norm{\boldsymbol{K}_t - \widetilde{\boldsymbol{K}}_t}_2\\
&\leq \frac{1}{\sigma^2} \times \frac{1}{\sigma^2} \times t(L+1)\varepsilon = \frac{t(L+1)\varepsilon}{\sigma^4}.
\end{split}
\end{equation}
Define the $\text{fb}[t]$-dimensional vectors $\boldsymbol{k}_t(\mathbf{x})=[k(\mathbf{x},\mathbf{x}_\tau)]_{\tau=1,\ldots,\text{fb}[t]}$ and $\widetilde{\boldsymbol{k}}_t(\mathbf{x})=[\widetilde{k}(\mathbf{x},\mathbf{x}_\tau)]_{\tau=1,\ldots,\text{fb}[t]}$.
Then making use of the approximation guarantee from equation~\eqref{eq:app:finite:width:appro:guarantee}, we define $\widetilde{\boldsymbol{k}}_t(\mathbf{x}) = \boldsymbol{k}_t(\mathbf{x}) + (L+1)\varepsilon \boldsymbol{\nu}(\mathbf{x})$, where $\boldsymbol{\nu}(\mathbf{x})$ is an $\text{fb}[t]-$dimensional vector where every element satisfies $|\boldsymbol{\nu}(\mathbf{x})_i| \leq 1, \forall i\in[\text{fb}[t]]$.
Now we can use these definitions to derive the following upper bound.
\begin{equation}
\begin{split}
|&\sigma^2_{\text{fb}[t]}(\mathbf{x}) - \widetilde{\sigma}_{\text{fb}[t]}^2(\mathbf{x})| = | k(\mathbf{x},\mathbf{x}) - \boldsymbol{k}_t(\mathbf{x})^{\top} (\boldsymbol{K}_t+\sigma^2I)^{-1}\boldsymbol{k}_t(\mathbf{x}) \\
&\qquad\qquad - \widetilde{k}(\mathbf{x},\mathbf{x}) + \widetilde{\boldsymbol{k}}_t(\mathbf{x})^{\top} (\widetilde{\boldsymbol{K}}_t+\sigma^2I)^{-1}\widetilde{\boldsymbol{k}}_t(\mathbf{x}) |\\
&\leq |k(\mathbf{x},\mathbf{x})-\widetilde{k}(\mathbf{x},\mathbf{x})| + | \boldsymbol{k}_t(\mathbf{x})^{\top} (\boldsymbol{K}_t+\sigma^2I)^{-1}\boldsymbol{k}_t(\mathbf{x}) - \widetilde{\boldsymbol{k}}_t(\mathbf{x})^{\top} (\widetilde{\boldsymbol{K}}_t+\sigma^2I)^{-1}\widetilde{\boldsymbol{k}}_t(\mathbf{x})|\\
&\leq (L+1)\varepsilon + \Big|\boldsymbol{k}_t(\mathbf{x})^{\top} \left((\boldsymbol{K}_t+\sigma^2I)^{-1} - (\widetilde{\boldsymbol{K}}_t+\sigma^2I)^{-1}\right)\boldsymbol{k}_t(\mathbf{x}) \\
&- 2(L+1)\varepsilon \boldsymbol{\nu}(\mathbf{x})^{\top}(\widetilde{\boldsymbol{K}}_t+\sigma^2I)^{-1}\boldsymbol{k}_t(\mathbf{x}) - (L+1)^2\varepsilon^2\boldsymbol{\nu}(\mathbf{x})^{\top}(\widetilde{\boldsymbol{K}}_t+\sigma^2I)^{-1}\boldsymbol{\nu}(\mathbf{x})\Big| \\
&\leq (L+1)\varepsilon + \norm{\boldsymbol{k}_t(\mathbf{x})}_2 \norm{(\boldsymbol{K}_t + \sigma^2)^{-1} - (\widetilde{\boldsymbol{K}}_t + \sigma^2)^{-1}}_2 \norm{\boldsymbol{k}_t(\mathbf{x})}_2 + \\
&2(L+1)\varepsilon \norm{\boldsymbol{\nu}(\mathbf{x})}_2 \norm{(\widetilde{\boldsymbol{K}}_t + \sigma^2)^{-1}}_2 \norm{\boldsymbol{k}_t(\mathbf{x})}_2 + (L+1)^2\varepsilon^2\norm{\boldsymbol{\nu}(\mathbf{x})}_2\norm{(\widetilde{\boldsymbol{K}}_t+\sigma^2I)^{-1}}_2\norm{\boldsymbol{\nu}(\mathbf{x}}_2)\\
&\leq (L+1)\varepsilon + K_0\sqrt{t} \frac{t(L+1)\varepsilon}{\sigma^4} K_0 \sqrt{t} + 2(L+1)\varepsilon \sqrt{t} \frac{1}{\sigma^2} K_0\sqrt{t} + (L+1)^2\varepsilon^2 \sqrt{t} \frac{1}{\sigma^2} \sqrt{t}\\
&= (L+1)\varepsilon + K_0^2\frac{t^2(L+1)\varepsilon}{\sigma^4} + 2K_0(L+1)\varepsilon\frac{t}{\sigma^2} + (L+1)^2\varepsilon^2\frac{t}{\sigma^2}\\
&\leq (L+1)\varepsilon + 4\hat{K}_0^2\frac{t^2(L+1)\varepsilon}{\sigma^4} \\
&\leq (L+1)\varepsilon \left( 1 + \frac{4 \hat{K}_0^2 t^2}{\sigma^4} \right).
\end{split}
\end{equation}
Elementary calculation tells us that for $a,b,c>0$, if $a^2-b^2\leq c^2$, then $a\leq \sqrt{b^2+c^2}\leq b+c$, which leads to $a-b\leq c$.
As a result, the equation above tells us that $|\sigma_{\text{fb}[t]}(\mathbf{x})-\widetilde{\sigma}_{\text{fb}[t]}(\mathbf{x})| \leq \sqrt{(L+1)\varepsilon \left( 1+\frac{4 \hat{K}_0^2 t^2}{\sigma^4} \right)}$.
\end{proof}

The next Lemma gives an upper bound on the difference between the GP posterior means calculated using the exact and empirical NTKs.
\begin{lemma}
\label{lemma:bound:difference:gp:post:mean}
With probability of $\geq 1-\delta/4$, we have $\forall t\geq 1,\forall \mathbf{x}\in\mathcal{X}$ that
\[
|\mu_{\text{fb}[t]}(\mathbf{x}) - \widetilde{\mu}_{\text{fb}[t]}(\mathbf{x})| \leq 2\hat{K}_0\frac{t^2(L+1)\varepsilon}{\sigma^4}\left(B' + \sigma\sqrt{2\log(4T/\delta)}\right).
\]
\end{lemma}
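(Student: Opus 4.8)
The plan is to expand both GP posterior means in closed form and bound their difference by a telescoping argument that isolates the error coming from the kernel matrix inverse and the error coming from the cross-covariance vector. Writing $M \triangleq \boldsymbol{K}_t + \sigma^2 I$ and $\widetilde{M} \triangleq \widetilde{\boldsymbol{K}}_t + \sigma^2 I$, recall that $\mu_{\text{fb}[t]}(\mathbf{x}) = \boldsymbol{k}_t(\mathbf{x})^\top M^{-1}\mathbf{y}_t$ and $\widetilde{\mu}_{\text{fb}[t]}(\mathbf{x}) = \widetilde{\boldsymbol{k}}_t(\mathbf{x})^\top \widetilde{M}^{-1}\mathbf{y}_t$, where $\mathbf{y}_t$ is the vector of the $\text{fb}[t]$ observations collected so far. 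Inserting the intermediate term $\boldsymbol{k}_t(\mathbf{x})^\top \widetilde{M}^{-1}\mathbf{y}_t$ and using the triangle inequality together with sub-multiplicativity of the operator norm yields
\[
|\mu_{\text{fb}[t]}(\mathbf{x}) - \widetilde{\mu}_{\text{fb}[t]}(\mathbf{x})| \le \norm{\boldsymbol{k}_t(\mathbf{x})}_2 \norm{M^{-1} - \widetilde{M}^{-1}}_2 \norm{\mathbf{y}_t}_2 + \norm{\boldsymbol{k}_t(\mathbf{x}) - \widetilde{\boldsymbol{k}}_t(\mathbf{x})}_2 \norm{\widetilde{M}^{-1}}_2 \norm{\mathbf{y}_t}_2 .
\]

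For the kernel-side factors I would simply reuse the estimates already established inside the proof of Lemma~\ref{lemma:bound:difference:gp:post:std}, all of which follow from Proposition~\ref{prop:arora} and $\sigma^2 \le 1$: namely $\norm{\boldsymbol{k}_t(\mathbf{x})}_2 \le K_0\sqrt{t}$ (each entry bounded by $K_0$), $\norm{\boldsymbol{k}_t(\mathbf{x}) - \widetilde{\boldsymbol{k}}_t(\mathbf{x})}_2 \le (L+1)\varepsilon\sqrt{t}$, $\norm{\widetilde{M}^{-1}}_2 \le \sigma^{-2}$, and $\norm{M^{-1} - \widetilde{M}^{-1}}_2 \le t(L+1)\varepsilon/\sigma^4$.

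The only genuinely new ingredient is a high-probability bound on $\norm{\mathbf{y}_t}_2$. Since $y^i_\tau = f(\mathbf{x}^i_\tau) + \zeta^i_\tau$ with $|f(\cdot)| \le B'$ and $\zeta^i_\tau \sim \mathcal{N}(0,\sigma^2)$ drawn independently of the selected inputs, a Gaussian tail bound gives $|\zeta^i_\tau| \le \sigma\sqrt{2\log(4T/\delta)}$ with probability $\ge 1 - \delta/(4T)$ per observation; a union bound over the at most $T$ collected observations then gives $\max_{\tau,i}|\zeta^i_\tau| \le \sigma\sqrt{2\log(4T/\delta)}$ with probability $\ge 1 - \delta/4$, so that on this event $\norm{\mathbf{y}_t}_2 \le \sqrt{t}\,(B' + \sigma\sqrt{2\log(4T/\delta)})$ simultaneously for all $t$. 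Substituting everything into the display, the first summand becomes $K_0\sqrt{t}\cdot (t(L+1)\varepsilon/\sigma^4)\cdot\sqrt{t}(B'+\sigma\sqrt{2\log(4T/\delta)})$ and the second $(L+1)\varepsilon\sqrt{t}\cdot\sigma^{-2}\cdot\sqrt{t}(B'+\sigma\sqrt{2\log(4T/\delta)})$; summing and bounding $K_0 t^2/\sigma^4 + t/\sigma^2 \le 2\hat{K}_0 t^2/\sigma^4$ (using $\sigma^2 \le 1$, $t \le t^2$, and $\hat{K}_0 \ge 1$) produces exactly the claimed bound.

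I expect the main obstacle to be the bookkeeping around $\norm{\mathbf{y}_t}_2$: making the probability budget ($\delta/4$) and the logarithmic constant ($\log(4T/\delta)$) match the statement requires the sharper Gaussian tail $\mathbb{P}(|Z|>a)\le e^{-a^2/2}$ rather than $2e^{-a^2/2}$, care that the union bound is over all $T$ observations uniformly in $t$, and a remark that adaptivity of the input selection does not break the independence of the noise terms used in that union bound. Everything else is routine operator-norm algebra that parallels the already-completed proof of Lemma~\ref{lemma:bound:difference:gp:post:std}.
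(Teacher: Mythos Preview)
Your proposal is correct and follows essentially the same route as the paper: the paper also telescopes through the intermediate term $\boldsymbol{k}_t(\mathbf{x})^\top(\widetilde{\boldsymbol{K}}_t+\sigma^2 I)^{-1}\mathbf{y}_t$, reuses the operator-norm estimates from the proof of Lemma~\ref{lemma:bound:difference:gp:post:std}, and obtains the bound $\norm{\mathbf{y}_t}_2 \le \sqrt{t}\bigl(B'+\sigma\sqrt{2\log(4T/\delta)}\bigr)$ via exactly the Gaussian tail $\mathbb{P}(|\epsilon|>z\sigma)\le e^{-z^2/2}$ plus a union bound over the $T$ observations that you anticipate. The only cosmetic difference is that the paper writes $\widetilde{\boldsymbol{k}}_t(\mathbf{x})=\boldsymbol{k}_t(\mathbf{x})+(L+1)\varepsilon\,\boldsymbol{\nu}(\mathbf{x})$ with $\norm{\boldsymbol{\nu}(\mathbf{x})}_\infty\le 1$ rather than invoking $\norm{\boldsymbol{k}_t(\mathbf{x})-\widetilde{\boldsymbol{k}}_t(\mathbf{x})}_2\le (L+1)\varepsilon\sqrt{t}$ directly, which is the same estimate.
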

\begin{proof}
Define $\mathbf{y}_t=[y_{\tau}]_{\tau=1,\ldots,\text{fb}[t]}$. We have that $y_{\tau} = f(\mathbf{x}_{\tau}) + \epsilon$ where $\epsilon \sim \mathcal{N}(0,\sigma^2)$. Standard Gaussian concentration tells us that $|\epsilon| \leq z\sigma$ with probability of $\geq 1-\exp(-z^2/2)$. Substituting $z=\sqrt{2\log(4T/\delta)}$ and making use of the assumption that $|f(\mathbf{x})|\leq B',\forall \mathbf{x}\in\mathcal{X}$, we have that $|y_{\tau}|\leq B' + \sigma\sqrt{2\log(4T/\delta)}$ with probability of $\geq 1-\delta/(4T)$.
Now taking a union bound over all $T$ iterations, we have that $|y_{\tau}|\leq B' + \sigma\sqrt{2\log(4T/\delta)},\forall \tau=1,\ldots,T$ with probability of $\geq 1-\delta/4$.
This further implies that $\norm{\mathbf{y}_t}_2 =\sqrt{\sum^{\text{fb}[t]}_{\tau=1}y_{\tau}^2} \leq \sqrt{t} \left( B' + \sigma\sqrt{2\log(4T/\delta)} \right)$.
Now we are ready to bound the term in question:
\begin{equation}
\begin{split}
|&\mu_{\text{fb}[t]}(\mathbf{x}) - \widetilde{\mu}_{\text{fb}[t]}(\mathbf{x})| = |\boldsymbol{k}_t(\mathbf{x})^{\top}(K_t+\sigma^2 I)^{-1}\mathbf{y}_t - \widetilde{\boldsymbol{k}}_t(\mathbf{x})^{\top}(\widetilde{K}_t+\sigma^2 I)^{-1}\mathbf{y}_t|\\
&= |\boldsymbol{k}_t(\mathbf{x})^{\top}(K_t+\sigma^2 I)^{-1}\mathbf{y}_t - \boldsymbol{k}_t(\mathbf{x})^{\top}(\widetilde{K}_t+\sigma^2 I)^{-1}\mathbf{y}_t - (L+1)\varepsilon \boldsymbol{\nu}(\mathbf{x})^{\top}(\widetilde{K}_t+\sigma^2 I)^{-1}\mathbf{y}_t|\\
&\leq \norm{\boldsymbol{k}_t(\mathbf{x})}_2 \norm{(K_t+\sigma^2 I)^{-1} - (\widetilde{K}_t+\sigma^2 I)^{-1}} \norm{\mathbf{y}_t}_2 +  (L+1)\varepsilon \norm{\boldsymbol{\nu}(\mathbf{x})}_2\norm{(\widetilde{K}_t+\sigma^2 I)^{-1}}_2\norm{\mathbf{y}_t}_2\\
&\leq K_0\sqrt{t} \frac{t(L+1)\varepsilon}{\sigma^4} \sqrt{t} \left( B' + \sigma\sqrt{2\log(4T/\delta)} \right) + (L+1)\varepsilon \sqrt{t}  \frac{1}{\sigma^2} \sqrt{t} \left( B' + \sigma\sqrt{2\log(4T/\delta)} \right)\\
&\leq 2\hat{K}_0\frac{t^2(L+1)\varepsilon}{\sigma^4}\left(B' + \sigma\sqrt{2\log(4T/\delta)}\right).
\end{split}
\end{equation}
\end{proof}

Next, similar to the proof in Appendix~\ref{app:sec:infinite:width}, here we also need a lemma showing the concentration of the function $f$.
Define $\beta_t = 2\log(2\pi^2t^2|\mathcal{X}|/(3\delta))$, and $c_t = \beta_t (1+\sqrt{2\log(|\mathcal{X}|t^2)})$. Note that the value of $\beta_t$ defined here is slightly different due to the use of different error probabilities (i.e., we have used an error probability of $\delta/2$ in the proof in Appendix~\ref{app:sec:infinite:width} yet $\delta/4$ in this section).
\begin{lemma}
\label{lemma:event:Ef:finite:width}
$|\mu_{\text{fb}[t]}(\mathbf{x}) - f(\mathbf{x})| \leq \beta_t\sigma_{\text{fb}[t]}(\mathbf{x}),\forall \mathbf{x}\in\mathcal{X}$, with probability of $\geq 1-\delta/4,\forall t\geq1$.
\end{lemma}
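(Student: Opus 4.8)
The plan is to reprove this concentration bound by following the argument of Lemma~\ref{lemma:event:Ef} verbatim (which itself follows Lemma 5.1 of~\cite{srinivas2009gaussian}), since the statement concerns only the \emph{exact}-NTK posterior quantities $\mu_{\text{fb}[t]}$ and $\sigma_{\text{fb}[t]}$ and therefore does not interact with the finite-width/empirical-NTK approximation at all; the only difference from Lemma~\ref{lemma:event:Ef} is a bookkeeping adjustment of the constant inside $\beta_t$ to fit the smaller error budget of $\delta/4$ used in this section. First I would fix an iteration $t\geq 1$ (so that $\text{fb}[t]$ is a fixed deterministic index, by synchronicity of the batches) and an input $\mathbf{x}\in\mathcal{X}$, and condition on the completed observations $\mathbf{y}_{1:\text{fb}[t]}$. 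Because $f$ is assumed to be a sample from $\mathcal{GP}(0,\Theta)$ and $\mu_{\text{fb}[t]},\sigma_{\text{fb}[t]}$ are computed from exactly those completed observations (not from the pending, not-yet-evaluated inputs $\mathbf{x}_{\text{fb}[t]+1},\ldots,\mathbf{x}_{t-1}$ of the current batch), the posterior of $f(\mathbf{x})$ is $\mathcal{N}(\mu_{\text{fb}[t]}(\mathbf{x}),\sigma^2_{\text{fb}[t]}(\mathbf{x}))$, so $(f(\mathbf{x})-\mu_{\text{fb}[t]}(\mathbf{x}))/\sigma_{\text{fb}[t]}(\mathbf{x})$ is standard normal.

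Next I would apply the Gaussian tail inequality $\mathbb{P}(|Z|>c)\leq e^{-c^2/2}$ with $c=\sqrt{\beta_t}$ and plug in $\beta_t=2\log(2\pi^2 t^2|\mathcal{X}|/(3\delta))$, which gives a failure probability of at most $3\delta/(2\pi^2 t^2|\mathcal{X}|)$ for this single $\mathbf{x}$. A union bound over the $|\mathcal{X}|$ inputs yields failure probability at most $3\delta/(2\pi^2 t^2)$ at iteration $t$, and a further union bound over all $t\geq 1$ gives total failure probability at most $\frac{3\delta}{2\pi^2}\sum_{t\geq1} t^{-2}=\frac{3\delta}{2\pi^2}\cdot\frac{\pi^2}{6}=\frac{\delta}{4}$, as required. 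Finally, since $\beta_t\geq 1$ for all $t$ (because $2\pi^2 t^2|\mathcal{X}|/(3\delta)\geq 2\pi^2/3>\sqrt{e}$), the obtained bound $|\mu_{\text{fb}[t]}(\mathbf{x})-f(\mathbf{x})|\leq\sqrt{\beta_t}\,\sigma_{\text{fb}[t]}(\mathbf{x})$ implies the stated (weaker) bound $|\mu_{\text{fb}[t]}(\mathbf{x})-f(\mathbf{x})|\leq\beta_t\,\sigma_{\text{fb}[t]}(\mathbf{x})$; the looser form is kept here only so that it lines up with the definition of $c_t$ used later in the finite-width regret proof.

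There is essentially no hard step here: the only point requiring a moment's care is checking that $\mu_{\text{fb}[t]}$ and $\sigma_{\text{fb}[t]}$ are measurable with respect to $\mathbf{y}_{1:\text{fb}[t]}$ alone, so that the adaptive, data-dependent selection of the queries (including the pending ones in the current batch) does not disturb the conditional Gaussianity of $f(\mathbf{x})$; and, relatedly, tracking the constants so that the per-iteration failure probabilities sum to exactly $\delta/4$ rather than $\delta/2$, which is precisely why the argument of $\log$ in $\beta_t$ carries the extra factor of $2$ compared with Lemma~\ref{lemma:event:Ef}. I do not anticipate needing Proposition~\ref{prop:arora} for this lemma, since the claim is purely about the exact-NTK GP posterior.
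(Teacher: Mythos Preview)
Your proposal is correct and follows essentially the same approach as the paper, which simply states that the proof is identical to that of Lemma~\ref{lemma:event:Ef} (i.e., Gaussian posterior concentration plus union bounds \`a la Lemma~5.1 of~\cite{srinivas2009gaussian}), with the only change being the error budget $\delta/4$ instead of $\delta/2$. Your additional remarks---that the lemma concerns only the exact-NTK posterior and hence does not use Proposition~\ref{prop:arora}, and that the Srinivas argument actually yields the sharper $\sqrt{\beta_t}$ which is then relaxed to $\beta_t$---are accurate observations that the paper leaves implicit.
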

The proof of Lemma~\ref{lemma:event:Ef:finite:width} is the same as that of Lemma~\ref{lemma:event:Ef}.
The next Lemma proves the concentration of the objective function $f$ around the GP posterior mean calculated using the empirical NTK $\widetilde{k}$, which consists of an additional error term $\epsilon_{m,t}$ due to the use of the empirical NTK compared with Lemma~\ref{lemma:event:Ef:finite:width} above.
\begin{lemma}
\label{lemma:event:Ef:tilde}
Define 
\[\epsilon_{m,t} \triangleq 2\hat{K}_0\frac{t^2(L+1)\varepsilon}{\sigma^4}\left(B' + \sigma\sqrt{2\log(4T/\delta)}\right) + \beta_t\sqrt{(L+1)\varepsilon \left( 1+\frac{4 \hat{K}_0^2 t^2}{\sigma^4} \right)}.
\]
Define $E^{\widetilde{f}}(t)$ as the event that $|\widetilde{\mu}_{\text{fb}[t]}(\mathbf{x}) - f(\mathbf{x})| \leq \beta_t\widetilde{\sigma}_{\text{fb}[t]}(\mathbf{x}) + \epsilon_{m,t},\forall \mathbf{x}\in\mathcal{X}$. We have that $\mathbb{P}(E^{\widetilde{f}}(t)) \geq 1-\delta/2,\forall t\geq1$.
\end{lemma}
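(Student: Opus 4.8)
The plan is to obtain $E^{\widetilde{f}}(t)$ by a single application of the triangle inequality, combining three facts already established: the concentration of $f$ around the \emph{exact}-NTK posterior mean (Lemma~\ref{lemma:event:Ef:finite:width}), the bound on the gap between exact and empirical posterior standard deviations (Lemma~\ref{lemma:bound:difference:gp:post:std}), and the bound on the gap between exact and empirical posterior means (Lemma~\ref{lemma:bound:difference:gp:post:mean}). All of this takes place in the conditional world where the event of Proposition~\ref{prop:arora} holds, which is the standing assumption of this section.

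First I would write, for every $\mathbf{x}\in\mathcal{X}$,
\[
|\widetilde{\mu}_{\text{fb}[t]}(\mathbf{x}) - f(\mathbf{x})| \leq |\widetilde{\mu}_{\text{fb}[t]}(\mathbf{x}) - \mu_{\text{fb}[t]}(\mathbf{x})| + |\mu_{\text{fb}[t]}(\mathbf{x}) - f(\mathbf{x})|.
\]
On the event of Lemma~\ref{lemma:event:Ef:finite:width} the second summand is at most $\beta_t\sigma_{\text{fb}[t]}(\mathbf{x})$; I would then pass from $\sigma_{\text{fb}[t]}$ to $\widetilde{\sigma}_{\text{fb}[t]}$ using Lemma~\ref{lemma:bound:difference:gp:post:std}, which gives $\beta_t\sigma_{\text{fb}[t]}(\mathbf{x}) \leq \beta_t\widetilde{\sigma}_{\text{fb}[t]}(\mathbf{x}) + \beta_t\sqrt{(L+1)\varepsilon(1+4\hat{K}_0^2 t^2/\sigma^4)}$. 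On the event of Lemma~\ref{lemma:bound:difference:gp:post:mean} the first summand is at most $2\hat{K}_0\frac{t^2(L+1)\varepsilon}{\sigma^4}(B'+\sigma\sqrt{2\log(4T/\delta)})$. Adding the three contributions reproduces exactly $\beta_t\widetilde{\sigma}_{\text{fb}[t]}(\mathbf{x}) + \epsilon_{m,t}$, and since each ingredient is uniform over $\mathbf{x}\in\mathcal{X}$ (and, for the two deviation bounds, over $t$ as well), the inequality holds simultaneously for all $\mathbf{x}\in\mathcal{X}$.

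For the probability, note that the $\delta/4$ budget of Proposition~\ref{prop:arora} has already been spent as the standing assumption of the section, so conditioned on that event the only remaining random failures are those of Lemma~\ref{lemma:event:Ef:finite:width} and Lemma~\ref{lemma:bound:difference:gp:post:mean}, each of probability at most $\delta/4$; a union bound then yields $\mathbb{P}(E^{\widetilde{f}}(t)) \geq 1-\delta/2$ for all $t\geq1$. I do not expect a genuine technical obstacle here: the perturbation analysis of $(\boldsymbol{K}_t+\sigma^2 I)^{-1}$ underlying the two deviation bounds has already been carried out, so the only thing to be careful about is the error-probability bookkeeping---in particular, not re-charging the $\delta/4$ already assigned to Proposition~\ref{prop:arora}---and attaching the $\sqrt{(L+1)\varepsilon(\cdot)}$ term to $\beta_t$ so that $\beta_t$ multiplies $\widetilde{\sigma}_{\text{fb}[t]}$ cleanly in the final bound.
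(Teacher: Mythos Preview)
Your proposal is correct and follows essentially the same argument as the paper: the same triangle-inequality split $|\widetilde{\mu}_{\text{fb}[t]}(\mathbf{x}) - f(\mathbf{x})| \leq |\widetilde{\mu}_{\text{fb}[t]}(\mathbf{x}) - \mu_{\text{fb}[t]}(\mathbf{x})| + |\mu_{\text{fb}[t]}(\mathbf{x}) - f(\mathbf{x})|$, the same three lemmas applied in the same roles, and the same $\delta/4+\delta/4$ union bound (with Proposition~\ref{prop:arora} treated as a standing assumption). Your probability bookkeeping matches the paper's exactly.
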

\begin{proof}
\begin{equation}
\begin{split}
|&\widetilde{\mu}_{\text{fb}[t]}(\mathbf{x}) - f(\mathbf{x})| \leq |\widetilde{\mu}_{\text{fb}[t]}(\mathbf{x}) - \mu_{\text{fb}[t]}(\mathbf{x})| + |\mu_{\text{fb}[t]}(\mathbf{x}) - f(\mathbf{x})|\\
&\stackrel{(a)}{\leq} 2\hat{K}_0\frac{t^2(L+1)\varepsilon}{\sigma^4}\left(B' + \sigma\sqrt{2\log(4T/\delta)}\right) + \beta_t \sigma_{\text{fb}[t]}(\mathbf{x})\\
&\stackrel{(b)}{\leq} 2\hat{K}_0\frac{t^2(L+1)\varepsilon}{\sigma^4}\left(B' + \sigma\sqrt{2\log(4T/\delta)}\right) + \beta_t \left(\widetilde{\sigma}_{\text{fb}[t]}(\mathbf{x}) +  \sqrt{(L+1)\varepsilon \left( 1+\frac{4 \hat{K}_0^2 t^2}{\sigma^4} \right)}\right)\\
&= \beta_t \widetilde{\sigma}_{\text{fb}[t]}(\mathbf{x}) + 2\hat{K}_0\frac{t^2(L+1)\varepsilon}{\sigma^4}\left(B' + \sigma\sqrt{2\log(4T/\delta)}\right) + \beta_t\sqrt{(L+1)\varepsilon \left( 1+\frac{4 \hat{K}_0^2 t^2}{\sigma^4} \right)}\\
&= \beta_t \widetilde{\sigma}_{\text{fb}[t]}(\mathbf{x}) + \epsilon_{m,t}
\end{split}
\end{equation}
$(a)$ follows from Lemma~\ref{lemma:bound:difference:gp:post:mean} and Lemma~\ref{lemma:event:Ef:finite:width} and hence holds with probability of $\geq 1-\delta/4-\delta/4=1-\delta/2$, and $(b)$ results from Lemma~\ref{lemma:bound:difference:gp:post:std}.
\end{proof}

Denote by $\widetilde{f}_t$ the sampled function in iteration $t$ using the empirical NTK, i.e., $\widetilde{f}_t \sim \mathcal{GP}(\widetilde{\mu}_{\text{fb}[t]}(\cdot), \beta_t^2 \widetilde{\sigma}^2_{\text{fb}[t]}(\cdot,\cdot))$.
\begin{lemma}
\label{lemma:event:Eft:finite:width}
Define $E^{\widetilde{f}_t}(t)$ as the event that $|\widetilde{\mu}_{\text{fb}[t]}(\mathbf{x}) - \widetilde{f}_t(\mathbf{x})| \leq \beta_t \sqrt{2\log(|\mathcal{X}|t^2)} \widetilde{\sigma}_{\text{fb}[t]}(\mathbf{x}),\forall \mathbf{x}\in\mathcal{X}$. We have that $\mathbb{P}(E^{\widetilde{f}_t}(t)) \geq 1-1/t^2,\forall t\geq1$.
\end{lemma}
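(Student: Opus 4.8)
The plan is to reproduce essentially verbatim the argument behind Lemma~\ref{lemma:event:Eft} (which itself follows Lemma 5 of~\cite{chowdhury2017kernelized}), but with every exact-NTK object replaced by its empirical-NTK analogue; no new ideas involving the exact-versus-empirical kernel mismatch are needed here, since that mismatch enters the analysis only later, through the additive term $\epsilon_{m,t}$ of Lemma~\ref{lemma:event:Ef:tilde}. The single fact I rely on is that, by construction, $\widetilde{f}_t \sim \mathcal{GP}(\widetilde{\mu}_{\text{fb}[t]}(\cdot),\beta_t^2\widetilde{\sigma}^2_{\text{fb}[t]}(\cdot,\cdot))$ has Gaussian marginals, so that for each fixed $\mathbf{x}\in\mathcal{X}$ the scalar $\widetilde{f}_t(\mathbf{x})$ is distributed as $\mathcal{N}(\widetilde{\mu}_{\text{fb}[t]}(\mathbf{x}),\beta_t^2\widetilde{\sigma}^2_{\text{fb}[t]}(\mathbf{x}))$.

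First, for a fixed $\mathbf{x}$ with $\widetilde{\sigma}_{\text{fb}[t]}(\mathbf{x})>0$, I would set $Z \triangleq (\widetilde{f}_t(\mathbf{x}) - \widetilde{\mu}_{\text{fb}[t]}(\mathbf{x}))/(\beta_t\widetilde{\sigma}_{\text{fb}[t]}(\mathbf{x})) \sim \mathcal{N}(0,1)$ and apply the standard Gaussian tail bound $\mathbb{P}(|Z|>a)\leq e^{-a^2/2}$ with $a = \sqrt{2\log(|\mathcal{X}|t^2)}$, which gives $\mathbb{P}\big(|\widetilde{f}_t(\mathbf{x})-\widetilde{\mu}_{\text{fb}[t]}(\mathbf{x})| > \beta_t\sqrt{2\log(|\mathcal{X}|t^2)}\,\widetilde{\sigma}_{\text{fb}[t]}(\mathbf{x})\big) \leq 1/(|\mathcal{X}|t^2)$. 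For any $\mathbf{x}$ with $\widetilde{\sigma}_{\text{fb}[t]}(\mathbf{x})=0$ we have $\widetilde{f}_t(\mathbf{x})=\widetilde{\mu}_{\text{fb}[t]}(\mathbf{x})$ almost surely, so the defining inequality of $E^{\widetilde{f}_t}(t)$ holds with probability one at such $\mathbf{x}$. Then I would union-bound over the finite set $\mathcal{X}$ to obtain $\mathbb{P}(\overline{E^{\widetilde{f}_t}(t)}) \leq |\mathcal{X}|\cdot 1/(|\mathcal{X}|t^2) = 1/t^2$, i.e. $\mathbb{P}(E^{\widetilde{f}_t}(t)) \geq 1-1/t^2$ for all $t\geq1$.

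The main ``obstacle'' is essentially that there is none: the lemma is a direct transcription of the infinite-width concentration bound, and the only subtlety worth a sentence is the degenerate zero-variance case, handled above. I would also flag explicitly that the statement is required only for each iteration $t$ individually (no union over $t$ is taken), which is what keeps the per-iteration failure probability at $1/t^2$ and is exactly the form needed when these events are subsequently combined with the saturated-set and martingale/Azuma arguments as in Appendix~\ref{app:sec:infinite:width}.
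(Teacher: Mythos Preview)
Your proposal is correct and matches the paper's approach exactly: the paper does not give a detailed proof but simply states that Lemma~\ref{lemma:event:Eft:finite:width} ``is the counterpart to Lemma~\ref{lemma:event:Eft} \ldots and can be proved using the same techniques,'' which is precisely the Gaussian-tail-plus-union-bound argument you wrote out. Your additional remarks on the degenerate $\widetilde{\sigma}_{\text{fb}[t]}(\mathbf{x})=0$ case and the per-$t$ nature of the bound are correct and go slightly beyond what the paper bothers to say.
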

Lemma~\ref{lemma:event:Eft:finite:width} is the counterpart to Lemma~\ref{lemma:event:Eft} in Appendix~\ref{app:sec:infinite:width} and can be proved using the same techniques.
Of note, conditioned on both events $E^{\widetilde{f}}(t)$ and $E^{\widetilde{f}_t}(t)$, we have that
\begin{equation}
\begin{split}
|f(\mathbf{x}) &- \widetilde{f}_t(\mathbf{x})| \leq |f(\mathbf{x}) - \widetilde{\mu}_{\text{fb}[t]}(\mathbf{x})| + |\widetilde{\mu}_{\text{fb}[t]}(\mathbf{x}) - \widetilde{f}_t(\mathbf{x})|\\
&\leq \beta_t\widetilde{\sigma}_{\text{fb}[t]}(\mathbf{x}) + \epsilon_{m,t} + \beta_t \sqrt{2\log(|\mathcal{X}|t^2)} \widetilde{\sigma}_{\text{fb}[t]}(\mathbf{x})\\
&= c_t \widetilde{\sigma}_{\text{fb}[t]}(\mathbf{x}) + \epsilon_{m,t}.
\end{split}
\end{equation}


Next, we similarly define the set of saturated inputs.
\begin{definition}
\label{def:saturated_set:new}
Define the set of saturated inputs in iteration $t$ as
\[
S_t = \{ \mathbf{x} \in \mathcal{X} : \Delta(\mathbf{x}) > c_t \widetilde{\sigma}_{\text{fb}[t]}(\mathbf{x}) + 2\epsilon_{m,t} \},
\]
in which $\Delta(\mathbf{x}) = f(\mathbf{x}^*) - f(\mathbf{x})$.
\end{definition}
Again, $\mathbf{x}^*$ is always unsaturated.

\begin{lemma}
\label{lemma:uniform_lower_bound:new}
For any $\mathcal{F}_{t-1}$, conditioned on the events $E^{\widetilde{f}}(t)$, we have that $\forall \mathbf{x}\in \mathcal{X}$,
\begin{equation}
\mathbb{P}\left(\widetilde{f}_t(\mathbf{x}) + \epsilon_{m,t} > f(\mathbf{x}) | \mathcal{F}_{t-1}\right) \geq p,
\end{equation}
in which $p=\frac{1}{4e\sqrt{\pi}}$.
\end{lemma}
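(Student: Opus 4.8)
The plan is to replicate the proof of Lemma~\ref{lemma:uniform_lower_bound} almost verbatim, substituting the empirical-NTK posterior quantities $\widetilde{\mu}_{\text{fb}[t]}$ and $\widetilde{\sigma}_{\text{fb}[t]}$ for the exact-NTK ones $\mu_{\text{fb}[t]}$ and $\sigma_{\text{fb}[t]}$, and carrying the additive slack $\epsilon_{m,t}$ through the argument. Recall that by construction $\widetilde{f}_t(\mathbf{x})$, conditioned on $\mathcal{F}_{t-1}$, is distributed as $\mathcal{N}(\widetilde{\mu}_{\text{fb}[t]}(\mathbf{x}), \beta_t^2\widetilde{\sigma}^2_{\text{fb}[t]}(\mathbf{x}))$, so the only new ingredient relative to the infinite-width case is bookkeeping of $\epsilon_{m,t}$.

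First I would subtract $\widetilde{\mu}_{\text{fb}[t]}(\mathbf{x})$ from both sides of the event $\{\widetilde{f}_t(\mathbf{x}) + \epsilon_{m,t} > f(\mathbf{x})\}$ and divide through by $\beta_t\widetilde{\sigma}_{\text{fb}[t]}(\mathbf{x})$, so that the left-hand side becomes a standard normal variable $Z \triangleq (\widetilde{f}_t(\mathbf{x}) - \widetilde{\mu}_{\text{fb}[t]}(\mathbf{x}))/(\beta_t\widetilde{\sigma}_{\text{fb}[t]}(\mathbf{x}))$ and the event becomes $\{Z > (f(\mathbf{x}) - \epsilon_{m,t} - \widetilde{\mu}_{\text{fb}[t]}(\mathbf{x}))/(\beta_t\widetilde{\sigma}_{\text{fb}[t]}(\mathbf{x}))\}$. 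Next I would bound the threshold by $1$: since we condition on $E^{\widetilde{f}}(t)$, Lemma~\ref{lemma:event:Ef:tilde} gives $f(\mathbf{x}) - \widetilde{\mu}_{\text{fb}[t]}(\mathbf{x}) \leq |f(\mathbf{x}) - \widetilde{\mu}_{\text{fb}[t]}(\mathbf{x})| \leq \beta_t\widetilde{\sigma}_{\text{fb}[t]}(\mathbf{x}) + \epsilon_{m,t}$, hence $f(\mathbf{x}) - \epsilon_{m,t} - \widetilde{\mu}_{\text{fb}[t]}(\mathbf{x}) \leq \beta_t\widetilde{\sigma}_{\text{fb}[t]}(\mathbf{x})$, i.e. the threshold is at most $1$. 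Therefore $\mathbb{P}(\widetilde{f}_t(\mathbf{x}) + \epsilon_{m,t} > f(\mathbf{x}) \mid \mathcal{F}_{t-1}) \geq \mathbb{P}(Z > 1 \mid \mathcal{F}_{t-1})$. Finally I would invoke the Gaussian anti-concentration inequality $\mathbb{P}(Z > a) \geq e^{-a^2}/(4\sqrt{\pi}a)$ at $a=1$, which yields $\mathbb{P}(Z > 1) \geq e^{-1}/(4\sqrt{\pi}) = p$, completing the argument.

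No serious obstacle is expected here; this is a routine adaptation. The single detail requiring care is placing $\epsilon_{m,t}$ on the correct side so that it exactly offsets the additive term in the concentration band of $E^{\widetilde{f}}(t)$ — this is precisely why the lemma probes $\widetilde{f}_t(\mathbf{x}) + \epsilon_{m,t} > f(\mathbf{x})$ rather than $\widetilde{f}_t(\mathbf{x}) > f(\mathbf{x})$, and it is what keeps the threshold bounded by $1$ rather than by $1 + 2\epsilon_{m,t}/(\beta_t\widetilde{\sigma}_{\text{fb}[t]}(\mathbf{x}))$, which would not suffice for the clean constant $p$. The conditioning on $\mathcal{F}_{t-1}$ and the use of the (conditionally Gaussian) law of $\widetilde{f}_t(\mathbf{x})$ are identical to the infinite-width case, so nothing else changes; this same $\epsilon_{m,t}$-shifted definition of the saturated set in Definition~\ref{def:saturated_set:new} is what will make the downstream analogues of Lemmas~\ref{lemma:prob_unsaturated}--\ref{lemma:bound:cum:regret} go through.
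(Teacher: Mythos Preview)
Your proposal is correct and follows essentially the same approach as the paper's own proof: standardize, use the concentration band from $E^{\widetilde{f}}(t)$ (Lemma~\ref{lemma:event:Ef:tilde}) to bound the threshold by $1$ after the $\epsilon_{m,t}$ shift cancels the additive slack, then apply the Gaussian anti-concentration inequality at $a=1$. The paper's write-up inserts one additional intermediate inequality (replacing $f(\mathbf{x})-\widetilde{\mu}_{\text{fb}[t]}(\mathbf{x})$ by its absolute value before moving $\epsilon_{m,t}$), but this is purely cosmetic and your direct version is equivalent.
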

\begin{proof}

\begin{equation}
\begin{split}
\mathbb{P}\left(\widetilde{f}_t(\mathbf{x}) + \epsilon_{m.t} > f(\mathbf{x}) | \mathcal{F}_{t-1}\right) &= 
\mathbb{P}\left(\frac{\widetilde{f}_t(\mathbf{x})-\widetilde{\mu}_{\text{fb}[t]}(\mathbf{x})  + \epsilon_{m.t}}{\beta_t\widetilde{\sigma}_{\text{fb}[t]}(\mathbf{x})} > \frac{f(\mathbf{x})-\widetilde{\mu}_{\text{fb}[t]}(\mathbf{x})}{\beta_t\widetilde{\sigma}_{\text{fb}[t]}(\mathbf{x})} \Big| \mathcal{F}_{t-1}\right)\\
&\geq \mathbb{P}\left(\frac{\widetilde{f}_t(\mathbf{x})-\widetilde{\mu}_{\text{fb}[t]}(\mathbf{x})  + \epsilon_{m.t}}{\beta_t\widetilde{\sigma}_{\text{fb}[t]}(\mathbf{x})} > \frac{|f(\mathbf{x})-\widetilde{\mu}_{\text{fb}[t]}(\mathbf{x})|}{\beta_t\widetilde{\sigma}_{\text{fb}[t]}(\mathbf{x})} \Big| \mathcal{F}_{t-1}\right)\\
&= \mathbb{P}\left(\frac{\widetilde{f}_t(\mathbf{x})-\widetilde{\mu}_{\text{fb}[t]}(\mathbf{x})}{\beta_t\widetilde{\sigma}_{\text{fb}[t]}(\mathbf{x})} > \frac{|f(\mathbf{x})-\widetilde{\mu}_{\text{fb}[t]}(\mathbf{x})| - \epsilon_{m.t}}{\beta_t\widetilde{\sigma}_{\text{fb}[t]}(\mathbf{x})} \Big| \mathcal{F}_{t-1}\right)\\
&\stackrel{(a)}{\geq} \mathbb{P}\left(\frac{\widetilde{f}_t(\mathbf{x})-\widetilde{\mu}_{\text{fb}[t]}(\mathbf{x})}{\beta_t\widetilde{\sigma}_{\text{fb}[t]}(\mathbf{x})} > 1 \Big| \mathcal{F}_{t-1}\right)\\
&\stackrel{(b)}{\geq} \frac{\exp(-1)}{4\sqrt{\pi}}.
\end{split}
\end{equation}
$(a)$ follows from Lemma~\ref{lemma:event:Ef:tilde}, and $(b)$ follows because $\widetilde{f}_t(\mathbf{x}) \sim \mathcal{N}(\widetilde{\mu}_{\text{fb}[t]}(\mathbf{x}), \beta_t^2\widetilde{\sigma}^2_{\text{fb}[t]}(\mathbf{x}))$ and makes use of the Gaussian anti-concentration inequality.

\end{proof}

Next, we again prove a lower bound on the probability that the selected input is unsaturated.
\begin{lemma}
\label{lemma:prob_unsaturated:new}
For any $\mathcal{F}_{t-1}$, conditioned on the event $E^{\widetilde{f}}(t)$, we have that 
\[
\mathbb{P}\left(\mathbf{x}_t \in \mathcal{X}\setminus S_t, | \mathcal{F}_{t-1} \right) \geq p - 1/t^2.
\]
\end{lemma}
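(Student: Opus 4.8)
The plan is to mirror the argument used for the infinite-width case (Lemma~\ref{lemma:prob_unsaturated}), replacing the exact-NTK quantities $f_t$ and $\sigma_{\text{fb}[t]}$ with their empirical-NTK counterparts $\widetilde{f}_t$ and $\widetilde{\sigma}_{\text{fb}[t]}$, and carrying the additional additive slack $\epsilon_{m,t}$ through the chain of inequalities. First I would observe that, since $\mathbf{x}^*$ is always unsaturated (Definition~\ref{def:saturated_set:new}) and the query is selected by $\mathbf{x}_t = {\arg\max}_{\mathbf{x}\in\mathcal{X}}\widetilde{f}_t(\mathbf{x})$, the event $\{\widetilde{f}_t(\mathbf{x}^*) > \widetilde{f}_t(\mathbf{x}),\ \forall \mathbf{x}\in S_t\}$ forces $\mathbf{x}_t\notin S_t$; hence
\[
\mathbb{P}\left(\mathbf{x}_t\in\mathcal{X}\setminus S_t \mid \mathcal{F}_{t-1}\right) \;\geq\; \mathbb{P}\left(\widetilde{f}_t(\mathbf{x}^*) > \widetilde{f}_t(\mathbf{x}),\ \forall \mathbf{x}\in S_t \mid \mathcal{F}_{t-1}\right).
\]

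Next I would bound $\widetilde{f}_t(\mathbf{x})$ for saturated inputs. On the intersection of the two events $E^{\widetilde{f}}(t)$ and $E^{\widetilde{f}_t}(t)$, the concentration bound derived right after Lemma~\ref{lemma:event:Eft:finite:width} gives $|f(\mathbf{x}) - \widetilde{f}_t(\mathbf{x})| \leq c_t\widetilde{\sigma}_{\text{fb}[t]}(\mathbf{x}) + \epsilon_{m,t}$; combining this with the definition of $S_t$, which supplies $c_t\widetilde{\sigma}_{\text{fb}[t]}(\mathbf{x}) < \Delta(\mathbf{x}) - 2\epsilon_{m,t}$, yields for every $\mathbf{x}\in S_t$
\[
\widetilde{f}_t(\mathbf{x}) \leq f(\mathbf{x}) + c_t\widetilde{\sigma}_{\text{fb}[t]}(\mathbf{x}) + \epsilon_{m,t} < f(\mathbf{x}) + \Delta(\mathbf{x}) - \epsilon_{m,t} = f(\mathbf{x}^*) - \epsilon_{m,t}.
\]
Consequently, on $E^{\widetilde{f}}(t)\cap E^{\widetilde{f}_t}(t)$, the event $\{\widetilde{f}_t(\mathbf{x}^*) + \epsilon_{m,t} > f(\mathbf{x}^*)\}$ implies $\widetilde{f}_t(\mathbf{x}^*) > f(\mathbf{x}^*) - \epsilon_{m,t} > \widetilde{f}_t(\mathbf{x})$ for all $\mathbf{x}\in S_t$, which is precisely the shifted anti-concentration statement that Lemma~\ref{lemma:uniform_lower_bound:new} is designed to control.

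Finally I would assemble the pieces exactly as in the infinite-width proof: conditioning on $E^{\widetilde{f}}(t)$ and splitting on whether $E^{\widetilde{f}_t}(t)$ holds,
\[
\mathbb{P}\left(\mathbf{x}_t\in\mathcal{X}\setminus S_t \mid \mathcal{F}_{t-1}\right) \geq \mathbb{P}\left(\widetilde{f}_t(\mathbf{x}^*) + \epsilon_{m,t} > f(\mathbf{x}^*) \mid \mathcal{F}_{t-1}\right) - \mathbb{P}\left(\overline{E^{\widetilde{f}_t}(t)} \mid \mathcal{F}_{t-1}\right) \geq p - 1/t^2,
\]
where the first term is at least $p$ by Lemma~\ref{lemma:uniform_lower_bound:new} (available because we condition on $E^{\widetilde{f}}(t)$) and the second is at most $1/t^2$ by Lemma~\ref{lemma:event:Eft:finite:width}. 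I do not expect a genuine obstacle here; the only point requiring care is the bookkeeping of the additive error in Definition~\ref{def:saturated_set:new}: the coefficient $2\epsilon_{m,t}$ (rather than $\epsilon_{m,t}$) in the saturation threshold is exactly what is needed so that one copy of $\epsilon_{m,t}$ absorbs the $f\!\to\!\widetilde{f}_t$ discrepancy at $\mathbf{x}$ while the other leaves enough margin for the $+\epsilon_{m,t}$-shifted anti-concentration bound at $\mathbf{x}^*$ to close the chain.
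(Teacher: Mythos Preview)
Your proposal is correct and follows essentially the same approach as the paper: both reduce to the event $\{\widetilde{f}_t(\mathbf{x}^*) > \widetilde{f}_t(\mathbf{x}),\ \forall \mathbf{x}\in S_t\}$, upper-bound $\widetilde{f}_t(\mathbf{x})$ by $f(\mathbf{x}^*)-\epsilon_{m,t}$ on $E^{\widetilde{f}}(t)\cap E^{\widetilde{f}_t}(t)$ via Definition~\ref{def:saturated_set:new}, and then invoke Lemma~\ref{lemma:uniform_lower_bound:new} at $\mathbf{x}^*$ while subtracting off $\mathbb{P}(\overline{E^{\widetilde{f}_t}(t)})\leq 1/t^2$. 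Your explicit remark about why the saturation threshold carries $2\epsilon_{m,t}$ rather than $\epsilon_{m,t}$ is a nice clarification that the paper leaves implicit.
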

\begin{proof}
The proof here follows similar steps as the proof of Lemma~\ref{lemma:prob_unsaturated}. To begin with, we have the following relationship.
\begin{equation}
\mathbb{P}\left(\mathbf{x}_t \in \mathcal{X}\setminus S_t | \mathcal{F}_{t-1} \right) \geq \mathbb{P}\left( \widetilde{f}_t(\mathbf{x}^*) > \widetilde{f}_t(\mathbf{x}),\forall \mathbf{x} \in S_t | \mathcal{F}_{t-1} \right),
\end{equation}
The validity of this equation can be justified in a similar way as equation~\eqref{eq:lower_bound_prob_unsaturated} in the proof of Lemma~\ref{lemma:prob_unsaturated}, i.e., the event on the right hand sight implies the event on the left hand side.

Next, we assume that both events $E^{\widetilde{f}}(t)$ and $E^{\widetilde{f}}(t)$ are true, which allows us to derive an upper bound on $\widetilde{f}_t(\mathbf{x})$ for all $\mathbf{x}\in S_t$:
\begin{equation}
\begin{split}
    \widetilde{f}_t(\mathbf{x}) \stackrel{(a)}{\leq} f(\mathbf{x}) + c_t\widetilde{\sigma}_{\text{fb}[t]}(\mathbf{x}) + \epsilon_{m,t} \stackrel{(b)}{\leq} f(\mathbf{x}) + \Delta(\mathbf{x}) - \epsilon_{m,t} = f(\mathbf{x}^*) - \epsilon_{m,t},
\end{split}
\label{eq:bound_ftx_ftstar:new}
\end{equation}
in which $(a)$ follows from Lemmas~\ref{lemma:event:Ef:tilde} and~\ref{lemma:event:Eft:finite:width}, and $(b)$ is a result of Definition~\ref{def:saturated_set:new}.

Therefore,~\eqref{eq:bound_ftx_ftstar:new} implies that when both both events $E^{\widetilde{f}}(t)$ and $E^{\widetilde{f}_t}(t)$ hold,
\begin{equation}
    \mathbb{P}\left( \widetilde{f}_t(\mathbf{x}^*) > \widetilde{f}_t(\mathbf{x}),\forall \mathbf{x} \in S_t | \mathcal{F}_{t-1} \right) \geq \mathbb{P}\left( \widetilde{f}_t(\mathbf{x}^*) > f(\mathbf{x}^*) - \epsilon_{m,t} | \mathcal{F}_{t-1} \right).
\end{equation}

 Next, we can show that 
\begin{equation}
\begin{split}
    \mathbb{P}\left(\mathbf{x}_t \in \mathcal{X}\setminus S_t | \mathcal{F}_{t-1} \right) &\geq 
    \mathbb{P}\left( \widetilde{f}_t(\mathbf{x}^*) > \widetilde{f}_t(\mathbf{x}),\forall \mathbf{x} \in S_t | \mathcal{F}_{t-1} \right)\\
    &\geq \mathbb{P}\left( \widetilde{f}_t(\mathbf{x}^*) > f(\mathbf{x}^*) - \epsilon_{m,t} | \mathcal{F}_{t-1} \right) - \mathbb{P}\left(\overline{E^{\widetilde{f}_t}(t)} | \mathcal{F}_{t-1}\right)\\
    &\geq p - 1 / t^2,
\end{split}
\end{equation}
where the last inequality makes use of Lemma~\ref{lemma:uniform_lower_bound:new}.
\end{proof}

The next Lemma derives an upper bound on the expected instantaneous regret $r_t=f(\mathbf{x}^*)-f(\mathbf{x}_t)$.
\begin{lemma}
\label{lemma:upper_bound_expected_regret_new}
For any $\mathcal{F}_{t-1}$, conditioned on the event $E^{\widetilde{f}}(t)$, we have that
\begin{equation*}
\mathbb{E}\left[r_t | \mathcal{F}_{t-1}\right] \leq c_t e^C \left(1+\frac{10}{p}\right)\mathbb{E}\left[\sigma_{t-1}(\mathbf{x}_t) | \mathcal{F}_{t-1}\right] + \epsilon'_{m,t} + \frac{2B'}{t^2},
\end{equation*}
where 
\begin{equation}
\epsilon'_{m,t} \triangleq 3 c_t \sqrt{(L+1)\varepsilon \left( 1+\frac{4 \hat{K}_0^2 t^2}{\sigma^4} \right)} + 4\epsilon_{m,t}.
\end{equation}
\end{lemma}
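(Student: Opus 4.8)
The plan is to mirror the proof of Lemma~\ref{lemma:upper_bound_expected_regret}, inserting the approximation-error corrections of Lemmas~\ref{lemma:bound:difference:gp:post:std} and~\ref{lemma:event:Ef:tilde} exactly where the exact NTK is replaced by the empirical one. Throughout write $\eta_t \triangleq \sqrt{(L+1)\varepsilon\left(1 + 4\hat{K}_0^2 t^2/\sigma^4\right)}$, so that Lemma~\ref{lemma:bound:difference:gp:post:std} reads $|\sigma_{\text{fb}[t]}(\mathbf{x}) - \widetilde{\sigma}_{\text{fb}[t]}(\mathbf{x})| \le \eta_t$ for all $\mathbf{x}\in\mathcal{X}$, and $\epsilon'_{m,t} = 3c_t\eta_t + 4\epsilon_{m,t}$.

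First I would set $\overline{\mathbf{x}}_t \triangleq \arg\min_{\mathbf{x}\in\mathcal{X}\setminus S_t}\sigma_{\text{fb}[t]}(\mathbf{x})$, the unsaturated input with the smallest \emph{exact}-NTK posterior standard deviation (deterministic given $\mathcal{F}_{t-1}$). Exactly as in~\eqref{eq:app:sigma:bar}, conditioning on $E^{\widetilde{f}}(t)$ and applying Lemma~\ref{lemma:prob_unsaturated:new} gives $\mathbb{E}[\sigma_{\text{fb}[t]}(\mathbf{x}_t)\mid\mathcal{F}_{t-1}] \ge \sigma_{\text{fb}[t]}(\overline{\mathbf{x}}_t)\,(p - 1/t^2)$.

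Next, under $E^{\widetilde{f}}(t)$ and $E^{\widetilde{f}_t}(t)$, I would decompose the instantaneous regret as $r_t = \Delta(\overline{\mathbf{x}}_t) + \left(f(\overline{\mathbf{x}}_t) - \widetilde{f}_t(\overline{\mathbf{x}}_t)\right) + \left(\widetilde{f}_t(\overline{\mathbf{x}}_t) - \widetilde{f}_t(\mathbf{x}_t)\right) + \left(\widetilde{f}_t(\mathbf{x}_t) - f(\mathbf{x}_t)\right)$ and bound the four pieces: $\Delta(\overline{\mathbf{x}}_t) \le c_t\widetilde{\sigma}_{\text{fb}[t]}(\overline{\mathbf{x}}_t) + 2\epsilon_{m,t}$ since $\overline{\mathbf{x}}_t\notin S_t$ (Definition~\ref{def:saturated_set:new}); the second and fourth pieces are each at most $c_t\widetilde{\sigma}_{\text{fb}[t]}(\cdot) + \epsilon_{m,t}$ by the inequality $|f(\mathbf{x}) - \widetilde{f}_t(\mathbf{x})| \le c_t\widetilde{\sigma}_{\text{fb}[t]}(\mathbf{x}) + \epsilon_{m,t}$ established after Lemma~\ref{lemma:event:Eft:finite:width}; and the third piece is $\le 0$ because $\mathbf{x}_t = \arg\max_{\mathbf{x}\in\mathcal{X}}\widetilde{f}_t(\mathbf{x})$. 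Applying Lemma~\ref{lemma:bound:difference:gp:post:std} to replace $\widetilde{\sigma}_{\text{fb}[t]}$ by $\sigma_{\text{fb}[t]} + \eta_t$ in all three surviving occurrences (two at $\overline{\mathbf{x}}_t$, one at $\mathbf{x}_t$) turns this into $r_t \le 2c_t\sigma_{\text{fb}[t]}(\overline{\mathbf{x}}_t) + c_t\sigma_{\text{fb}[t]}(\mathbf{x}_t) + 3c_t\eta_t + 4\epsilon_{m,t} = 2c_t\sigma_{\text{fb}[t]}(\overline{\mathbf{x}}_t) + c_t\sigma_{\text{fb}[t]}(\mathbf{x}_t) + \epsilon'_{m,t}$; this is where the coefficient $3$ in $\epsilon'_{m,t}$ originates.

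Finally I would take the conditional expectation given $\mathcal{F}_{t-1}$: on the complement of $E^{\widetilde{f}_t}(t)$, which has probability $\le 1/t^2$ by Lemma~\ref{lemma:event:Eft:finite:width}, bound $r_t \le 2B'$ (contributing $2B'/t^2$); use that $\sigma_{\text{fb}[t]}(\overline{\mathbf{x}}_t)$ is $\mathcal{F}_{t-1}$-measurable together with the variance lower bound above to replace $2c_t\sigma_{\text{fb}[t]}(\overline{\mathbf{x}}_t)$ by $\frac{2c_t}{p - 1/t^2}\mathbb{E}[\sigma_{\text{fb}[t]}(\mathbf{x}_t)\mid\mathcal{F}_{t-1}]$; then apply Lemma~\ref{lemma:ratio:sigma:C} to pass from $\sigma_{\text{fb}[t]}(\mathbf{x}_t)$ to $e^C\sigma_{t-1}(\mathbf{x}_t)$; and bound $1 + \frac{2}{p - 1/t^2} \le 1 + \frac{10}{p}$ by the same case split on $t$ used in Lemma~\ref{lemma:upper_bound_expected_regret}. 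This yields the claimed bound. There is no deep obstacle here; the point requiring care is the bookkeeping of the approximation errors — carrying out exactly three $\widetilde{\sigma}\!\to\!\sigma$ swaps at the level of the $r_t$ bound and not inside the expectation, where each swap would be inflated by the large factor $1+10/p$, and choosing $\overline{\mathbf{x}}_t$ via the exact $\sigma_{\text{fb}[t]}$ so that the variance lower-bound step stays clean while $\overline{\mathbf{x}}_t$ still qualifies as unsaturated in the regret decomposition.
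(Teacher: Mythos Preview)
Your proposal is correct and follows essentially the same approach as the paper's own proof: the same choice of $\overline{\mathbf{x}}_t$, the same regret decomposition (the paper writes it as two pieces and you as four, but the bounds applied are identical), the same three $\widetilde{\sigma}\to\sigma$ swaps via Lemma~\ref{lemma:bound:difference:gp:post:std} yielding the $3c_t\eta_t$ term, and the same expectation/Lemma~\ref{lemma:ratio:sigma:C}/$1+\tfrac{10}{p}$ finish. Your remark about doing the swaps before taking the expectation (so the $\eta_t$ contribution is not inflated by $1+10/p$) matches exactly what the paper does.
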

\begin{proof}
Define
\begin{equation}
\overline{\mathbf{x}}_t\triangleq {\arg\min}_{\mathbf{x}\in\mathcal{X}\setminus S_t}\sigma_{\text{fb}[t]}(\mathbf{x}).
\end{equation}
Note that given a $\mathcal{F}_{t-1}$, $\overline{\mathbf{x}}_t$ is deterministic. Next, this definiton also leads to:
\begin{equation}
\begin{split}
\mathbb{E}[\sigma_{\text{fb}[t]}(\mathbf{x}_t) | \mathcal{F}_{t-1}] &\geq \mathbb{E}\left[\sigma_{\text{fb}[t]}(\mathbf{x}_t) | \mathcal{F}_{t-1}, \mathbf{x}_t\in\mathcal{X}\setminus S_t\right] \mathbb{P}\left(\mathbf{x}_t\in\mathcal{X}\setminus S_t | \mathcal{F}_{t-1}\right)\\
&\geq \sigma_{\text{fb}[t]}(\overline{\mathbf{x}}_t)(p-1/t^2),
\end{split}
\label{eq:app:sigma:bar:new}
\end{equation}
where the last inequality makes use of Lemma~\ref{lemma:prob_unsaturated:new}.

Next, conditioned on both $E^{\widetilde{f}}(t)$ and $E^{\widetilde{f}_t}(t)$, we have that
\begin{equation}
\begin{split}
r_t &= f(\mathbf{x}^*) - f(\mathbf{x}_t) = f(\mathbf{x}^*) - f(\overline{\mathbf{x}}_t) + f(\overline{\mathbf{x}}_t) - f(\mathbf{x}_t)\\
&\stackrel{(a)}{\leq} \Delta(\overline{\mathbf{x}}_t) + \widetilde{f}_t(\overline{\mathbf{x}}_t) + c_t \widetilde{\sigma}_{\text{fb}[t]}(\overline{\mathbf{x}}_t) + \epsilon_{m,t} - \widetilde{f}_t(\mathbf{x}_t) + c_t \widetilde{\sigma}_{\text{fb}[t]}(\mathbf{x}_t) + \epsilon_{m,t}\\
&\stackrel{(b)}{\leq} c_t \widetilde{\sigma}_{\text{fb}[t]}(\overline{\mathbf{x}}_t) + 2\epsilon_{m,t} + c_t \widetilde{\sigma}_{\text{fb}[t]}(\overline{\mathbf{x}}_t) + c_t \widetilde{\sigma}_{\text{fb}[t]}(\mathbf{x}_t) + 2\epsilon_{m,t} + \widetilde{f}_t(\overline{\mathbf{x}}_t) - \widetilde{f}_t(\mathbf{x}_t)\\
&\stackrel{(c)}{\leq} c_t \left( 2\widetilde{\sigma}_{\text{fb}[t]}(\overline{\mathbf{x}}_t) + \widetilde{\sigma}_{\text{fb}[t]}(\mathbf{x}_t) \right) + 4\epsilon_{m,t}\\
&\stackrel{(d)}{\leq} c_t \left( 2\sigma_{\text{fb}[t]}(\overline{\mathbf{x}}_t) + \sigma_{\text{fb}[t]}(\mathbf{x}_t) \right) + 3 c_t \sqrt{(L+1)\varepsilon \left( 1+\frac{4 \hat{K}_0^2 t^2}{\sigma^4} \right)} + 4\epsilon_{m,t}\\
&= c_t \left( 2\sigma_{\text{fb}[t]}(\overline{\mathbf{x}}_t) + \sigma_{\text{fb}[t]}(\mathbf{x}_t) \right) + \epsilon'_{m,t}.
\end{split}
\end{equation}
$(a)$ follows from Lemmas~\ref{lemma:event:Ef:tilde} and~\ref{lemma:event:Eft:finite:width}, $(b)$ follows from the definition of unsaturated inputs (Definition~\ref{def:saturated_set:new}), $(c)$ results from the way in which $\mathbf{x}_t$ is selected: $\mathbf{x}_t={\arg\max}_{\mathbf{x}\in\mathcal{X}}\widetilde{f}_t(\mathbf{x})$, $(d)$ makes use of Lemma~\ref{lemma:bound:difference:gp:post:std}.

Next, we can upper-bound the expected instantaneous regret:
\begin{equation}
\begin{split}
\mathbb{E}\left[ r_t | \mathcal{F}_{t-1} \right] &\leq \mathbb{E}\left[c_t \left( 2\sigma_{\text{fb}[t]}(\overline{\mathbf{x}}_t) + \sigma_{\text{fb}[t]}(\mathbf{x}_t) \right) + \epsilon'_{m,t} | \mathcal{F}_{t-1}\right] + 2B'\mathbb{P}\left(\overline{E^{\widetilde{f}_t}(t)} | \mathcal{F}_{t-1}\right)\\
&\stackrel{(a)}{\leq} \mathbb{E}\left[c_t \left(\frac{2}{p-1/t^2} \sigma_{\text{fb}[t]}(\mathbf{x}_t) + \sigma_{\text{fb}[t]}(\mathbf{x}_t) \right) + \epsilon'_{m,t} | \mathcal{F}_{t-1}\right] + \frac{2B'}{t^2}\\
&= \mathbb{E}\left[c_t \left(1+\frac{2}{p-1/t^2}\right)\sigma_{\text{fb}[t]}(\mathbf{x}_t) + \epsilon'_{m,t} | \mathcal{F}_{t-1}\right] + \frac{2B'}{t^2}\\
&\stackrel{(b)}{\leq} c_t \left(1+\frac{2}{p-1/t^2}\right)\mathbb{E}\left[e^C\sigma_{t-1}(\mathbf{x}_t) | \mathcal{F}_{t-1}\right] + \epsilon'_{m,t} + \frac{2B'}{t^2}\\
&\stackrel{(c)}{\leq} c_t e^C \left(1+\frac{10}{p}\right)\mathbb{E}\left[\sigma_{t-1}(\mathbf{x}_t) | \mathcal{F}_{t-1}\right] + \epsilon'_{m,t} + \frac{2B'}{t^2}.
\end{split}
\end{equation}
$(a)$ follows from equation~\eqref{eq:app:sigma:bar:new}, $(b)$ makes use of Lemma~\ref{lemma:ratio:sigma:C}, and $(c)$ follows since $2 / (p-1/t^2) \leq 10/p$.
This completes the proof.
\end{proof}

We similarly define the following stochastic process, which will be shown to be a super-martingale in the subsequent Lemma.
\begin{definition}
Define $Y_0=0$, and for all $t=1,\ldots,T$,
\[
\overline{r}_t=r_t \mathbb{I}\{E^{\widetilde{f}}(t)\},
\]
\[
X_t = \overline{r}_t - c_t e^C \left(1+\frac{10}{p}\right) \sigma_{t-1}(\mathbf{x}_t) - \epsilon'_{m,t} - \frac{2B'}{t^2}
\]
\[
Y_t=\sum^t_{s=1}X_s.
\]
\end{definition}

\begin{lemma}
\label{lemma:proof:sup:martingale:finite:width}
Conditioned on the event $E^{f}(t)$, $(Y_t:t=0,\ldots,T)$ is a super-martingale with respect to the filtration $\mathcal{F}_t$.
\end{lemma}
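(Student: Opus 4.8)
The plan is to mimic the proof of Lemma~\ref{lemma:proof:sup:martingale} from the infinite-width analysis, the only substantive change being that the per-step bound on the conditional expected regret is now supplied by Lemma~\ref{lemma:upper_bound_expected_regret_new} instead of Lemma~\ref{lemma:upper_bound_expected_regret}, which is exactly why the extra deterministic slack $\epsilon'_{m,t}$ has been built into the definition of $X_t$. First I would check the routine measurability/integrability requirements: $\overline{r}_t\in[0,2B']$, $\sigma_{t-1}(\mathbf{x}_t)$ is bounded (by $\sqrt{\hat{K}_0}$), and $\epsilon'_{m,t}$, $2B'/t^2$, $c_t$ are deterministic, so each $X_s$ (hence $Y_t$) is integrable and adapted to $(\mathcal{F}_t)$. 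In particular, the indicator $\mathbb{I}\{E^{\widetilde{f}}(t)\}$ is $\mathcal{F}_{t-1}$-measurable, since the event $E^{\widetilde{f}}(t)$ is defined through $\widetilde{\mu}_{\text{fb}[t]}$ and $\widetilde{\sigma}_{\text{fb}[t]}$, which depend only on the observations collected up to iteration $\text{fb}[t]\le t-1$; this lets the indicator be pulled outside the conditional expectation.

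The core computation is then to show $\mathbb{E}[Y_t-Y_{t-1}\mid\mathcal{F}_{t-1}]=\mathbb{E}[X_t\mid\mathcal{F}_{t-1}]\le 0$. Substituting the definition of $X_t$, this reduces to
\[
\mathbb{E}\!\left[\overline{r}_t\mid\mathcal{F}_{t-1}\right]\le c_t e^C\Big(1+\tfrac{10}{p}\Big)\mathbb{E}\!\left[\sigma_{t-1}(\mathbf{x}_t)\mid\mathcal{F}_{t-1}\right]+\epsilon'_{m,t}+\tfrac{2B'}{t^2}.
\]
I would split on the event $E^{\widetilde{f}}(t)$. On $E^{\widetilde{f}}(t)$ we have $\overline{r}_t=r_t$, and the displayed inequality is precisely the conclusion of Lemma~\ref{lemma:upper_bound_expected_regret_new}, which is applicable because that lemma is stated conditionally on $E^{\widetilde{f}}(t)$. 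Off $E^{\widetilde{f}}(t)$ we have $\overline{r}_t=0$ while the right-hand side is non-negative, so the inequality (indeed $X_t\le 0$ pointwise) holds trivially. Combining the two cases using the $\mathcal{F}_{t-1}$-measurability of $\mathbb{I}\{E^{\widetilde{f}}(t)\}$ yields $\mathbb{E}[X_t\mid\mathcal{F}_{t-1}]\le 0$, i.e., $(Y_t:t=0,\ldots,T)$ is a super-martingale with respect to $(\mathcal{F}_t)$.

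There is essentially no hard obstacle: this lemma is a bookkeeping step that repackages Lemma~\ref{lemma:upper_bound_expected_regret_new} into martingale form. The one point needing a little care is ensuring that the slack carried inside $X_t$ matches the slack produced by Lemma~\ref{lemma:upper_bound_expected_regret_new} exactly — it is $\epsilon'_{m,t}$ in both places, so no mismatch arises. (I would also note that the conditioning event written as $E^{f}(t)$ in the lemma statement should be read as $E^{\widetilde{f}}(t)$, consistent with the definition of $\overline{r}_t$ and with the hypothesis of Lemma~\ref{lemma:upper_bound_expected_regret_new}.) The subsequent steps, which I would carry out after this lemma, parallel Appendix~\ref{app:sec:infinite:width}: bound $|Y_t-Y_{t-1}|$, apply Azuma--Hoeffding to $(Y_t)$, and then sum $\sum_{t=1}^{T}\epsilon'_{m,t}=\widetilde{\mathcal{O}}(T^3(L+1)^{5/4}\varepsilon^{1/2})=\widetilde{\mathcal{O}}(T^3 m^{-1/8}(L+1)^{5/4})$ after substituting $\varepsilon=\Theta(m^{-1/4})$ from Proposition~\ref{prop:arora}, combining with the $\sqrt{C_1 T\gamma_T}$ term exactly as in Lemma~\ref{lemma:bound:cum:regret} to obtain Theorem~\ref{theorem:regret:approc:ntk}.
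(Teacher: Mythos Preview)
Your proposal is correct and follows exactly the approach the paper takes: the paper simply states that the proof ``follows closely the proof of Lemma~\ref{lemma:proof:sup:martingale} and is hence omitted,'' and your argument is precisely that adaptation, invoking Lemma~\ref{lemma:upper_bound_expected_regret_new} in place of Lemma~\ref{lemma:upper_bound_expected_regret} and absorbing the extra $\epsilon'_{m,t}$ slack. Your observation that the conditioning event in the statement should read $E^{\widetilde{f}}(t)$ rather than $E^{f}(t)$ is also correct and consistent with the definition of $\overline{r}_t$ in this section.
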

The proof of Lemma~\ref{lemma:proof:sup:martingale:finite:width} above follows closely the proof of Lemma~\ref{lemma:proof:sup:martingale} and is hence omitted.

\begin{lemma}
\label{lemma:bound:cum:regret_new}
Define $C_1\triangleq \frac{2}{\log(1+\sigma^{-2})}$.
With probability of $\geq 1-\delta$,
\begin{equation}
R_T \leq c_T e^C \Big(1+\frac{10}{p}\Big)\sqrt{C_1 T \gamma_T} + T\epsilon'_{m,T} + \frac{B'\pi^2}{3} +  \Big(  4B' + c_T e^C \big(1+\frac{10}{p}\big) K_0 + \epsilon'_{m,T} \Big) \sqrt{2T\log(4/\delta)}.
\end{equation}
\end{lemma}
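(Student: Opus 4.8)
The plan is to follow the template of the infinite‑width argument (Lemma~\ref{lemma:bound:cum:regret}): apply the Azuma–Hoeffding inequality to the super‑martingale $(Y_t:t=0,\ldots,T)$ just defined — which is a super‑martingale by Lemma~\ref{lemma:proof:sup:martingale:finite:width} — to obtain a high‑probability bound on $\sum_{t=1}^T \overline{r}_t$, and then upgrade it to a bound on $R_T=\sum_{t=1}^T r_t$ by intersecting with the event that $E^{\widetilde f}(t)$ holds for all $t$.

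First I would bound the martingale differences. From $X_t = \overline{r}_t - c_t e^C(1+10/p)\sigma_{t-1}(\mathbf{x}_t) - \epsilon'_{m,t} - 2B'/t^2$ and the triangle inequality, $|Y_t - Y_{t-1}| = |X_t| \le |\overline{r}_t| + c_t e^C(1+10/p)\sigma_{t-1}(\mathbf{x}_t) + \epsilon'_{m,t} + 2B'/t^2$. Using $|\overline{r}_t|\le 2B'$, $2B'/t^2\le 2B'$, and $\sigma_{t-1}(\mathbf{x}_t)\le \sqrt{k(\mathbf{x}_t,\mathbf{x}_t)}\le K_0$ (exactly as in Lemma~\ref{lemma:bound:cum:regret}, using $\Theta(\mathbf{x},\mathbf{x}')\le K_0$), this gives $|X_t|\le 4B' + c_t e^C(1+10/p)K_0 + \epsilon'_{m,t}$, and since $c_t$ and $\epsilon'_{m,t}$ are non‑decreasing in $t$ this is at most $a_T := 4B' + c_T e^C(1+10/p)K_0 + \epsilon'_{m,T}$ uniformly in $t$. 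Then I would apply Azuma–Hoeffding with failure probability $\delta/4$: with probability $\ge 1-\delta/4$, $\sum_{t=1}^T \overline{r}_t \le \sum_{t=1}^T c_t e^C(1+10/p)\sigma_{t-1}(\mathbf{x}_t) + \sum_{t=1}^T \epsilon'_{m,t} + \sum_{t=1}^T \tfrac{2B'}{t^2} + a_T\sqrt{2T\log(4/\delta)}$. The three sums are controlled as before: monotonicity of $c_t$ with Cauchy–Schwarz and the standard information‑gain bound $\sum_{t=1}^T\sigma^2_{t-1}(\mathbf{x}_t)\le C_1\gamma_T$ (Lemma~5.4 of~\cite{srinivas2009gaussian}) give $\sum_t c_t\sigma_{t-1}(\mathbf{x}_t)\le c_T\sqrt{C_1 T\gamma_T}$; $\sum_{t\ge1}2B'/t^2 = B'\pi^2/3$; and monotonicity of $\epsilon'_{m,t}$ gives $\sum_t\epsilon'_{m,t}\le T\epsilon'_{m,T}$. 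Substituting yields precisely the claimed right‑hand side, as an upper bound on $\sum_t\overline{r}_t$.

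Finally I would remove the truncation. By Lemma~\ref{lemma:event:Ef:tilde} the events $E^{\widetilde f}(t)$ hold simultaneously for all $t\ge 1$ with probability $\ge 1-\delta/2$, and on that event $\overline{r}_t = r_t\,\mathbb{I}\{E^{\widetilde f}(t)\} = r_t$, so $\sum_t\overline{r}_t = R_T$. A union bound over the Azuma event ($\delta/4$), the event of Lemma~\ref{lemma:event:Ef:tilde} ($\delta/2$), and the empirical‑NTK approximation event of Proposition~\ref{prop:arora} ($\delta/4$, assumed throughout the section) gives total failure probability $\le\delta$, which completes the proof.

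The main bookkeeping obstacle — the only genuine departure from Lemma~\ref{lemma:bound:cum:regret} — is tracking the $\delta/4 + \delta/2 + \delta/4$ split of the confidence budget (observing that $E^{\widetilde f}(t)$ already absorbs the $f$‑concentration event of Lemma~\ref{lemma:event:Ef:finite:width} and the posterior‑mean approximation event of Lemma~\ref{lemma:bound:difference:gp:post:mean}), together with verifying that $\epsilon'_{m,t}$ is non‑decreasing in $t$ — which it is, since every $t$‑dependent factor in $\epsilon_{m,t}$ and $\epsilon'_{m,t}$ (namely $c_t$, $\beta_t$, $t$, and $t^2$) increases with $t$ — so that the uniform per‑step bound $a_T$ and the sum bound $T\epsilon'_{m,T}$ are legitimate.
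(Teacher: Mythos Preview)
Your proposal is correct and follows essentially the same approach as the paper's proof: bound the martingale differences by $4B' + c_t e^C(1+10/p)K_0 + \epsilon'_{m,t}$, apply Azuma--Hoeffding at level $\delta/4$, control the three sums via monotonicity of $c_t$ and $\epsilon'_{m,t}$ together with the standard information-gain bound, and then combine with the events of Lemma~\ref{lemma:event:Ef:tilde} ($\delta/2$) and Proposition~\ref{prop:arora} ($\delta/4$) via a union bound. Your explicit check that $\epsilon'_{m,t}$ is non-decreasing and your accounting of the $\delta/4+\delta/2+\delta/4$ split are exactly the bookkeeping the paper does (the latter explicitly, the former implicitly).
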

\begin{proof}
To begin with, we have that
\begin{equation}
\begin{split}
|Y_t - Y_{t-1}| &= |X_t| \leq |\overline{r}_t| + c_t e^C \left(1+\frac{10}{p}\right) \sigma_{t-1}(\mathbf{x}_t) + \epsilon'_{m,t} + \frac{2B'}{t^2}\\
&\leq 2B' + c_t e^C \left(1+\frac{10}{p}\right) K_0 + \epsilon'_{m,t} + 2B'\\
&= 4B' + c_t e^C \left(1+\frac{10}{p}\right) K_0 + \epsilon'_{m,t}.
\end{split}
\end{equation}

Using the Azuma-Hoeffding's inequality with an error probability of $\delta/4$, we have that
\begin{equation}
\begin{split}
\sum^T_{t=1}\overline{r}_t &\leq \sum^T_{t=1} c_t e^C \left(1+\frac{10}{p}\right) \sigma_{t-1}(\mathbf{x}_t) + \sum^T_{t=1}\epsilon'_{m,t} +
\sum^T_{t=1}\frac{2B'}{t^2} + \\
&\qquad \sqrt{2\log(4/\delta) \sum^T_{t=1}\left( 4B' + c_t e^C \left(1+\frac{10}{p}\right) K_0 + \epsilon'_{m,t} \right)^2}\\
&\stackrel{(a)}{\leq} c_T e^C \left(1+\frac{10}{p}\right)\sum^T_{t=1}\sigma_{t-1}(\mathbf{x}_t) + \sum^T_{t=1}\epsilon'_{m,t} + \frac{B'\pi^2}{3} + \\
&\qquad \left( 4B' + c_T e^C \left(1+\frac{10}{p}\right) K_0 + \epsilon'_{m,T} \right) \sqrt{2T\log(4/\delta)}\\
&\stackrel{(b)}{\leq} c_T e^C \left(1+\frac{10}{p}\right)\sqrt{C_1 T \gamma_T} + \sum^T_{t=1}\epsilon'_{m,t} + \frac{B'\pi^2}{3} + \\
&\qquad \left( 4B' + c_T e^C \left(1+\frac{10}{p}\right) K_0 + \epsilon'_{m,T} \right) \sqrt{2T\log(4/\delta)}\\
&\leq c_T e^C \left(1+\frac{10}{p}\right)\sqrt{C_1 T \gamma_T} + T\epsilon'_{m,T} + \frac{B'\pi^2}{3} + \\
&\qquad  \left(  4B' + c_T e^C \left(1+\frac{10}{p}\right) K_0 + \epsilon'_{m,T} \right) \sqrt{2T\log(4/\delta)}.
\end{split}
\end{equation}
$(a)$ follows since $c_t$ is increasing in $t$, and $(b)$ follows from the proof of Lemma 5.4 in the work of~\cite{srinivas2009gaussian}. 
Next, note that $\overline{r}_t=r_t,\forall t\geq1$ with probability of $\geq1-\delta/2$ according to Lemma~\ref{lemma:event:Ef:tilde}. 
Also recall that throughout the entire proof in this section, we have conditioned on the event in Proposition~\ref{prop:arora}, which also holds with probability of $\geq1-\delta/4$.
Therefore, also taking into account the error probability of $\delta/4$ from the Azuma-Hoeffding's inequality, the upper bound derived above is an upper bound on the cumulative regret $R_T=\sum^T_{t=1}r_t$ with probability of $\geq 1-\delta/2-\delta/4-\delta/4=1-\delta$.
\end{proof}

Now let's analyze the asymptotic scaling of the regret upper bound derived above. Firstly, note that $c_T = \mathcal{O}(\log^2 T)$.
Next, recall that we have in the main text that $(L+1)\varepsilon = C_{\text{ntk}}(L+1)L^{3/2}\log^{1/4}(4L|\mathcal{X}|^2/\delta)m^{-1/4}$. This allows us to analyze the scaling of $\epsilon'_{m,T}$.
\begin{equation}
\begin{split}
\epsilon'_{m,T} &= 3 c_T \sqrt{(L+1)\varepsilon \left( 1+\frac{4 \hat{K}_0^2 T^2}{\sigma^4} \right)} + 4 \Big(
2\hat{K}_0\frac{T^2(L+1)\varepsilon}{\sigma^4}\left(B' + \sigma\sqrt{2\log(4T/\delta)}\right) + \\
&\qquad \beta_T\sqrt{(L+1)\varepsilon \left( 1+\frac{4 \hat{K}_0^2 T^2}{\sigma^4} \right)}
\Big)\\
&=\widetilde{\mathcal{O}}\left(
(\log T)^2 \sqrt{(L+1)\varepsilon} T + T^2 (L+1)\varepsilon + \log T\sqrt{(L+1)\varepsilon} T
\right)\\
&=\widetilde{\mathcal{O}}\left(
T^2 \sqrt{(L+1) \varepsilon}
\right)\\
&=\widetilde{\mathcal{O}}\left(T^2m^{-1/8}(L+1)^{5/4}\right)
\end{split}
\label{eq:epsilon:prime:m:T}
\end{equation}

This allows us to analyze the asymptotic scaling of our regret upper bound (ignoring all log factors)
\begin{equation}
\begin{split}
R_T &= \widetilde{\mathcal{O}}\left(
e^C\sqrt{T\gamma_T} + T\epsilon'_{m,T} + (e^C+\epsilon'_{m,T})\sqrt{T}
\right)\\
&= \widetilde{\mathcal{O}}\left(e^C \sqrt{T} (\sqrt{\gamma_T}+1) + T\epsilon'_{m,T} + \sqrt{T} \epsilon'_{m,T} \right)\\
&= \widetilde{\mathcal{O}}\left(e^C \sqrt{T} (\sqrt{\gamma_T}+1) + T^3m^{-1/8}(L+1)^{5/4}\right).
\end{split}
\end{equation}



\section{Extension to Continuous Input Domains}
\label{app:sec:extension:continuous:domain}
To extend our theoretical results to cases where the input domain $\mathcal{X}$ is continuous, we can follow the techniques discussed in Section 3.1 of the work of \cite{li2021gaussian}.
We assume that $\mathcal{X}\subset[0,1]^d$.
To begin with, we need to additionally assume that the objective function $f$ is Lipschitz continuous with a Lipschitz constant $L>0$.
Next, we can construct a finite sub-domain $\widetilde{\mathcal{X}}$ of the continuous domain $\mathcal{X}$, where $\widetilde{X}$ has equal spacing of $\frac{1}{\sqrt{T}}$ in each dimension.
As a result, the finite sub-domain $\widetilde{\mathcal{X}}$ contains $T^{d/2}$ points, i.e., $|\widetilde{\mathcal{X}}|=T^{d/2}$.
Then, we can simply run our algorithms (Algo.~\ref{algo:1} and Algo.~\ref{algo:2}) on this finite sub-domain $\widetilde{X}$.

As a consequence, for \textbf{Theorem~\ref{theorem:regret:exact:ntk}}, we only need to make two changes to our theoretical results.
Firstly, we need to modify $\beta_t$ to be $\beta_t = 2\log(\pi^2t^2|\widetilde{\mathcal{X}}|/(3\delta))=2\log(\pi^2t^2 T^{d/2} / (3\delta))$, which will only introduce an additional dependence on $\mathcal{O}(d\log T)$ into $c_T$ (Appendix \ref{app:sec:infinite:width}) and hence \emph{an additional multiplicative factor of} $\mathcal{O}(d\log T)=\widetilde{\mathcal{O}}(d)$ into the regret upper bound in Theorem~\ref{theorem:regret:exact:ntk}.
Secondly, due to the Lipschitz continuity of $f$ and the fact that every input $\mathbf{x}\in\mathcal{X}$ has a neighbor in the finite sub-domain $\widetilde{\mathcal{X}}$ whose distance to it is less than $\frac{d}{\sqrt{T}}$, we have that $f(\mathbf{x}^*) \leq \max_{\widetilde{\mathbf{x}}\in\widetilde{\mathcal{X}}}f(\widetilde{\mathbf{x}}) + \mathcal{O}(\frac{Ld}{\sqrt{T}})$.
As a result, this will introduce \emph{an additional additive term of} $\mathcal{O}(T \times \frac{Ld}{\sqrt{T}})=\mathcal{O}(Ld \sqrt{T})$ to the final upper bound on the cumulative regret.

For \textbf{Proposition \ref{prop:arora}}, an additional multiplicative factor of $d\log T$ will be introduced into the condition on $m$.
For \textbf{Theorem \ref{theorem:regret:approc:ntk}}, to begin with, same as the analysis of Theorem \ref{theorem:regret:exact:ntk} in the paragraph above, \emph{an additional additive factor of} $\mathcal{O}(Ld \sqrt{T})$ will be introduced, and \emph{an additional multiplicative factor of} $\mathcal{O}(d\log T)=\widetilde{\mathcal{O}}(d)$ will be introduced into the first term in Theorem \ref{theorem:regret:approc:ntk}. Moreover, as a result of the additional factor of $d\log T$ in the condition on $m$ (Proposition \ref{prop:arora}), an additional multiplicative factor of $d\log T$ will also be introduced into the approximation quality of $(L+1)\varepsilon$ (Sec.~\ref{subsec:theory:finite}).
As a result, in the proof of Theorem \ref{theorem:regret:approc:ntk} (Appendix \ref{app:sec:finite:width}), \emph{an additional multiplicative factor of} $\sqrt{d}$ will be introduced into the term $\epsilon'_{m,T}$ (see \eqref{eq:epsilon:prime:m:T}) and hence into the second term in the upper bound in Theorem \ref{theorem:regret:approc:ntk}.

Of note, the modified results discussed above do not affect the scaling of our theoretical results (Theorem \ref{theorem:regret:exact:ntk} and Theorem \ref{theorem:regret:approc:ntk}) in $T$ (we ignore all dependencies on $\log T$), because the only additional term depending on $T$ for both theorems is an additive term of $\widetilde{\mathcal{O}}(\sqrt{T})$.

\section{More Experimental Details}
\label{app:sec:experimental:details}
In all experiments, for simplicity, we set $\beta_t=1,\forall t\geq1$, which is consistent with many previous papers on BO which have found the theoretical values of $\beta_t$ to be overly conservative~\cite{srinivas2009gaussian}.
For fair comparisons, in every experiment, all methods under comparison use the same set of initial inputs which are selected by random search.
We use the ERF activation function in the synthetic experiment (Sec.~\ref{sec:exp:synth}) because the synthetic function we have adopted is very smooth.
In all real-world experiments (Secs.~\ref{subsec:exp:automl},~\ref{subsec:exp:rl} and~\ref{sec:exp:images}), we use the ReLU activation function since it has been found to be very effective in modeling complicated real-world functions.

For all methods under comparison, when maximizing the acquisition function to choose an input query, if the domain is discrete, we simply evaluate the acquisition function value at every input in the domain and then choose the input that maximizes it.
When the domain is continuous,
we firstly use random search to randomly sample $10,000$ inputs in the domain to evaluate their acquisition function values, and then use L-BFGS-B with $100$ random restarts to refine the search.
When the domain is mixed (i.e., consisting of both continuous and discrete inputs), we treat it as a continuous domain and after finding the input the maximizes the acquisition function, we round the discrete inputs to the nearest integer.
For Neural UCB \cite{zhou2020neural}, we treat the UCB value calculated for each arm (input) as the acquisition function; for Neural TS, we treat the reward sampled for each arm (input) as the acquisition function \cite{zhang2020neural}.

We implement the training of the surrogate model $f^{i}_t(\mathbf{x};\theta)$ for both Algos.~\ref{algo:1} and~\ref{algo:2} based on the implementations from the work of~\cite{he2020bayesian}, and we adopt all their default parameter settings (refer to the implementations of~\cite{he2020bayesian} for the specific parameter settings, available at \url{https://github.com/bobby-he/bayesian-ntk}) and only vary the architecture of the NN surrogate model (e.g., the depth and width of the NN, we replace the NN with a CNN for our experiments in Sec.~\ref{sec:exp:images}) as we have mentioned in the main text.
For both Neural UCB and Neural TS, we adopt the implementations from the work of~\cite{zhang2020neural}, use all their default parameter settings, and only modify the architecture of their NN surrogate model for a fair comparison with our methods. As we have mentioned in the main text, to apply Neural UCB and Neural TS for problems with continuous domains, we adapt their implementations such that we optimize their acquisition functions in the same way as our methods (i.e., through a combination of random search and L-BFGS-B as discussed above).
Our experiments are performed using a computing cluster where each machine has an NVIDIA A100 GPU and 96 CPUs.

\subsection{Synthetic Experiments}
In the synthetic experiment, the objective function $f$ is sampled from a GP with an SE kernel using a lengthscale of $0.1$. The domain of $f$ is a uniform grid of size $1,000$ in the interval of $[0,1]$.

\subsection{Real-world Experiments on Automated ML}
\label{app:subsec:exp:automl}
In this section, we give more details on the three hyperparameter tuning experiments in Sec.~\ref{subsec:exp:automl}.

\paragraph{Hyperparamter Tuning of Random Forest.}
Here we tune $6$ categorical hyperparameters of random forest: 
\begin{itemize}
    \item the maximum depth of any individual tree (integer within $[1,10]$), 
    \item the minimum number of samples required to split an internal node (integer within $[2,10]$), 
    \item the minimum number of samples required to be at a leaf node (integer within $[1,10]$), 
    \item the maximum number of features to consider when looking for the best split (integer within $[1,8]$), 
    \item the criterion to measure the quality of a split (binary, "entropy" or "gini"), 
    \item whether bootstrap samples are used when building trees (binary, True or False).
\end{itemize}
We use the publicly available diabetes prediction dataset which can be accessed from~\url{https://www.kaggle.com/uciml/pima-indians-diabetes-database} and has the CC0 License.
This dataset does not contains personally identifiable information or offensive content.
It consists of 768 data instances, each containing $8$ input features.
We use $70 \%$ of the dataset as the training set and the remaining $30\%$ as the validation set. We use random search to choose $5$ initial inputs as the set of initialization, which is shared among all methods under comparison.

\paragraph{Hyperparameter Tuning of XGBoost.}
The MNIST dataset is publicly available and associated with the GNU General Public License, and can be obtained from the Keras Package\footnote{\url{https://keras.io/}}.
It does not contain personally identifiable information or offensive content.
In this experiment, we use the MNIST dataset to tune 9 hyperparameters of XGBoost~\cite{chen2016xgboost}: 
\begin{itemize}
    \item gamma which represents the minimum loss reduction required to make a further partition on a leaf node of the tree (continuous, $[0,10]$), 
    \item the learning rate (continuous, $[10^{-6}, 1]$), 
    \item the maximum depth of any individual tree (integer, $[1,15]$), 
    \item which booster to use (binary, "dart" or "gbtree"), 
    \item the grow policy which controls the way new nodes are added to the tree (binary, "depthwise" or "lossguide"), 
    \item the objective (bianry, "multi:softprob" or "multi:softmax"), 
    \item the tree construction method (binary, "exact" or "hist"), 
    \item alpha which is the L1 regularization term on the weights (continuous, $[0,10]$), and 
    \item lambda which is the L2 regularization term on the weights (continuous, $[0,10]$).
\end{itemize}

\paragraph{Hyperparameter Tuning of Convolutional Neural Networks.}
Here we use the MNIST dataset to tune 9 hyperparameters of convolutional neural networks (CNN). The CNN consists of one convolutional layer, followed by a max pooling layer and subsequently a fully connected layer. The 9 hyperparameters are:
\begin{itemize}
    \item the learning rate (continuous, $[10^{-4}, 0.1]$),
    \item the weight decay (continuous, $[10^{-6}, 10^{-2}]$),
    \item the batch size (integer, $[64, 512]$),
    \item the max pooling size (integer, $[3,5]$),
    \item the number of neurons in the convolutional layer (integer, ${4, 16}$),
    \item the size of the convolutional kernel (integer, $[3, 5]$),
    \item the number of neurons in the fully connected layer (integer, $[4, 16]$),
    \item which activation function to use (binary, ReLU or Tanh),
    \item which optimization method to use (binary, ADAM or RMSprop).
\end{itemize}

\begin{figure}[t]
     \centering
     \begin{tabular}{cc}
         \includegraphics[width=0.40\linewidth]{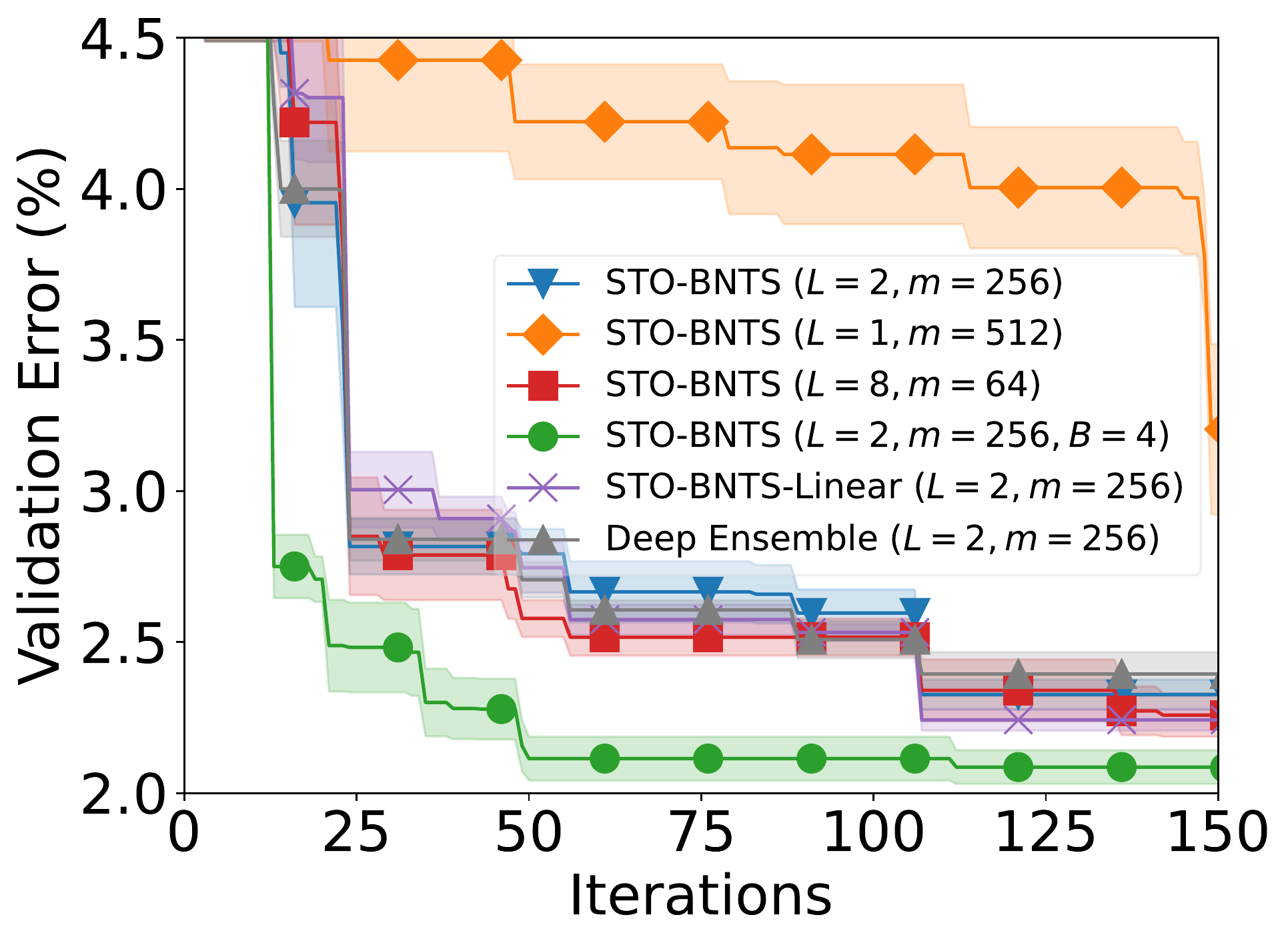} & \hspace{-4mm}
         \includegraphics[width=0.40\linewidth]{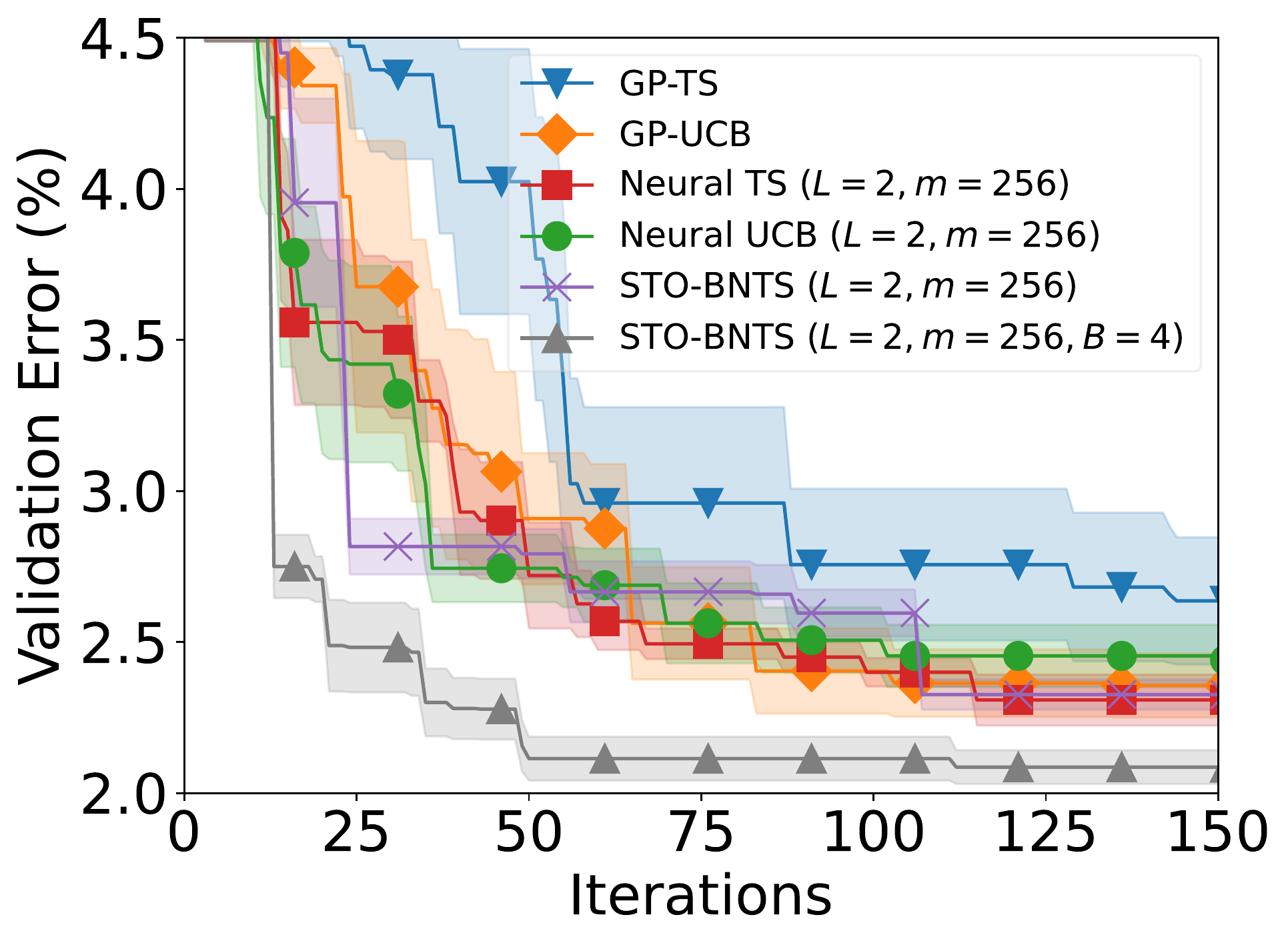}\\
         {(a)} & {(b)}
     \end{tabular}
     \caption{
     Validation errors for hyperparameter tuning of CNN. $B=1$ unless specified otherwise.}
     \label{fig:exp:automl:cnn}
\end{figure}

\subsection{Real-world Experiments on Reinforcement Learning}
The lunar lander task involves tuning $12$ parameters of a heuristic controller which is used to control the LunarLander-v2 environment from OpenAI Gym~\cite{brockman2016openai}. The heuristic controller is provided by OpenAI Gym and can be found at \url{https://github.com/openai/gym/blob/8a96440084a6b9be66b2216b984a1c170e4a061c/gym/envs/box2d/lunar_lander.py#L447}.
OpenAI Gym\footnote{\url{https://github.com/openai/gym}} is open-sourced and under the MIT License.
The (14-dimensional) robots pushing and (20-dimensional) rover trajectory planning tasks were firstly introduced by the work of~\cite{wang2018batched} where the detailed experimental settings can be found.
Both tasks are publicly available at \url{https://github.com/zi-w/Ensemble-Bayesian-Optimization} and are under the MIT license.
Due to the large number of iterations ($500$) of these three experiments (which is necessary as a result of the high dimensionality of the input spaces), standard GP-UCB and GP-TS become too computationally costly to run. Therefore, we applied random Fourier features approximations \cite{dai2020federated,dai2021differentially} to the GP using a large number $1,000$ of random features, with which GP-UCB and GP-TS perform well and are still computationally feasible to run.

Using the Lunar-Lander experiment, we have also compared our STO-BNTS and STO-BNTS-Linear with batch versions of GP-TS and Neural TS. The results are shown in Fig.~\ref{fig:app:exp:lunar:compare:batch:methods:and:flatten:batch} (a), in which all methods use the same batch size of $B=4$. The figure shows that our STO-BNTS and STO-BNTS-Linear are still able to significantly outperform the other baseline methods when the same batch size is used.

We also use the Lunar-Lander experiment to show an alternative visualization of the performances of our algorithms with batch evaluations in Fig.~\ref{fig:app:exp:lunar:compare:batch:methods:and:flatten:batch} (b). Specifically, the horizontal axis in Fig.~\ref{fig:app:exp:lunar:compare:batch:methods:and:flatten:batch} (b) is the number of function evaluations, in contrast to iterations in Fig.~\ref{fig:exp:rl}a. Same as Fig.~\ref{fig:exp:rl}a, this figure also shows the benefit of batch evaluations, because compared with the sequential algorithms ($B=1$, purple and yellow curves),  our algorithms with a batch size of $B=4$ only suffer slight degradations of the per-function evaluation performances.

\begin{figure}[t]
     \centering
     \begin{tabular}{cc}
         \includegraphics[width=0.40\linewidth]{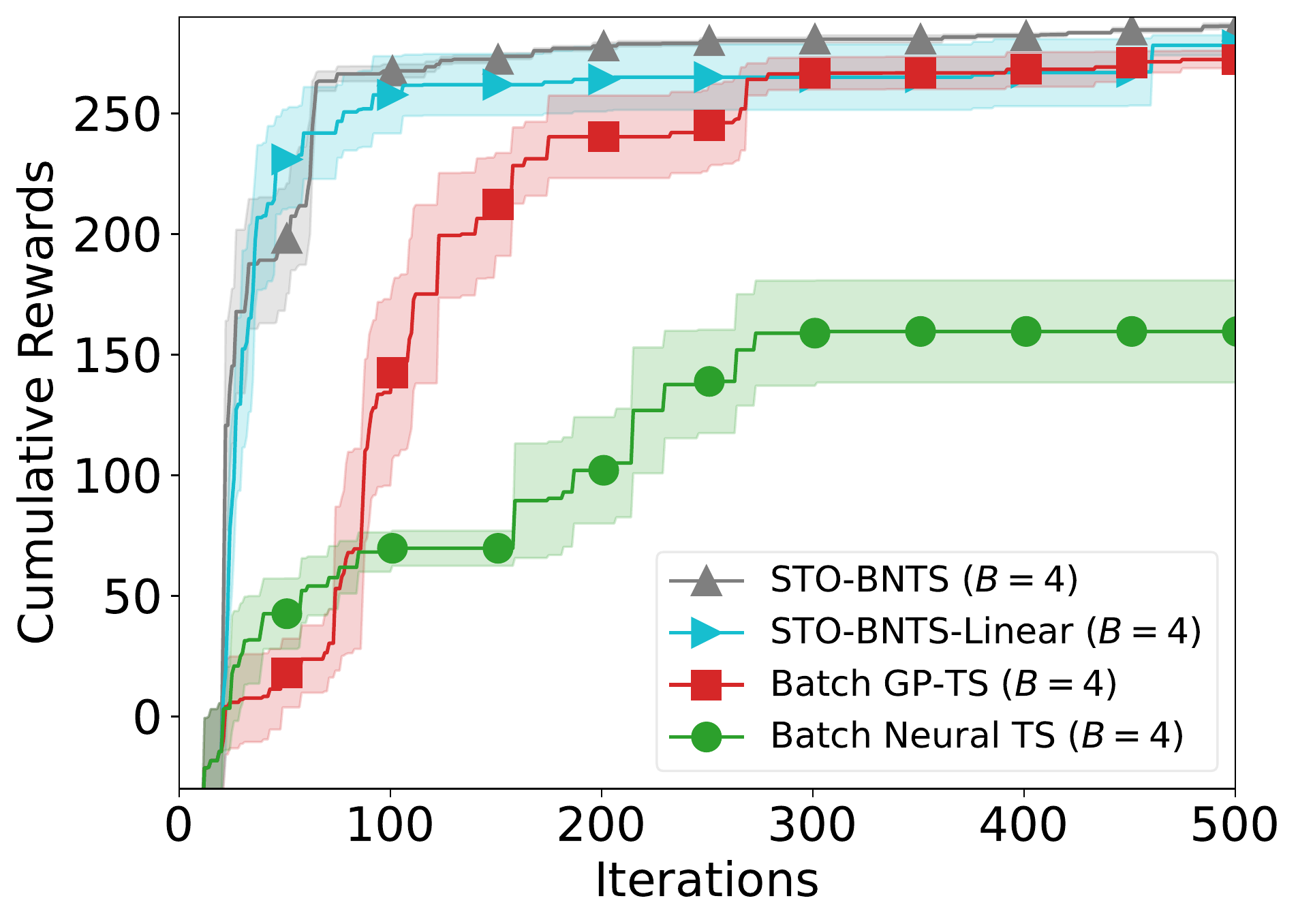} & \hspace{-4mm}
         \includegraphics[width=0.40\linewidth]{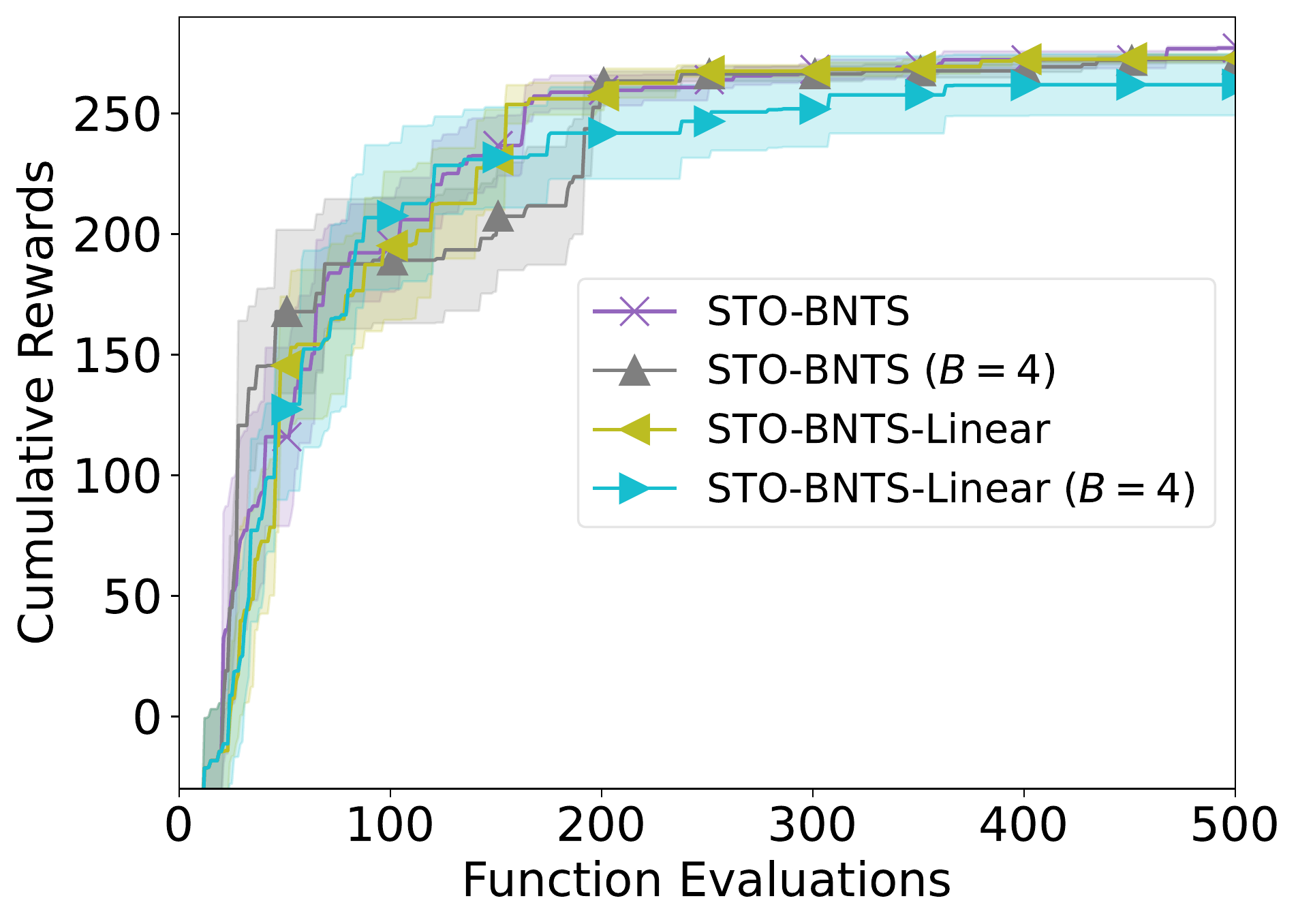}\\
         {(a)} & {(b)}
     \end{tabular}
     \caption{
(a) Comparison of different algorithms with the same batch size of $B=4$, including batch versions of our STO-BNTS and STO-BNTS-Linear, as well as batch GP-TS and batch Neural TS.
(b) An alternative visualization of the performance of our algorithms with batch evaluations, using the 12-D Lunar-Lander experiment (Fig.~\ref{fig:exp:rl}a). The horizontal axis here is the number of function evaluations, in contrast to iterations in Fig.~\ref{fig:exp:rl}a.
}
     \label{fig:app:exp:lunar:compare:batch:methods:and:flatten:batch}
\end{figure}

\subsection{Optimization over Images}
\label{app:sec:exp:images}
In this experiment, to construct the score function (Fig.~\ref{fig:exp:rl}d), we firstly use the training set of the MNIST dataset (consisting of $60,000$ images) to train a CNN, and then use the trained CNN to predict the class probabilities for the $10$ different classes using the testing set of $10,000$ images.
Next, we use the predicted probability of class $0$ for the $10,000$ testing images as the score function.
As a result of our construction of the score function, similar images in general have similar score values since they share similar representations from the CNN, and images of $0$ overall have much larger scores than images from the other classes.
This can simulate the real-world scenario of image recommender system, in which the user may prefer a certain type of images and hence give higher ratings to them.

For all three CNN-based methods in Fig.~\ref{fig:exp:rl}d, we use a CNN with one convolutional layer (with convolutional kernels size of $3$), followed by a max pooling layer (with a pooling size of $3$), and then followed by a fully connected layer. We have used the ReLU activation function. The width of both the convolutional and fully connected layers are $m=64$.
Fig.~\ref{fig:app:exp:images} plots the results for STO-BNTS-Linear in this experiment, which shows that its performance can be dramatically improved if we increase the width $m$ of the NN surrogate model (Sec.~\ref{sec:exp:images}).

\begin{figure}
     \centering
     \begin{tabular}{c}
         \includegraphics[width=0.4\linewidth]{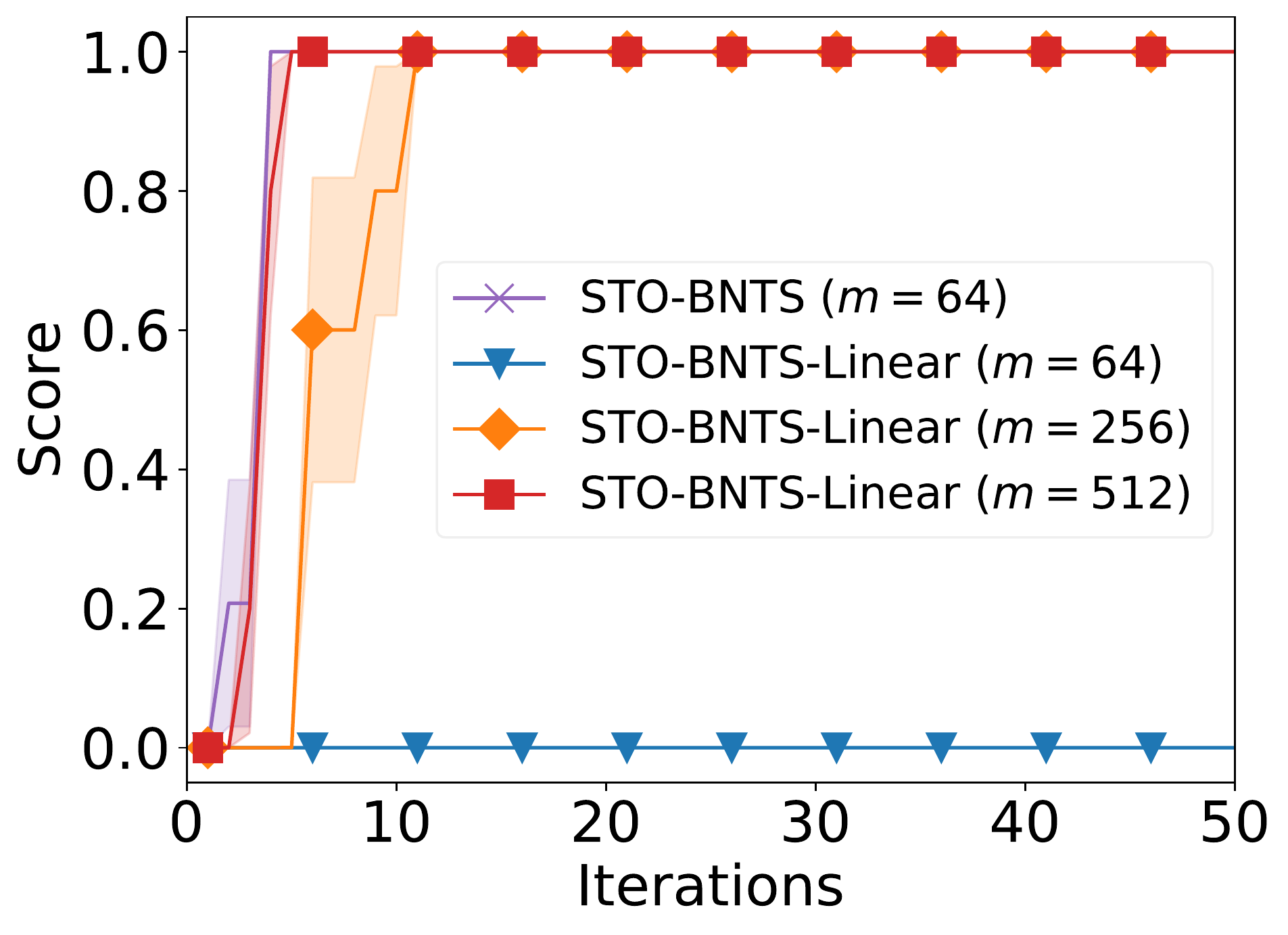}
     \end{tabular}
     \caption{
    Results of STO-BNTS-Linear in the experiments on optimization over images (Sec.~\ref{sec:exp:images}).
     }
     \label{fig:app:exp:images}
\end{figure}



\end{document}